\pgfplotsset{compat=newest}
\theoremstyle{plain}
\newcommand{\algrested}{\texttt{R-ed-UCB}\@\xspace}
\newcommand{\algrestless}{\texttt{R-less-UCB}\@\xspace}
\newcommand{\rless}{\texttt{R-less}\@\xspace}
\newcommand{\red}{\texttt{R-ed}\@\xspace}
\definecolor{color12}{RGB}{0, 153, 136}
\definecolor{color11}{RGB}{238, 119, 51}
\definecolor{black}{rgb}{0, 0, 0}
\newcommand{\hl}[1]{{\color{black}#1}}
\icmltitlerunning{Stochastic Rising Bandits}
\begin{document}

\twocolumn[
\icmltitle{Stochastic Rising Bandits}

% It is OKAY to include author information, even for blind
% submissions: the style file will automatically remove it for you
% unless you've provided the [accepted] option to the icml2022
% package.

% List of affiliations: The first argument should be a (short)
% identifier you will use later to specify author affiliations
% Academic affiliations should list Department, University, City, Region, Country
% Industry affiliations should list Company, City, Region, Country

% You can specify symbols, otherwise they are numbered in order.
% Ideally, you should not use this facility. Affiliations will be numbered
% in order of appearance and this is the preferred way.
%\icmlsetsymbol{equal}{*}

\begin{icmlauthorlist}
\icmlauthor{Alberto Maria Metelli}{polimi}
\icmlauthor{Francesco Trovò}{polimi}
\icmlauthor{Matteo Pirola}{polimi}
\icmlauthor{Marcello Restelli}{polimi}
\end{icmlauthorlist}

\icmlaffiliation{polimi}{Dipartimento di Elettronica, Informazione e Bioingegneria, Politecnico di Milano, Milan, Italy}

\icmlcorrespondingauthor{Alberto Maria Metelli}{\href{mailto:albertomaria.metelli@polimi.it}{\texttt{albertomaria.metelli@polimi.it}}}

% You may provide any keywords that you
% find helpful for describing your paper; these are used to populate
% the "keywords" metadata in the PDF but will not be shown in the document
\icmlkeywords{Machine Learning, ICML}

\vskip 0.3in
]

% this must go after the closing bracket ] following \twocolumn[ ...

% This command actually creates the footnote in the first column
% listing the affiliations and the copyright notice.
% The command takes one argument, which is text to display at the start of the footnote.
% The \icmlEqualContribution command is standard text for equal contribution.
% Remove it (just {}) if you do not need this facility.

%\printAffiliationsAndNotice{}  % leave blank if no need to mention equal contribution
\printAffiliationsAndNotice{} % otherwise use the standard text.

\begin{abstract}
This paper is in the field of stochastic Multi-Armed Bandits (MABs), \ie those sequential selection techniques able to learn online using only the feedback given by the chosen option (a.k.a.~arm). We study a particular case of the rested and restless bandits in which the arms' expected payoff is monotonically non-decreasing. This characteristic allows designing specifically crafted algorithms that exploit the regularity of the payoffs to provide tight regret bounds. We design an algorithm for the rested case (\algrested) and one for the restless case (\algrestless), providing a regret bound depending on the properties of the instance and, under certain circumstances, of $\widetilde{\mathcal{O}}(T^{\frac{2}{3}})$. We empirically compare our algorithms with state-of-the-art methods for non-stationary MABs over several synthetically generated tasks and an online model selection problem for a real-world dataset.
Finally, using synthetic and real-world data, we illustrate the effectiveness of the proposed approaches compared with state-of-the-art algorithms for the non-stationary bandits.
\end{abstract}

\section{Introduction}
The classical stochastic MAB framework~\citep{lattimore2020bandit} has been successfully applied to a number of applications, such as advertising, recommendation, and networking.
MABs model the scenario in which a learner sequentially selects (a.k.a.~pulls) an option (a.k.a.~arm) in a finite set, and receives a feedback (a.k.a.~reward) corresponding to the chosen option.
The goal of online learning algorithms is to guarantee the \emph{no-regret} property, meaning that the loss due to not knowing the best arm is increasing sublinearly with the number of pulls.
One of the assumptions that allows designing no-regret algorithms consists in requiring that the payoff (a.k.a.~expected reward) provided by the available options is \emph{stationary}, \ie rewards come from a fixed distribution.

However, the arms' payoff may change over time due to intrinsic modifications of the arms or the environment.
A no-regret approach is offered by the \emph{adversarial} algorithms, in which no assumption on the nature of the reward is required.
It has been shown that, in this setting, it is possible to design effective algorithms, \eg \texttt{EXP3}~\citep{auer1995gambling}. However, in practice, their performance is unsatisfactory because the \emph{non-stationarity} of real-world cases is far from being adversarial. Instead, non-stationarity is explicitly accounted for by a surge of methods that consider either abrupt changes~\citep[\eg][]{garivier2011upper}, smoothly changing environments~\citep[\eg][]{trovo2020sliding} or bounded reward variation~\citep[\eg][]{besbes2014stochastic}.
%{\color{red}
%An alternative approach to handle changes in the expected arms' payoff consists in resorting to either passive \citep[\eg][]{garivier2011upper,trovo2020sliding} or active \citep[\eg][]{liu2018change,besson2019efficient,cao2019nearly} non-stationary MAB approaches.
%}

While in non-stationary MABs the arms' payoff changes \emph{naturally} over time, a different setting arises when the payoff changes as an effect of \emph{pulling} the arm. This is the case of \emph{rotting} bandits~\citep{levine2017rotting, seznec2019rotting}, in which the payoff of the arms are monotonically non-increasing over the pulls, modeling degradation phenomena. Knowing the monotonicity property allows deriving more specialized algorithms, exploiting the process characteristics and further decreasing the regret w.r.t.~unrestricted cases. Notably, the symmetric problem of monotonically non-decreasing payoffs cannot be addressed with the same approaches. Indeed, it was shown that it represents a significantly more complex problem, even for deterministic arms~\citep{heidari2016tight}. In this non-decreasing setting, a common assumption is the concavity of the payoff function that defines the \emph{rising} bandits setting~\citep{li2020efficient}.

The goal of this paper is to study the \emph{stochastic} MAB problem when the arms' payoff is monotonically non-decreasing. This setting arises in several real-world sequential selection problems. For instance, suppose we have to choose among a set of optimization algorithms to maximize an unknown stochastic concave function. In this setting, we expect that all the algorithms progressively \emph{increase} (on average) the function value and eventually converge to an optimal value, possibly with different speeds. Therefore, we wonder which candidate algorithm to assign the available resources (\eg computational power or samples) to identify the one that converges faster to the optimum. This \emph{online model selection} process can be modeled as a \emph{rested} MAB~\citep{tekin2012online}, like the rotting bandits~\citep{levine2017rotting}, but with non-decreasing payoffs. Indeed, each optimization algorithm (arms) and the function value does not evolve if we do not select (pull) it.
Another example that shows a non-decreasing expected reward is the selection of athletes for competitions. Athletes train in parallel and increase (on average) their performance. However, if participation in competitions is allowed to one athlete only, the trainer should select the one who has achieved the best performance so far. This problem is akin to the \emph{restless} case~\citep{tekin2012online}, like non-stationary bandits~\citep{besbes2014stochastic}, but with the additional assumption that payoffs are non-decreasing. Indeed, the athletes (arms) are evolving even if they are not selected (pulled) to compete.

\textbf{Original Contribution}~~In this paper, we study the stochastic rising bandits, \ie stochastic bandits in which the payoffs are monotonically non-decreasing and concave, in both restless and rested formulations. More specifically:
\begin{itemize}[noitemsep, topsep=0pt, leftmargin=*]
	\item we show that the rested bandit with non-decreasing payoffs is \emph{non-learnable}, \ie the loss due to learning is linear with the number of pulls, unless additional assumptions on the payoff functions are enforced (\eg concavity);
	\item we design \algrested and \algrestless, optimistic algorithms for the rising rested and restless bandits;
	\item we show that \algrested and \algrestless suffer an expected regret that depends on the payoff function profile and, under some conditions, of order $\widetilde{\mathcal{O}}(T^{\frac{2}{3}})$;\footnote{With $\widetilde{\mathcal{O}}(\cdot)$ we disregard logarithmic terms in the order.}
	\item we illustrate, using synthetic and real-world
	data, the effectiveness of our approaches, compared with state-of-the-art algorithms for the non-stationary (restless) bandits.
\end{itemize}

\section{Related Works}
\textbf{Restless and Rested Bandits}~~The \emph{rested} and \emph{restless} bandit settings have been introduced by~\citet{tekin2012online} and further developed by~\citep{ortner2012regret,russac2019weighted} in the restless version and by~\citep{mintz2020nonstationary,pike2019recovering} in the rested one. Originally the evolution of the payoff was modeled via a suitable process, \eg a Markov chain with finite state space or a linear regression process. For instance,~\citet{NEURIPS2020_89ae0fe2} proposes an optimistic approach based on the estimation of the transition kernel of the underlying chain. More recently, the terms rested and restless have been employed to denote arms whose payoff changes as time passes, for restless ones, or whenever being pulled, for rested ones~\citep{seznec2019rotting,seznec2020asingle}. That is the setting we target in this work.

\textbf{Non-Stationary Bandits}~~The restless bandits, without a fixed temporal reward evolution, are usually addressed via non-stationary MAB approaches, that include both passive~\citep[\eg][]{garivier2011upper,besbes2014stochastic,auer2019adaptively,trovo2020sliding}
and active~\citep[\eg][]{liu2018change,besson2019efficient,cao2019nearly} methods. The former algorithms base their selection criterion on the most recent feedbacks, while the latter actively try to detect if a change in the arms' rewards occurred and use only data gathered after the last change. \citet{garivier2011upper} employ a discounted reward approach (\texttt{D-UCB}) or an adaptive sliding window (\texttt{SW-UCB}), proving a $\widetilde{\mathcal{O}}(\sqrt{T})$ regret when the number of abrupt changes is known. Similar results have been obtained by~\cite{auer2019adaptively} without knowing the number of changes, at the price of resorting to the doubling trick. \cite{besbes2014stochastic} provides an algorithm, namely \texttt{RExp3}, a modification \texttt{EXP3}, originally designed for adversarial MABs, to give a regret bound of $\mathcal{O}(T^{\frac{2}{3}})$ under the assumption that the total variation $V_T$ of the arms' expected reward is known. The knowledge of $V_T$ has been removed by~\citet{chen2019anew} using the doubling trick. In~\citet{trovo2020sliding}, an approach in which the combined use of a sliding window on a Thompson Sampling-like algorithm provides theoretical guarantees both on abruptly and smoothly changing environments. Nonetheless, in our setting, their result might lead to linear regret for specific instances. Notably, none of the above explicitly use assumptions on the monotonicity of the payoff over time.

\textbf{Rising Bandits}~~The \emph{rising} bandit problem has been tackled in its deterministic version by~\citep{heidari2016tight,li2020efficient}. In~\citet{heidari2016tight}, the authors design an online algorithm to minimize the regret of selecting an increasing and concave function among a finite set. This study assumes that the learner receives feedback about the true value of the reward function, \ie no stochasticity is present. In~\citet{li2020efficient}, the authors model the problem of parameter optimization for machine learning models as a rising bandit setting. They propose an online algorithm having good empirical performance, still in the case of deterministic rewards. A case where the reward is increasing in expectation (or equivalently decreasing in loss), but no longer deterministic, is provided by~\citet{cella2021best}.
However, the payoff follows a given parametric form known to the learner, who estimates such parameters in the best-arm identification and regret-minimization frameworks. The need for knowing the parametric form of the payoff makes these approaches hardly applicable for arbitrary increasing functions.
%Finally, the recharging bandits designed by~\cite{kleinberg2018recharging}, assume that the reward of an arm increases depending on the amount of time passed from the last time we selected it. However, this model only assumes a local monotonicity occurring between two consecutive pulls, while the expected reward decrease as it is pulled.

\textbf{Corralling Bandits}~~It is also worth mentioning the \emph{corralling} bandits~\citep{agarwal2017corralling, pacchiano2020model, abbasi2020regret, pacchiano2020regret, arora2021corralling}, a setting in which the goal is to minimize the regret of a process choosing among a finite set of bandit algorithms. This setting, close to online model selection, is characterized by particular assumptions. Indeed, each arm corresponds to a learning algorithm, operating on a bandit, endowed with a (possibly known) regret bound, sometimes requiring additional conditions (\eg stability).

\section{Problem Setting}\label{sec:problemSetting}
A $K$-armed MAB~\citep{lattimore2020bandit} is defined as a vector of probability distributions $\bm{\nu} = (\nu_i)_{i \in [K]}$, where $\nu_i : \Nat^2 \rightarrow \Delta(\Reals)$ depends on a pair of parameters $(t,n) \in \Nat^2$ for every $i \in [K]$, where $[K] \coloneq \{1, \ldots, K\}$. Let $T \in \Nat$ be the optimization horizon, at each round $t \in [T]$, the agent selects an arm $I_t \in [K]$ and observes a reward $R_{t} \sim \nu_{I_{t}}(t,N_{I_t,t})$, where $N_{i,t} = \sum_{l=1}^{t} \Ind\{I_l=i\}$ is the number of times arm $i\in [K]$ was pulled up to round $t$. Thus, the reward depends, in general, on the current round $t$ and on the number of pulls $N_{I_t,t} = N_{I_t,t-1}+1$ of arm $I_t$ up to $t$.
For every arm $i \in [K]$, we define its payoff $\mu_i : \Nat^2 \rightarrow \Reals$ as the expectation of the reward, \ie $\mu_i(t,n) = \E_{R \sim \nu_{i}(t,n)}[R]$ and denote the vector of payoffs as $\bm{\mu} = (\mu_i)_{i \in [K]}$. We assume that the payoffs are bounded in $[0,1]$, and that the rewards are $\sigma^2$-subgaussian, \ie $\E_{R \sim \nu_{i}(t,n)}[e^{\lambda  (R-\mu_i(t,n))}] \le e^{\frac{\sigma\lambda^2}{2}}$, for every $\lambda \in \Reals$.

%
%\begin{ass}[Subgaussian]\label{ass:subgauss}
%	For every $(t,n) \in \Nat^2$ and $i \in [K]$, the sample $R\sim \nu_{i}(t,n)$ is a $\sigma^2$-subgaussian random variable, \ie for every $\lambda \in \Reals$ we have $\E_{R \sim \nu_{i}(t,n)}[e^{\lambda  (R-\mu_i(t,n))}] \le e^{\frac{\sigma\lambda^2}{2}}$.
%\end{ass}

\textbf{Rested and Restless Arms}~~We revise the definition of \emph{rested} and \emph{restless} arms~\citep{tekin2012online}.\footnote{We refer to the definition of~\citep{levine2017rotting,seznec2020asingle} and not to the one of~\citep{tekin2012online} that assumes an underlying Markov chain governing the arms' distributions.}
\begin{defi}[Rested and Restless Arms]
Let $\bm{\nu}$ be a MAB and let $i \in [K]$ be an arm, we say that:
\begin{itemize}[noitemsep, topsep=0pt, leftmargin=*]
\item $i$ is a \emph{rested} arm if, for every round $t \in [T]$ and number of pulls $n \in \Nat$, we have $\mu_i(t,n)  = \mu_i(n)$;
	\item $i$ is a \emph{restless} arm if, for every round $t \in [T]$ and number of pulls $n \in \Nat$, we have $\mu_i(t,n) = \mu_i(t)$.
\end{itemize}
A $K$-armed bandit is rested (resp.~restless) if all of its arms are rested (resp.~restless).
\end{defi}
Thus, the payoff of a rested arm changes when being pulled and, therefore, it models phenomena that evolve as a consequence of the agent intervention. Instead, a restless arm is in all regards a non-stationary arm~\citep{besbes2014stochastic}, and it is suitable for modeling a natural phenomenon that evolves for time passing, independently of the agent intervention.

\textbf{Rising Bandits}~~We revise the \emph{rising} bandits notion, \ie MABs with  payoffs \emph{non-decreasing} and \emph{concave} as a function of $(t,n)$~\citep{heidari2016tight}.\footnote{Deterministic bandits with non-decreasing payoffs were introduced in~\citep{heidari2016tight} with the term \emph{improving}. In~\citep{li2020efficient}, the term \emph{rising} was used to denote the improving bandits with concave payoffs (concavity was already employed by~\citet{heidari2016tight}).}
\begin{ass}[Non-Decreasing Payoff]\label{ass:incr}
Let $\bm{\nu}$ be a MAB, for every arm $i\in [K]$, number of pulls $n \in \Nat$, and round $t \in [T]$, functions $\mu_i(\cdot,n)$ and $\mu_i(t,\cdot)$ are non-decreasing. In particular, we define the \emph{increments}:%\footnote{For $n = 0$ (resp. $t = 0$), we conventionally set $\gamma_i(0) = \mu_i(1)$.}
\begin{center}
	\begin{tabular}{ll}
	Rested arm: & $\gamma_i(n) \coloneqq \mu_i(n+1) - \mu_i(n) \ge 0$;\\
		Restless arm: & $\gamma_i(t) \coloneqq \mu_i(t+1) - \mu_i(t) \ge 0$.
	\end{tabular}
\end{center}
\end{ass}
From an economic perspective, $\gamma_i(\cdot)$ represents the \emph{increase of total return} (or payoff) we obtain by adding a factor of production, \ie pulling the arm (rested) or letting time evolve for a unit (restless).
In the next sections, we analyze how the following assumption defines a remarkable class of bandits with non-decreasing payoffs~\citep{heidari2016tight}.

\begin{ass}[Concave Payoff]\label{ass:decrDeriv}
Let $\bm{\nu}$ be a MAB, for every arm $i\in [K]$, number of pulls $n \in \Nat$, and round $t \in [T]$, functions $\mu_i(\cdot,n)$ and $\mu_i(t,\cdot)$ are concave, i.e.:
\begin{center}
\begin{tabular}{ll}
	Rested arm: & $\gamma_i(n+1) - \gamma_i(n) \le 0$;\\
	Restless arm: & $\gamma_i(t+1) - \gamma_i(t) \le 0$.
\end{tabular}
\end{center}
\end{ass}

As pointed out by~\citet{heidari2016tight}, the concavity assumption corresponds, in economics, to the \emph{decrease of marginal returns} that emerges when adding a factor of production, \ie pulling the arm (rested) or letting time evolve for one unit (restless).

Formally, we define \emph{rising} a stochastic MAB in which both Assumption~\ref{ass:incr} and Assumption~\ref{ass:decrDeriv} hold.

\textbf{Learning Problem}~~Let $t \in [T]$ be a round, we denote with $\Hs_t = (I_l,R_l)_{l=1}^t$ the \emph{history} of observations up to $t$. A (non-stationary) deterministic policy is a function $\pi : \Hs_{t-1} \mapsto I_t$ mapping a history to an arm, that is abbreviated as $\pi(t) \coloneqq \pi(\Hs_{t-1})$. The performance of a policy $\pi$ in a MAB with payoffs $\bm{\mu}$ is the \emph{expected cumulative reward} collected over the $T$ rounds, formally:
\begin{align*}
J_{\bm{\mu}}({\pi},T) \coloneq  \E \bigg[\sum_{t \in [T]} \mu_{I_t} \left(t, N_{I_t,t}\right) \bigg],% \;\;\text{ with }\;\; I_t = \pi(t), tolto perchè lo diciamo 3 righe prima
\end{align*}
and the expectation is computed over the histories.
A policy ${\pi}^*_{\bm{\mu},T}$ is \emph{optimal} if it maximizes the expected cumulative reward:	${\pi}^*_{\bm{\mu},T} \in \argmax_{\pi} \{ J_{\bm{\mu}}({\pi},T) \}$. Denoting with $J_{\bm{\mu}}^*\left(T\right) \coloneq J_{\bm{\mu}}({\pi}^*_{\bm{\mu},T},T)$ the expected cumulative reward of an optimal policy, the suboptimal policies $\pi$ are evaluated via the \emph{expected cumulative regret}:
\begin{align}\label{eq:regret}
	R_{\bm{\mu}}(\pi,T) \coloneq J_{\bm{\mu}}^*\left(T\right) - J_{\bm{\mu}}\left(\pi,T\right).
\end{align}

\textbf{Problem Characterization}~~
To characterize the problem instance, we introduce the following quantity, namely the \emph{cumulative increment}, defined for every $M \in [T]$ and $q \in [0,1]$ as:\footnote{The definition of cumulative increment was incorrect in the conference version of the paper \citep{MetelliTPR22}.}
\begin{align}\label{eq:magicQuantity}
	& \Upsilon_{\bm{\mu}}(M,q) \coloneq  \sum_{l=1}^{M-1} \max_{i \in [K]}\{\gamma_i(l)^q\}.
\end{align}
The cumulative increment accounts for how fast the payoffs reach their asymptotic value, \ie become stationary. Intuitively, small values of $\Upsilon_{\bm{\mu}}(M,q)$ lead to simpler problems, as they are closer to stationary bandits. Table~\ref{tab:rates} reports some bounds on $\Upsilon_{\bm{\mu}}(M,q)$ for particular choices of $\gamma_i(l)$ and $q$. When $q=1$, the cumulative increment $\Upsilon_{\bm{\mu}}(T,1)$ corresponds to the total variation $V_T \coloneq \sum_{l=1}^{T-1} \max_{i \in [K]} \left\{\gamma_i(l) \right\}$~\citep{besbes2014stochastic}.

%\begin{lemma}
%	Let $T \in \Nat$ and $q \in [0,1]$. Let us define:
%	\begin{align*}
%		\Upsilon_{\bm{\mu}}^*(T,q) = \max_{\substack{(N_i)_{i \in [K]} : N_i \ge 0 \\ \sum_{i \in [K]} N_i = T}} \sum_{i \in [K]} \sum_{j=1}^{N_i-1} \gamma_i(j)^q.
%	\end{align*}
%\end{lemma}
%}
% but $\Upsilon_{\bm{\mu}}(T,1)$ is smaller than $V_T$ as the maximization over the arms appears outside the summation.
%We also introduce a 

\begin{table}
\renewcommand{\arraystretch}{1.5}
%\normalsize
\caption{$\mathcal{O}$ rates of $\Upsilon_{\bm{\mu}}(M,q)$ in the case $\gamma_i(l) \le f(l)$ for all $i \in [K]$ and $l \in \Nat$ (see also Lemma~\ref{lemma:lemmaBoundsGamma}).}\label{tab:rates}
\vskip 0.15in 
\small
\begin{tabular}{r|C{1cm}|C{1.2cm}|C{1.2cm}|C{1.2cm}|}
\cline{2-5}\renewcommand{\arraystretch}{1}
$f(l)$ &  $e^{-cl}$ & $l^{-c}$ ($cq > 1$) & $l^{-c}$ ($cq = 1$) & $l^{-c}$ ($cq \le 1$)\\\cline{2-5}
  \rule{0pt}{15pt} $\Upsilon_{\bm{\mu}}(M,q)$ & $\displaystyle \frac{e^{-cq}}{cq}$ & $\displaystyle \frac{1}{cq-1}$ &  $\displaystyle \log M$ & $\displaystyle \frac{M^{1-cq}}{1-cq}$ \\\cline{2-5}
\end{tabular}
\end{table}

In the next sections, we devise and analyze learning algorithms for rested (Section~\ref{sec:rresed}) and restless (Section~\ref{sec:rrestless}) rising bandits. We will present \emph{optimistic} algorithms, whose structure is summarized in Algorithm~\ref{alg:alg} and parametrized by an exploration index $B_i(t)$ that will be designed case by case.
%
%\subsection{Oracle Policies}
%Among all possible policies, we introduce two particular policies: the oracle \emph{constant} policy and the oracle \emph{greedy} policy.
%\begin{defi}\label{defi:policies}
%	We define the oracle \emph{constant} policy for all $t \in [T]$:
%	\begin{align*}
%		\pi^c_{\bm{\mu},T}(t) \in \argmax_{i \in [K]} \left\{ \sum_{l \in [T]} \mu_i(l, N_{i,l-1})\right\}.
%	\end{align*}
%	We define the oracle \emph{greedy} policy for all $t \in [T]$:
%	\begin{align*}
%		\pi^g_{\bm{\mu}}(t) \in \argmax_{i \in [K]} \{\mu_i(t, N_{i,t-1})\}.
%	\end{align*}
%\end{defi}
%
%\subsection{Optimistic Algorithm}
%The generic structure of the algorithm we are going to use is a classical \emph{optimistic} upper confidence bound (UCB) algorithm (Algorithm~\ref{alg:alg}), that is parametrized by an upper confidence bound $B_i(t)$ for every $i \in [K]$ and $t \in [T]$. In the remainder of the paper, we will discuss how to construct the UCBs $B_i(t)$ for the different settings we consider.
%

\section{Stochastic Rising Rested Bandits}\label{sec:rresed}
In this section, we consider the \emph{\texttt{R}ising rest\texttt{ed}} bandits (\red) setting in which the arms' expected payoff increases only when it is pulled, \ie $\mu_i(t,N_{i,t}) \equiv \mu_i(N_{i,t})$.\footnote{We are employing the original definition of rested arms of~\cite{levine2017rotting} in which $\mu_i(n)$ is the payoff of arm $i$ when it is pulled \emph{for the $n$-th time}. }

\textbf{Oracle Policy}~~We recall that the \emph{oracle constant} policy, that always plays at each round $t \in [T]$ the arm that maximizes the sum of the payoffs over the horizon $T$, is optimal for the non-decreasing rested bandits.

\begin{restatable}[\citealp{heidari2016tight}]{thr}{thrRestedOptimal}\label{thr:thrRestedOptimal}
	Let $\pi^c_{\bm{\mu},T}$ be the \emph{oracle constant} policy:
	\begin{align*}
		\pi^c_{\bm{\mu},T}(t) \in \argmax_{i \in [K]} \Bigg\{ \sum_{l \in [T]} \mu_i(l)\Bigg\}, \quad \forall t \in [T].
	\end{align*}
	Then, $\pi^c_{\bm{\mu},T}$ is optimal for the rested non-decreasing bandits (\ie under Assumption~\ref{ass:incr}).
\end{restatable}

%In this section, we consider the rested setting in which the arm payoff increases when it is pulled only, \ie $\mu_i(t,N_{i,t-1}) = \mu_i(N_{i,t-1})$. We start recalling that the oracle constant policy $\pi^{c}_{\bm{\mu},T}$ of Definition~\ref{defi:policies} is optimal for this setting.
The result holds under the non-decreasing property (Assumption~\ref{ass:incr}) only, without requiring concavity (Assumption~\ref{ass:decrDeriv}). However, this policy cannot be used in practice as it requires knowing the full function $\mu_i(\cdot)$ in advance.

\subsection{Non-Learnability}\label{sec:restedNon} 
We now prove a result highlighting the \quotes{hardness} of the non-decreasing rested bandits. We show that with no assumptions on the payoff $\mu_i(n)$ (\eg concavity), it is impossible to devise a no-regret algorithm.
\begin{restatable}[Non-Learnability]{thr}{nonLearnable}\label{thr:nonLearnable}
	There exists a $2$-armed non-decreasing (non-concave) deterministic rested bandit with $\gamma_i(n) \le \gamma_{\max} \le 1$ for all $i \in [K]$ and $n \in \Nat$, such that any learning policy $\pi$ suffers regret:
	\begin{align*}
		R_{\bm{\mu}}(\pi, T) \ge \left\lfloor \frac{\gamma_{\max}}{12} T \right\rfloor. 
	\end{align*}
\end{restatable}
\hl{The intuition behind this result is that, if we enforce no condition on the increment $\gamma_i(n)$ we cannot predict how much the arm payoff will increase in the future. Therefore, we face the dilemma of whether or not to pull an arm that is currently believed to be suboptimal, hoping its payoff will increase. If we decide to pull it and its payoff will not actually increase, or if we decide not to pull it and its payoff will actually increase, becoming optimal, we will suffer linear regret.} Thus, Theorem~\ref{thr:nonLearnable} highlights the importance of the concavity assumption (Assumption~\ref{ass:decrDeriv}), providing an answer to an open question posed in~\citep{heidari2016tight}.

\begin{algorithm}[t]
	%\small
	\begin{algorithmic}
	\STATE \textbf{Input}: $K$, $(B_i)_{i \in [K]}$
	\STATE Initialize $N_{i} \leftarrow 0$ for all $i \in [K]$
	\FOR{$t \in ( 1, \dots, T)$}{
		\STATE Pull $I_t \in \argmax_{i \in [K]} \{B_i(t)\}$
		\STATE Observe $R_t \sim \nu_{I_t}(t, N_{I_t}+1)$
		\STATE Update $B_{I_t}$ and $N_{I_t} \leftarrow N_{I_t} + 1$ 
	}
	\ENDFOR
	\end{algorithmic}
	%Update the estimators
	\caption{\texttt{R-$ \boxvoid$-UCB}  ($ \boxvoid \in \{\text{\texttt{less},\texttt{ed}}\}$ )}\label{alg:alg}
\end{algorithm}

\begin{remark}[About the Concavity Assumption]
While without additional structure, \eg concavity, the \emph{non-decreasing rested} bandits are non-learnable (Theorem~\ref{thr:nonLearnable}), the assumption is not necessary in other related settings. In particular, \emph{non-decreasing restless} bandits are in all regard non-stationary bandits, for which no-regret algorithms exist under different assumptions about the number of change points~\citep{garivier2011upper} or a bounded total variation~\citep{besbes2014stochastic}. Furthermore, for \emph{non-increasing rested} (rotting) bandits~\citep{levine2017rotting}, a bounded payoff decrement between consecutive pulls is sufficient to devise a no-regret algorithm.
% 
%being essential for the non-decreasing rested setting, Assumption~\ref{ass:decrDeriv} is less crucial for both non-decreasing restless  (Section~\ref{sec:rrestless}) and  non-increasing rested (rotting)~\citep{levine2017rotting} settings. Indeed, the former ones, as already noted, are non-stationary bandits, for which no-regret algorithms exist, without requiring further assumptions~\citep{besbes2014stochastic}. Instead, for the latter ones, as proved in~\citet{seznec2020asingle}, a bounded payoff decrement between consecutive pulls is sufficient to devise a no-regret algorithm.
\end{remark}
%
%
%
%
%For this reason, from now on, we will enforce the concavity assumption (Assumption~\ref{ass:decrDeriv}).
%\am{Dire che nel caso restless non serve l'assunzione perche' e' un MAB non stazionario per il quale si riescono a dare garanzie~\citep{besbes2014stochastic}.} 
%%We start with this auxiliary result that allows relating the derivative $\gamma_i(n) $ with the difference in means.

\subsection{Deterministic Setting}\label{sec:restedDet}
To progressively introduce the core ideas, we begin with the case of deterministic arms ($\sigma = 0$). 
We devise an optimistic estimator of $\mu_i(t)$, namely $\overline{\mu}_i^{\text{\red}} (t)$, having observed the exact payoffs $(\mu_i(n))_{n=1}^{N_{i,t-1}}$. Differently from the rotting setting, these payoffs are an underestimation of $\mu_i(t)$. Therefore, we exploit the non-decreasing assumption (Assumption~\ref{ass:incr}) to derive the identity:
%
%
% We build an estimator of $\mu_i(t-1)$, namely $\overline{\mu}_i^{\text{\red}}(t-1)$, having observed the exact payoffs $(\mu_i(n))_{n=0}^{N_{i,t-1}-1}$. We exploit again Assumption~\ref{ass:incr} to derive the following identity:
\begin{equation}\label{eq:estRestedDet}
\begin{aligned}
	\mu_i(t) =  \textcolor{color11}{\underbrace{\mu_i(N_{i,t-1})}_{\text{(most recent payoff)}}} + \textcolor{color12}{\underbrace{\sum_{n=N_{i,t-1}}^{t-1} \gamma_i(n)\textcolor{black}{.}}_{\text{(sum of future increments)}}}
\end{aligned}
\end{equation}
By exploiting the concavity (Assumption~\ref{ass:decrDeriv}), we upper bound the sum of future increments with the last experienced increment $\gamma_i(N_{i,t-1}-1)$ that is projected for the future $t-N_{i,t-1}$ pulls, leading to the following estimator:
\begin{align}\label{eq:estRestedDetEst}
\overline{\mu}_i^{\text{\red}}(t) & \coloneqq \hspace{-0.22cm} \textcolor{color11}{\underbrace{\mu_i(N_{i,t-1})}_{\text{(most recent payoff)}}} \hspace{-0.22cm} + \textcolor{color12}{(t - N_{i,t-1}) \hspace{-0.1cm} \underbrace{ \gamma_i(N_{i,t-1}-1)\textcolor{black}{,}}_{\text{(most recent increment)}}}
\end{align}
if $N_{i,t-1} \ge 2$ else $\overline{\mu}_i^{\text{\red}}(t) \coloneqq +\infty$. Figure~\ref{fig:estConstr} illustrates the construction of the estimator. The optimism of $\overline{\mu}_i^{\text{\red}}$ and a bias bound are proved in Lemma~\ref{lemma:lemmaDetRested}.

\begin{figure}
\centering
	\includegraphics[width=.9\linewidth]{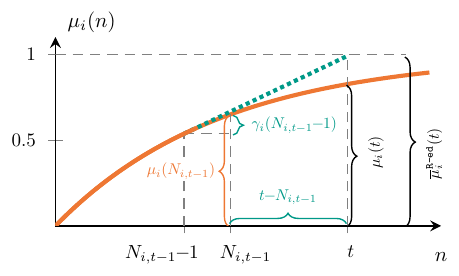}
	%\vspace{-.4cm}
	\caption{Graphical representation of the estimator construction $\overline{\mu}_i^{\text{\red}}(t)$ for the rested deterministic setting.}\label{fig:estConstr}
\end{figure}
%\am{Aggiungere grafico per costruzione stimatore}

%
%For every arm $i \in [K]$, we exploit an idea similar to that used in the restless setting. We start with the most recent observation $\mu_i(N_{i,t-1}-1)$ and we recall that $\mu_i(t-1) =  \mu_i(N_{i,t-1}-1) + \sum_{l=N_{i,t-1}-1}^{t-2} \gamma_i(l)$. By exploiting Assumption~\ref{ass:decrDeriv} we upper bound the terms $\gamma_i(l)$ with $\gamma_i(N_{i,t-1}-2)$. The following result shows that this estimator is optimistic and we provide a bound on the bias.

%The result shows that the estimator $\overline{\mu}_i(t-1)$ is optimistic.

\textbf{Regret Analysis}~~We are now ready to provide the regret analysis of \algrested, \ie Algorithm~\ref{alg:alg} when we employ as exploration index $B_i(t) \equiv \overline{\mu}_i^{\text{\red}}(t)$. 

\begin{restatable}[]{thr}{theRegretDetRested}\label{thr:theRegretDetRested}
Let $T \in \Nat$, then \algrested (Algorithm~\ref{alg:alg}) with $B_i(t) \equiv \overline{\mu}_i^{\text{\red}}(t)$ suffers an expected regret bounded, for every $q \in [0,1]$, as:
\begin{align*}
	R_{\bm{\mu}}(\text{\algrested},T) \le 2K + K T^q  \Upsilon_{\bm{\mu}}\left(\left\lceil \frac{T}{K} \right\rceil,q \right).
\end{align*}
%\hl{In particular, for $q=1/2$, the expected regret is of order $\BigO\left(K \sqrt{T} \Upsilon_{\bm{\mu}} (T/K, 1/2) \right)$.}
\end{restatable}

The regret depends on a parameter $q \in [0,1]$ that can be selected to tighten the bound, whose optimal value depends on $\Upsilon_{\bm{\mu}}(\cdot,q)$, that is a function on the horizon $T$. Some examples, when $\gamma_i(t) \le l^{-c}$ for $c>0$, are reported in Figure~\ref{fig:rate}.

%Differently from the rotting case, in the non-decreasing setting, the previous mean are not opitmistic compare with the current one. Thus, we need to construct an optimistic estimator $\mu_i(t-1)$ having observed $N_{i,t-1}$ pulls of each arm $i \in [K]$. To this purpose we consider the following decomposition:
%\begin{align*}
%	\mu_i& (t-1)  = \mu_i(N_{i,t-1}) + \sum_{l=N_{i,t-1}}^{t-1} \gamma_i(l) \\
%	& \le \min\left\{1,\mu_i(N_{i,t-1}) + (t-1-N_{i,t-1}) \gamma_i(N_{i,t-1})\right\} =: \mu_i^+(t),
%\end{align*}
%where we exploited Assumption~\ref{ass:decrDeriv} for bounding $\gamma_i(l) \le \gamma_i(N_{i,t-1})$ for all $l \ge N_{i,t-1}$. This allows deriving the algorithm:
%\begin{align*}
%	I_{t} \in \argmax_{i \in [K]} \{ \mu_i^+(t) \}.
%\end{align*}
%The algorithm enjoys the following regret guarantee.
%
%
%\am{Comparison with arm elimination}
%
%\am{Lower bound}

\subsection{Stochastic Setting}\label{sec:restedStoc}
Moving to the \red stochastic setting ($\sigma>0$), we cannot directly exploit the estimator in Equation~\eqref{eq:estRestedDetEst}. Indeed, we only observe the sequence of noisy rewards $(R_{t_{i,n}})_{n=1}^{N_{i,t-1}}$, where $t_{i,n} \in [T]$ is the round at which arm $i\in [K]$ was pulled for the $n$-th time. To cope with stochasticity, we need to employ an $h$-wide window made of the $h$ most recent samples, similarly to what has been proposed by~\citet{seznec2020asingle}. The choice of $h$ represents a \emph{bias-variance trade-off} between employing few recent observations (less biased), compared to many past observations (less variance). 
%Therefore, , we average multiple samples in an $h$-wide window to deal with the uncertainty in the observations. 
For $h \in [N_{i,t-1}]$, the resulting estimator $\widehat{\mu}_i^{\text{\red},h}(t)$ is given by:
\begin{align*}
	\widehat{\mu}_i^{\text{\red},h} (t)   \coloneqq \frac{1}{h}  \sum_{l=N_{i,t-1}-h+1}^{N_{i,t-1}} & \Bigg(\textcolor{color11}{\underbrace{R_{t_{i,l}}}_{\text{(estimated payoff)}}} \\
	&  + \textcolor{color12}{(t-l) \underbrace{\frac{R_{t_{i,l}} - R_{t_{i,l-h}}}{h}}_{\text{(estimated increment)}}}\Bigg),
\end{align*}
%\begin{align*}
%	 & \widehat{\mu}_i^{\text{\red},h} (t)   \coloneqq \\
%	& \qquad \frac{1}{h}  \sum_{l=N_{i,t-1}-h+1}^{N_{i,t-1}}  \bigg( \!\!\!\! \textcolor{color11}{\underbrace{R_{t_{i,l}}}_{\text{(estimated payoff)}}} \!\!\!\!  + \textcolor{color12}{(t-l) \underbrace{\frac{R_{t_{i,l}} - R_{t_{i,l-h}}}{h}}_{\text{(estimated increment)}}}\bigg),
%\end{align*}
if $h \le \lfloor N_{i,t-1}/2 \rfloor$, else $\widehat{\mu}_i^{\text{\red},h} (t) \coloneqq +\infty$. The construction of the estimator is shown in Appendix~\ref{apx:prrested} and relies on the idea of averaging several estimators of the form of Equation~\eqref{eq:estRestedDetEst} instanced using as starting points different number of pulls $N_{i,t-1}-l+1$ for $l \in [h]$ and replacing the true payoff with the corresponding reward sample. An efficient way to compute this estimator is reported in Appendix~\ref{apx:efficient}.

\textbf{Regret Analysis}~~By making use of the presented estimator, we build the following optimistic exploration index:
\begin{align*}
	& B_{i}(t) \equiv \widehat{\mu}_i^{\text{\red},h_{i,t}} (t) + \beta_i^{\text{\red},h_{i,t}}(t), \quad \text{where} \\
& \beta^{\text{\red},h_{i,t}}_i(t,\delta_t)\coloneqq \sigma (t-N_{i,t-1}+h_{i,t}-1) \sqrt{ \frac{ 10 \log \frac{1}{\delta_t} }{h_{i,t}^3} },
\end{align*}
and $h_{i,t}$ are arm-and-time-dependent window sizes and $\delta_t$ is a time-dependent confidence parameter. By choosing the window size depending linearly on the number of pulls, we are able to provide the following regret bound.

\begin{restatable}[]{thr}{theRegretRestedStochastic}\label{thr:theRegretRestedStochastic}
Let $T \in \Nat$, then \algrested (Algorithm~\ref{alg:alg}) with $B_i(t) \equiv  \widehat{\mu}_i^{\text{\red},h_{i,t}}(t) + \beta_i^{\text{\red},h_{i,t}}(t)$,  $h_{i,t} = \left\lfloor \epsilon N_{i,t-1}\right\rfloor$ for $\epsilon \in (0,1/2)$ and $\delta_t = t^{-\alpha}$ for $\alpha > 2$, suffers an expected regret bounded, for every $q \in \left[0,  1 \right]$, as:
%\begin{align*}
%	R_{\bm{\mu}}(\text{\algrestless},T) & \le \frac{8\alpha-6}{\alpha-1} K + 4KT^q \Upsilon_{\bm{\mu}}(T,q)\\
%	& \quad + 22 K \left( \alpha \sigma^2  T^2 \log T \right)^{\frac{1}{3}}.
%\end{align*}
\begin{align*}
	& R_{\bm{\mu}}(\text{\algrested},T) \le \BigO \Bigg( \frac{K}{\epsilon} (\sigma  T)^{\frac{2}{3}} \left( \alpha  \log T \right)^{\frac{1}{3}} \\
	& \qquad\qquad+   \frac{K {T}^q }{1-2\epsilon}  \Upsilon_{\bm{\mu}}\left(\left\lceil  (1-2\epsilon) \frac{T}{K} \right\rceil,q \right)  \Bigg).
\end{align*}
\end{restatable}

\begin{figure}
\begin{minipage}{.5\linewidth}
\footnotesize
%\begin{tabular}{|*{6}{c|}*{3}{l|}}
\renewcommand{\arraystretch}{2}
\setlength{\tabcolsep}{5pt}
\begin{tabular}{l|c|c|}
\multicolumn{1}{c}{} & \multicolumn{1}{c}{$c\le 1$} & \multicolumn{1}{c}{$c\ge 1$} \tabularnewline
\cline{2-3}
\textcolor{vibrantRed}{\textbf{Rested}} & $T$  & $K T^{\frac{1}{c}}$ \tabularnewline\cline{2-3}
\textcolor{vibrantBlue}{\textbf{Restless}} & $K^{\frac{1+c}{2}}T^{1-\frac{c}{2}}$ & $KT^{\frac{1}{c+1}}$ \tabularnewline\cline{2-3}
\end{tabular}
\vspace{1cm}
\end{minipage}
\hfill
\begin{minipage}{.4\linewidth}
\begin{tikzpicture}[
  declare function={
    frested(\x)= (\x < 1) * (1) + (\x >= 1) * (1 / (\x));
    frestless(\x)= (\x < 1) * (1- (\x)/2) + (\x >= 1) * (1 / (\x + 1));
  }
]
\begin{groupplot}[
  group style={
    group size=1 by 1,
    vertical sep=0pt,
    horizontal sep=0cm,
    group name=G},
     width=4cm,height=3.5cm,
  axis lines=middle,
  xlabel near ticks,
  every axis x label/.style={at={(current axis.right of origin)},anchor=west, below=1.5mm},
  legend style={at={(1.2,1)},anchor=north,legend cell align=left} 
%  y axis line style={draw=none},
%  ymax=0.5,
%  clip=false,
%  no markers,
%  domain=0:8,
%  samples=100
  ]
  \nextgroupplot[xmin=0, xmax=5,ymax=1.1, domain=-0.1:5, samples=100, xtick = {0,1}, xticklabels = {$0$, $1$}, xlabel={$c$}]
  \addplot[vibrantBlue,  ultra thick] {frestless(x)};%\addlegendentry{Restless}
  \addplot[vibrantRed,  ultra thick] {frested(x)};%\addlegendentry{Rested}
  \addplot[mark=none, gray, dashed] coordinates {(1,0)(1,1)};
  
\end{groupplot}
\end{tikzpicture}
\end{minipage}
%\vspace{-.5cm}
\caption{Regret bounds $\widetilde{\mathcal{O}}$ rates optimized over $q$ for \rless and \red deterministic bandits when $\gamma_i(l) \le l^{-c}$ for $c > 0$.}\label{fig:rate}
\end{figure}

This result deserves some comments. First, compared with the corresponding deterministic \red regret bound (Theorem~\ref{thr:theRegretDetRested}), it reflects a similar dependence of the cumulative increment $\Upsilon_{\bm{\mu}}$, although it now involves the $\epsilon$ parameter defining the window size $h_{i,t} = \lfloor \epsilon N_{i,t-1} \rfloor$.  Second, it includes an additional term of order $\widetilde{\mathcal{O}}(T^{\frac{2}{3}})$ that is due to the noise $\sigma$ presence that increases inversely \wrt the $\epsilon$.\footnote{\hl{In particular, when $\gamma_i(n)$ decreases sufficiently fast (see Table~\ref{tab:rates}), the regret is dominated by the $\widetilde{\mathcal{O}}(T^{\frac{2}{3}})$ component.}} Thus, we visualize a trade-off in the choice of $\epsilon$: larger windows ($\epsilon \approx 1$) are beneficial for the first term, but they enlarge the constant $1/(1-2\epsilon)$ multiplying the second component.
%Second, the effect of the window $h_{i,t} = \lfloor \epsilon N_{i,t-1} \rfloor$ reflects on the dependence of the cumulative increment that $\Upsilon_{\bm{\mu}}$ still preserving the order dependence of the relevant quantities. Indeed, it is easily shown that $\left( \frac{T}{1-\epsilon} \right)^q \Upsilon_{\bm{\mu}}\left( \left\lceil (1-\epsilon) \frac{T}{K} \right\rceil,q \right) \le (2T)^q \Upsilon_{\bm{\mu}}\left(\left\lceil \frac{T}{K} \right\rceil,q \right)$ for $\epsilon \in (0,1/2)$.

\begin{remark}[Comparison with Adversarial Bandits]\label{remark:adversary}
\hl{The \red setting can be mapped to an \emph{adversarial} bandit~\cite{AuerCFS02} with an \emph{adaptive} (\ie non-oblivious) adversary. Indeed, the arm payoff $\mu_i(N_{i,t})$ can be thought to as selected by an adversary who has access to the previous learner choices (\ie the history $\mathcal{H}_{t-1}$), specifically to the number of pulls $N_{i,t}$. However, although adversarial bandit algorithms, such as  \texttt{EXP3}~\citep{AuerCFS02} and \texttt{OSMD}~\citep{AudibertBL14}, suffer $\widetilde{{\BigO}}({\sqrt{T}})$ regret, these results are not comparable with ours. Indeed, while these correspond to guarantees on the \emph{external regret}, the regret definition we employ in Section~\ref{sec:problemSetting} is a notion of \emph{policy regret}~\citep{DekelTA12}.

     }
\end{remark}

\section{Stochastic Rising Restless Bandits}\label{sec:rrestless}
In this section, we consider the \emph{\texttt{R}ising rest\texttt{less}} bandits (\rless) in which the payoff increases at every round regardless the arm is pulled, \ie $\mu_i(t,N_{i,t}) \equiv \mu_i(t)$.

\textbf{Oracle Policy}~~We start recalling that the \emph{oracle greedy} policy, \ie the policy selecting at each round $t \in [T]$ the arm with largest payoff, is optimal for the non-decreasing restless bandit setting.
\begin{restatable}[\citealp{seznec2020asingle}]{thr}{thrRestlessOptimal}
	Let $\pi^g_{\bm{\mu}}$ be the \emph{oracle greedy} policy:
	\begin{align*}
		\pi^g_{\bm{\mu}}(t) \in \argmax_{i \in [K]} \{\mu_i(t)\}, \quad \forall t \in [T].
	\end{align*}
	Then, $\pi^g_{\bm{\mu}}$ is optimal for the restless non-decreasing bandits (\ie under Assumption~\ref{ass:incr}).
\end{restatable}
Notice that $\pi^g_{\bm{\mu}}$ is optimal under the non-decreasing payoff assumption (Assumption~\ref{ass:incr}) only, without requiring the concavity (Assumption~\ref{ass:decrDeriv}). We can now first appreciate an important difference between \emph{rising} and \emph{rotting} bandits. While for the rotting bandits the oracle \emph{greedy} policy is optimal for both the rested and restless settings, for the rising bandits it remains optimal in the restless case only. Indeed, for the rising rested case, as shown in Theorem~\ref{thr:thrRestedOptimal}, the oracle \emph{constant} policy is needed to achieve optimality.
%{\color{blue}Theorem~\ref{thr:thrRestedOptimal} establishes the first important difference between the non-decreasing and non-increasing (a.k.a.~rotting) rested bandits. Indeed, while for the former ones the oracle \emph{constant} policy is optimal; for the latter, the oracle \emph{greedy} policy is needed to achieve optimality.}
%Moreover, it is known that this policy is also optimal for restless and rested \emph{rotting} bandits~\citep{levine2017rotting}. 
%As we shall see in the following, this policy is no longer optimal for the rested non-decreasing bandits. 
%
%In the following, we will design and analyze no-regret learning algorithms for deterministic (Section~\ref{}) and stochastic (Section~\ref{}) \rless bandits. 

\subsection{Deterministic Setting}
We begin with the case of deterministic arms ($\sigma = 0$). 
%We denote with $t_{i,n} \in [T]$ the round at which arm $i$ was pulled for the $n$-th time, for every $i \in [K]$ and $n \in \Nat$.
Similarly to the rested case, we design an optimistic estimator of $\mu_i(t)$, namely $\overline{\mu}_i^{\text{\rless}} (t)$, employing the exact payoffs $(\mu_i(t_{i,n}))_{n=1}^{N_{i,t-1}}$. To this end, we exploit the non-decreasing assumption (Assumption~\ref{ass:incr}) to derive the identity:
\begin{align*}
	\mu_i(t) = \textcolor{color11}{\underbrace{\mu_i(t_{i,N_{i,t-1}})}_{\text{(most recent payoff)}}} + \textcolor{color12}{\underbrace{\sum_{l=t_{i,N_{i,t-1}}}^{t-1} \gamma_i(l)\textcolor{black}{.}}_{\text{(sum of future increments)}}}
\end{align*}
Then, we leverage the concavity (Assumption~\ref{ass:decrDeriv}) to upper bound the sum of future increments with the last experienced increment that will be projected in the future for $t - t_{i,N_{i,t-1}}$ rounds, leading to the estimator:
\begin{equation}\label{eq:estRestlessDet}
\begin{aligned}
& \overline{\mu}_i^{\text{\rless}} (t)  \coloneqq \textcolor{color11}{\underbrace{\mu_i(t_{i,N_{i,t-1}})}_{\text{(most recent payoff)}}} \\
		& \quad + \textcolor{color12}{(t - t_{i,N_{i,t-1}}) \underbrace{\frac{\mu_i(t_{i,N_{i,t-1}}) - \mu_i(t_{i,N_{i,t-1}-1})}{t_{i,N_{i,t-1}} - t_{i,N_{i,t-1}-1}}}_{\text{(most recent increment)}}},
\end{aligned} 
\end{equation}
if $N_{i,t-1} \ge 2$, else $\overline{\mu}_i^{\text{\rless}} (t) \coloneqq +\infty$.
Lemma~\ref{lemma:lemmaDetRestless} shows that $\overline{\mu}_i^{\text{\rless}}$ is optimistic and provides a bias bound.
%\am{Aggiungere immagine per costruzione stimatore}
%
%
%The fundamental ideFor every arm $i \in [K]$, we employ the most recent observation $\mu_i(t_{i,N_{i,t-1}})$ as starting point and recall that $\mu_i(t) = \mu_i(t_{i,N_{i,t-1}}) + \sum_{l=t_{i,N_{i,t-1}}}^{t-1} \gamma_i(l)$. Then, we leverage on Assumption~\ref{ass:decrDeriv} to upper bound the terms $\gamma_i(l)$ with that most recently experienced increment ${(\mu_i(t_{i,N_{i,t-1}}) - \mu_i(t_{i,N_{i,t-1}-1}))}/{(t_{i,N_{i,t-1}} - t_{i,N_{i,t-1}-1})}$ (see Lemma~\ref{lemma:boundDeriv}). The following result proves the optimism of the estimator and provides a bias bound.
%
%and leverage on Assumption~\ref{ass:decrDeriv}  $\gamma_i(t_{i,N_{i,t-2}}) = $ from round $t_{i,N_{i,t-1}}$ up to round $t$.

%The result shows that the estimator $\overline{\mu}_i(t)$ is optimistic.
\textbf{Regret Analysis} 
We now provide the regret analysis of \algrestless that is obtained from Algorithm~\ref{alg:alg}, when setting $B_i(t) \equiv \overline{\mu}_i^{\text{\rless}}(t)$. 
\begin{restatable}[]{thr}{theRegretDetRestless}\label{thr:theRegretDetRestless}
Let $T \in \Nat$, then \algrestless (Algorithm~\ref{alg:alg}) with $B_i(t) \equiv \overline{\mu}_i^{\text{\rless}}(t)$ suffers an expected regret bounded, for every $q \in \left[0,  1 \right]$, as:
\begin{align*}
	R_{\bm{\mu}}(\text{\algrestless},T) \le 2K +K T^\frac{q}{q+1} \Upsilon_{\bm{\mu}}\left( \left\lceil \frac{T}{K} \right\rceil, q \right)^{\frac{1}{q+1}}.
\end{align*}
%\hl{In particular, for $q=1/2$, the expected regret is of order ${\BigO}\left(K T^{1/3} \Upsilon_{\bm{\mu}} (T/K, 1/2)^{2/3} \right)$.}
\end{restatable}

Similarly to Theorem~\ref{thr:theRegretDetRested}, the result depends on the free parameter $q \in [0,1]$, that can be chosen to tighten the bound. It is worth noting that the regret bound of the \rless deterministic case (Theorem~\ref{thr:theRegretDetRestless}) is always smaller than that of the \red deterministic case (Theorem~\ref{thr:theRegretDetRested}). Indeed, ignoring the dependence on $K$, we have $R_{\bm{\mu}}(\text{\algrestless},T)  = \BigO\left(R_{\bm{\mu}}(\text{\algrested},T)^{\frac{1}{q+1}} \right)$. 
%Intuitively, this is in line with the non-learnability result (Theorem~\ref{thr:nonLearnable}), suggesting that the rested case is harder than the restless one.
The following example clarifies the role of $q$ for both the restless and rested case. 

\begin{example}
Suppose that for all $i \in [K]$, we have $\gamma_i(l) \le l^{-c}$ for $c > 0$. The expressions of bounds on $\Upsilon_{\bm{\mu}}(\cdot,q)$ have been shown in Table~\ref{tab:rates}. Different values of $q \in [0,1]$ should be selected to tighten the regret bounds depending on the value of $c$. Figure~\ref{fig:rate} reports the optimized bounds for the deterministic \rless and \red (derivation in Appendix~\ref{apx:example}).
\end{example}

%
% Intuitively, we should module $q$ depending on how the term $\Upsilon_{\bm{\mu}}$ grows as a function of $T$. 
%\am{Confronto con non stationary}

\subsection{Stochastic Setting}
In the stochastic setting ($\sigma > 0$), we have access to noisy versions of $\mu_i$ only, \ie $(R_{t_{i,n}})_{n=1}^{N_{i,t-1}}$. Intuitively, we might be tempted to straightforwardly extend the derivation of the rested case by averaging $h$ estimators like the ones in  Equation~\eqref{eq:estRestlessDet}, instanced with different time instants $t_{i,N_{i,t-1}}$. Unfortunately, this approach is unsuccessful for technical issues since the increment term would include the difference of time instants $t_{i,N_{i,t-1}} - t_{i,N_{i,t-1}-1}$ that, in the stochastic setting, are random variables correlated with the observed rewards $R_{t_{i, n}}$. For this reason, at the price of a larger bias, we employ the same estimator used in the stochastic rested case, defined for $h \in [N_{i,t-1}]$:
\begin{align*}
	\widehat{\mu}^{\text{\rless},h}_i(t)  & \coloneq  \frac{1}{h} \sum_{l=N_{i,t-1}-h+1}^{N_{i,t-1}} \bigg( \textcolor{color11}{\underbrace{R_{t_{i,l}}}_{\text{(estimated payoff)}}} \\
	& \quad\quad\quad + \textcolor{color12}{( t - l)\underbrace{\frac{R_{t_{i,l}} - R_{t_{i,l-h}}}{h}}_{\text{(estimated increment)}}} \bigg),
\end{align*}
if $h_{i,t} \le \lfloor N_{i,t-1}/2 \rfloor$, else $\widehat{\mu}^{\text{\rless},h}_i(t) \coloneq +\infty$. Additional details on the estimator construction is reported in Appendix~\ref{apx:prrestless} together with its analysis. 
\setlength{\textfloatsep}{10pt}
\begin{figure*}
	%\vspace{-.5cm}
	\subfloat{\scalebox{1}{\includegraphics{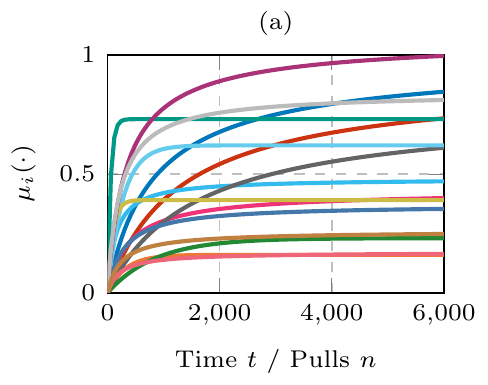} \label{fig:15arms_rewards}	}}
	\subfloat{\scalebox{1}{\includegraphics{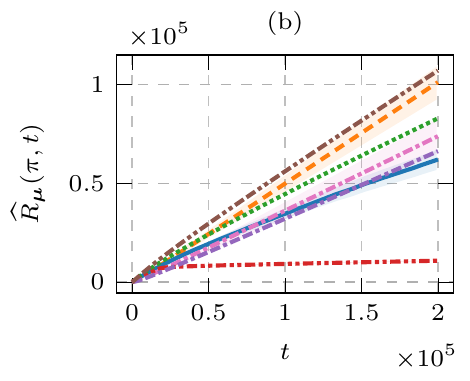} }\label{fig:restless_15arms_regrets}}
	\subfloat{\scalebox{1}{\includegraphics{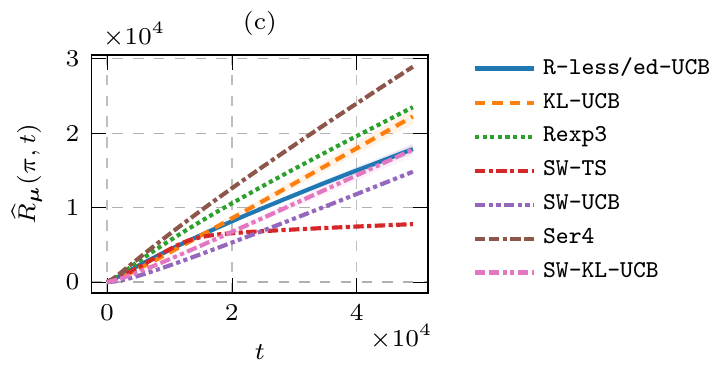}\label{fig:rested_15arms_regrets} }}
	\caption{$15$ arms bandit setting: (a)~first $6000$ rounds/pulls of the payoff functions, (b)~cumulative regret in the \rless scenario, (c)~cumulative regret in the \red scenario ($100$ runs $95$\% c.i.).}
\end{figure*}

\textbf{Regret Analysis}~~We provide the regret analysis of \algrestless when we employ the exploration index analogous to that of the rested case:
\begin{align*}
	& B_{i}(t) \equiv \widehat{\mu}_i^{\text{\rless},h_{i,t}} (t) + \beta_i^{\text{\rless},h_{i,t}}(t), \quad \text{where} \\
& \beta^{\text{\rless},h_{i,t}}_i(t,\delta_t)\coloneqq \sigma (t-N_{i,t-1}+h_{i,t}-1) \sqrt{ \frac{ 10 \log \frac{1}{\delta_t} }{h_{i,t}^3} },
\end{align*}
and $h_{i,t}$ are a arm-and-time-dependent window sizes and $\delta_t$ is a time-dependent confidence. The regret bound is given by the following result.
\begin{restatable}[]{thr}{theRegretRestlessStochastic}\label{thr:theRegretRestlessStochastic}
Let $T \in \Nat$, then \algrestless (Algorithm~\ref{alg:alg}) with $B_i(t) \equiv  \widehat{\mu}_i^{\text{\rless},h_{i,t}}(t) + \beta_i^{\text{\rless},h_{i,t}}(t)$,  $h_{i,t} = \left\lfloor \epsilon N_{i,t-1}\right\rfloor$ for $\epsilon \in (0,1/2)$, and $\delta_t = t^{-\alpha}$ for $\alpha > 2$, suffers an expected regret bounded, for every $q \in \left[ 0,  1 \right]$, as:
%\begin{align*}
%	R_{\bm{\mu}}(\text{\algrestless},T) & \le \frac{8\alpha-6}{\alpha-1} K + 4KT^q \Upsilon_{\bm{\mu}}(T,q)\\
%	& \quad + 22 K \left( \alpha \sigma^2  T^2 \log T \right)^{\frac{1}{3}}.
%\end{align*}
\begin{align*}
R_{\bm{\mu}}& (\text{\algrestless},T)  \le \BigO \Bigg(\frac{K}{\epsilon}(\sigma T)^{\frac{2}{3}} \left( \alpha \log T \right)^{\frac{1}{3}} \\
&  \qquad + \frac{KT^{\frac{2q}{1+q}} (\log T)^{\frac{q}{1+q}}}{\epsilon(1-2\epsilon)}    \Upsilon_{\bm{\mu}}\left( \left\lceil (1-2\epsilon) \frac{T}{K} \right\rceil,  q \right)^{\frac{1}{1+q}}  \Bigg).
\end{align*}
%\hl{In particular, for $q=1/2$, the expected regret is of order $\widetilde{\BigO}\left(K T^{2/3} + K T^{2/3} \Upsilon_{\bm{\mu}} (T/K, 1/2)^{2/3} \right)$.}
\end{restatable}
Some observations are in order. First, compared to the bound for the rested case in Theorem~\ref{thr:theRegretRestedStochastic}, we note the same dependence of $\widetilde{\mathcal{O}}(T^{\frac{2}{3}})$ due to the noise presence $\sigma$. Concerning the second term, compared with the one of the deterministic case (Theorem~\ref{thr:theRegretDetRestless}), we worsen the dependence on $T$ and an inverse dependence on the $\epsilon$ and $1-2\epsilon$ parameters appear. This is due to the usage of the $h$-wide window instead of the last sample and that, all other things being equal, the estimator employed for the stochastic case, as already discussed, is looser compared to the one for the deterministic case. Finally, our result is not fully comparable with~\citep{besbes2014stochastic} for generic non-stationary bandits with bounded variation because due to the presence of $\Upsilon_{\bm{\mu}}\left(\lceil T/K \rceil,q\right)$. Moreover, we achieve such a bound with no knowledge about $\Upsilon_{\bm{\mu}}$, while the work by~\cite{besbes2014stochastic} requires knowing $V_T$.
%
%It is worth comparing the result of Theorem~\ref{thr:theRegretRestlessStochastic} with . The two expressions are not fully comparable. On one side our regret bound displays a worse dependence on $K$, but the variation term 
%
%Comparing this result with the 
%\am{commento con non stationary}
%We wonder whether this depends on limits of our analysis or intrinsic to the algorithm.

%
%\begin{align*}
%	U_{i}(t) = \min \left\{1, \mu_i(N_{i,t-1})+ (t-1-N_{i,t-1}) \gamma_i(N_{i,t-1}) \right\}
%\end{align*}
%
%\begin{lemma}
%	$U_{i}(t) \ge \mu_i(t-1)$
%\end{lemma}
%
%
%\section{Stochastic Case}
%\begin{lemma}
%For any $l$
%	\begin{align*}
%		 \gamma_i(n) \le \frac{\mu_i(n+1) - \mu_i(n-1-l)}{l}.
%	\end{align*}
%\end{lemma}

%{\color{violet}
%Compared with the regret bound for the \rless setting, we observe that the bound has the same order dependence on both $T$ and $\Upsilon_{\bm{\mu}}(T)$. 
%Thus, this result suggests that learning in stochastic rising restless and rested bandits have similar complexity in terms of regret.
%}

\section{Numerical Simulations} \label{sec:experiments}

\setlength{\textfloatsep}{10pt}
We numerically tested \algrestless{} and \algrested{} w.r.t.~state-of-the-art algorithms for non-stationary MABs in the \emph{restless} and \emph{rested} settings, respectively.\footnote{Details of the experimental setting, and additional results are provided in Appendix~\ref{apx:experiments}. The code to reproduce the experiments is available at \url{https://github.com/albertometelli/stochastic-rising-bandits}.}

%\ft{Ricordiamoci di rendere disponibile il codice. Direi che per questa iterazione uno zippone va più che bene}

\textbf{Algorithms}~~We consider the following baseline algorithms: \texttt{Rexp3}~\citep{besbes2014stochastic}, a non-stationary MAB algorithm based on  variation budget, \texttt{KL-UCB}~\citep{garivier2011kl}, one of the most effective stationary MAB algorithms, \texttt{Ser4}~\citep{allesiardo2017non}, which considers best arm switches during the process, and sliding-window algorithms such as \texttt{SW-UCB}~\citep{garivier2011upper}, \texttt{SW-KL-UCB}~\citep{combes2014unimodal}, and \texttt{SW-TS}~\citep{trovo2020sliding} that are generally able to deal with non-stationary restless settings.
The parameters for all the baseline algorithms have been set as recommended in the corresponding papers (see also Appendix~\ref{apx:experiments}). For our algorithms, the window is set as $h_{i,t} = \lfloor \epsilon N_{i,t-1} \rfloor$ (as prescribed by Theorems~\ref{thr:theRegretRestedStochastic} and~\ref{thr:theRegretRestlessStochastic}). We remark that while the baseline algorithms are suited for the restless case, in the rested case, no algorithm has been designed to cope with the stochastic rising setting, provided that no knowledge on the payoff function is available.
We compare the algorithms in terms of empirical cumulative regret $\widehat{R}_{\bm{\mu}}(\pi,t)$, which is the empirical counterpart of the expected cumulative regret $R_{\bm{\mu}}(\pi,t)$ at round $t$ averaged over multiple independent runs.

%% 2-ARMS SYNTHETIC FIGURES + IMDB REGRET
\begin{figure*}
%\vspace{-.5cm}
\begin{minipage}{.57\textwidth}
\subfloat{\scalebox{1}{\includegraphics{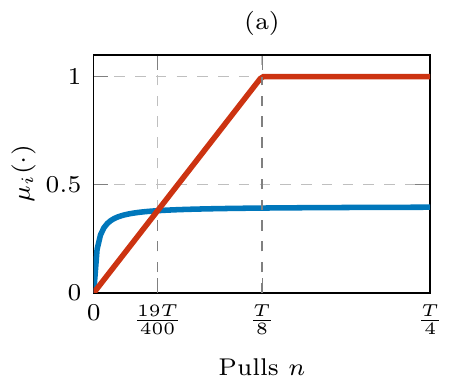}\label{fig:2arms_rewards} 	}}
	\subfloat{\scalebox{1}{\includegraphics{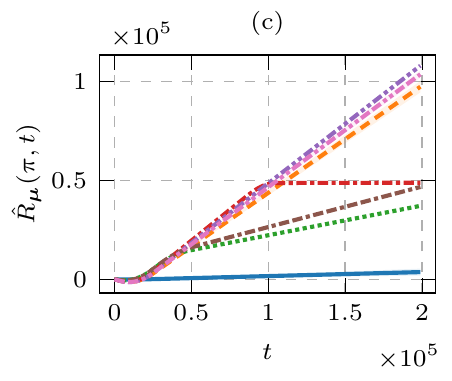} }\label{fig:rested_2arms_regrets}}
	\captionof{figure}{$2$ arms \red bandit setting: (a)~payoff functions, (b)~cumulative \\regret ($100$ runs, $95$\% c.i.).}
\end{minipage}%
\hfill
\begin{minipage}{.41\textwidth}
\vspace{.25cm}
\includegraphics{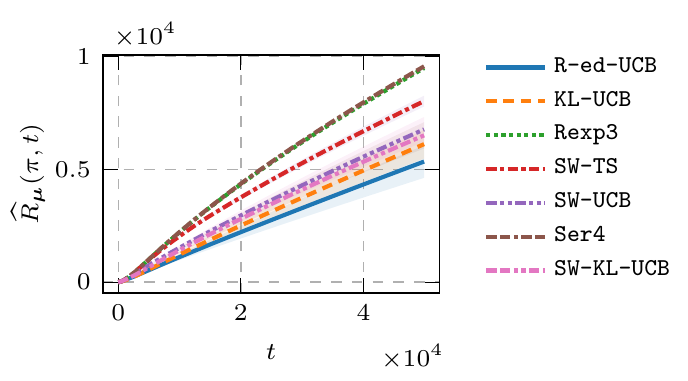} 
\captionof{figure}{Cumulative regret in the online model selection on IMDB dataset ($30$ runs, $95$\% c.i.).}\label{fig:imdb}
\end{minipage}
\end{figure*}
%
%\begin{figure}[t!]
%	\subfloat{\scalebox{0.93}{\includegraphics{experiments/}\label{fig:2arms_rewards} 	}}\\
%	\subfloat{\scalebox{0.93}{\includegraphics{experiments/rested/2_arms/regrets.pdf} }\label{fig:rested_2arms_regrets}}
%	\caption{$2$ arms \red bandit setting: (a) payoff functions, (b) cumulative regret ($500$ runs $95$\% c.i.).}
%\end{figure}

\subsection{Restless setting} \label{sec:restless}
To evaluate \algrestless{} in the restless setting, we run the aforementioned algorithms on a problem with $K = 15$ arms over a time horizon of $T = 200,000$ rounds, setting $\epsilon = 1/4$. The payoff functions $\mu_i(\cdot)$ are chosen in these families: $F_{\text{exp}} = \left\{ f(t) = c \left( 1 - e^{-at} \right) \right\}$ and $F_\text{poly} = \left\{ f(t) = c \left(1 - b(t + b^{1/\rho})^{-\rho} \right) \right\}$, where $a, c, \rho \in (0,1]$ and $b \in \mathbb{R}_{\ge 0}$ are parameters, whose values have been selected randomly.
By construction all functions $f \in F_\text{exp} \cup F_\text{poly}$ satisfy Assumptions~\ref{ass:incr} and~\ref{ass:decrDeriv}. The functions coming from $F_{\text{exp}} $ (exponential functions) have a sudden increase, while ones from $F_\text{poly}$ (polynomial functions) have a slower growth rate, leading to different cumulative increments $\Upsilon_{\bm{\mu}}$. The stochasticity is realized by adding a Gaussian noise with $\sigma = 0.1$. The generated functions are shown in Figure~\ref{fig:15arms_rewards}.

%At each round $t$ and for each arm $i$, the reward is sampled from $R_i \sim \nu_{I_t}(N_{I_t, t-1}, t)$, where $\nu_{I_t}(\cdot)$ has expected value equal to $f_{I_t}(t)$ or $f_{I_t}(N_{I_t, t})$ in the restless and rested setting, respectively, and is a $\sigma^2$-sub-Gaussian random variable with $\sigma = 0.1$.

The empirical cumulative regret $\widehat{R}_{\bm{\mu}}(\pi,t)$ is provided in Figure~\ref{fig:restless_15arms_regrets}. The results show that \texttt{SW-TS} is the algorithm that achieves the lowest regret at the horizon, even though its performance at the beginning is worse than the other algorithms. As commonly happening in practice, TS-based approaches tend to outperform UCB ones. Indeed, \algrestless{} displays the second-best curve overall and achieves the best performance among the UCB-like algorithms.

\subsection{Rested setting}
We employ the same arms generated for the restless case to evaluate \algrested{} in the rested setting. We plot the empirical cumulative regret in Figure~\ref{fig:rested_15arms_regrets}. \texttt{SW-TS} is confirmed as the best algorithm at the end of the time horizon, although other algorithms (\texttt{SW-UCB} and \texttt{SW-KL-UCB}) suffer less regret at the beginning of learning. \algrested pays the price of the initial exploration, but at the end of the horizon, it manages to achieve the second-best performance. Notice that, besides \algrested, all other baseline algorithms are designed for the restless setting and are not endowed with any guarantee on the regret in the rested scenario.

To highlight this fact, we designed a particular $2$-arms rising rested bandit in which the optimal arm reveals only when pulled a sufficient number of times (linear in $T$). The payoff functions, fulfilling Assumptions~\ref{ass:incr} and~\ref{ass:decrDeriv}, are shown in Figure~\ref{fig:2arms_rewards} and the algorithms' empirical regrets in Figure~\ref{fig:rested_2arms_regrets}. Note that in this setting the expected (instantaneous) regret may be negative for $t < \frac{19T}{400}$, and this is the case for most of the algorithms for $t < 20,000$. While for the first $\approx 20,000$ rounds \algrested{} is on par with the other algorithms, it outperforms all the other policies over a longer run. Note that the regret for \texttt{Rexp3} and \texttt{Ser4} is decreasing the slope for $t > 40,000$, meaning that they are somehow reacting to the change in the reward of the two arms. \texttt{SW-TS} starts reacting even later, at around $t \approx 100,000$. However, they are not prompt to detect such a change in the rewards and, therefore, collect a large regret in the first part of the learning process. The other algorithms suffer a linear regret at the end of the time horizon since they do not employ forgetting mechanisms or because the sliding window should be tuned knowing the characteristics of the expected reward.

\subsection{IMDB dataset (rested)} \label{sec:IMDB}
We investigate the performance of \algrested{} on an \emph{online model selection} task for a real-world dataset. We employ the IMDB dataset, made of $50,000$ reviews of movies (scores from $0$ to $10$). We preprocessed the data as done by~\citet{maas2011learning} to obtain a binary classification problem. Each review $\bm{x}_t$ lies in a $d = 10,000$ dimensional feature space, where each feature is the frequency of the most common English words. Each arm corresponds to a different online optimization algorithm, \ie two of them are Online Logistic Regression algorithms with different learning rate schemes, and the other five are Neural Networks with different topologies. We provide additional information on the arms of the bandit in Appendix~\ref{apx:imdb}.
At each round, a sample $\bm{x}_t$ is randomly selected from the dataset, a reward of $1$ is generated for a correct classification, $0$ otherwise, and, finally, the online update step is performed for the chosen algorithm. 

The empirical regret is plotted in Figure~\ref{fig:imdb}. We can see that \algrested{}, with $\epsilon = 1/32$ outperforms the considered baselines. Compared to the synthetic simulations, the smaller window choice is justified by the fact that we need to take into account that the average learning curves of the classification algorithms are not guaranteed to be non-decreasing nor concave on the single run. However, keeping the window linear in $N_{i,t-1}$ is crucial for the regret guarantees of Theorem~\ref{thr:theRegretRestedStochastic}.

%\map{devo completare}

\section{Discussion and Conclusions}
This paper studied the MAB problem when the payoffs are non-decreasing functions that evolve either when pulling the corresponding arm (rested) or for time passing (restless). We showed that, for the rested case, an assumption on the payoff (\eg concavity) is essential to make the problem learnable. We presented novel algorithms that suitably employ the concavity assumption to build proper estimators for both settings. These algorithms are proven to suffer a regret made of a first instance-independent component of $\widetilde{\mathcal{O}}(T^{\frac{2}{3}})$ and an instance-dependent component involving the cumulative increment function $\Upsilon_{\bm{\mu}}(\cdot,q)$. For the rested setting, ours represent the first no-regret algorithm for the stochastic rising bandits. The experimental evaluation confirmed our theoretical findings showing advantages over state-of-the-art algorithms designed for non-stationary bandits, especially in the rested setting. The natural future research direction consists of studying the complexity of the learning problem in stochastic rising rested and restless bandits, focusing on deriving suitable regret lower bounds. Other future works include investigating the best-arm identification setting and, motivated by the online model selection, analysing the alternative case in which the optimization algorithms associated with the arms act on a shared vector of parameters.

\section*{Acknowledgements}
The authors would like to Emmanuel Esposito, Saeed Masoudian, and Alessandro Montenegro for their contribution to the fix of the minor issues present in the manuscript.

\bibliography{biblio}
\bibliographystyle{icml2022}

\newpage
\appendix

\setlength{\abovedisplayskip}{12pt}
\setlength{\belowdisplayskip}{12pt}
\setlength{\textfloatsep}{20pt}

\onecolumn 

\section{Proofs and Derivations}
In this section, we provide the proof of the results presented in the main paper.

\subsection{Proofs of Section~\ref{sec:rresed}}\label{apx:prrested}

\thrRestedOptimal*
\begin{proof}
	The proof is reported in Proposition 1 of~\cite{heidari2016tight}.
\end{proof}

\begin{lemma}\label{lemma:nonLearnableLemma}
	In the noiseless ($\sigma = 0$) setting, there exists a $2$-armed non-decreasing non-concave bandit such that any learning policy $\pi$ suffers regret:
	\begin{align*}
		R_{\bm{\mu}}(\pi, T) \ge \left\lfloor \frac{T}{12} \right\rfloor.
	\end{align*}
\end{lemma}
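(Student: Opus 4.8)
The plan is to exhibit an explicit two-armed deterministic bandit that ``punishes'' any policy: arm $1$ has a constant payoff, while arm $2$ starts lower but, if pulled enough, eventually overtakes arm $1$ — except that whether this overtaking happens is, in effect, chosen adversarially based on what the policy actually does. Concretely, I would fix a large threshold $\tau = \lceil T/2 \rceil$ (roughly) and define a \emph{family} of instances indexed by a ``switch time'' $s$: arm $2$ has payoff $\mu_2(n) = a$ (some value strictly below $\mu_1 = b$) for $n \le s$, and then jumps up to a value $c > b$ for $n > s$, with the single jump $\gamma_2(s) = c - a$; since $a,b,c \in [0,1]$, the increment is bounded by $1$. (For the precise constant $1/12$ one picks $a,b,c$ and the ranges of $s$ carefully — e.g. $a = 1/3$, $b = 2/3$, $c = 1$-ish — but that is just bookkeeping.) The key point is that a single jump is allowed under Assumption~\ref{ass:incr} (non-decreasing) but violates concavity (Assumption~\ref{ass:decrDeriv}), which is exactly what makes the hard instance legal here and illegal in the rising setting.

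Next I would run the standard adversary/adaptivity argument. Given any deterministic policy $\pi$, look at its behaviour against the ``all-arm-$1$'' baseline instance (arm $2$ never jumps, i.e. $s = \infty$). Two cases: either $\pi$ pulls arm $2$ fewer than $\tau$ times over the horizon, or at least $\tau$ times. In the first case, consider instead the instance where arm $2$ jumps ``early'' — at some $s$ small enough that arm $2$ would have been the optimal constant arm over the horizon had it been played throughout (this is where the oracle-constant optimality, Theorem~\ref{thr:thrRestedOptimal}, pins down the comparator $J^*$). Against that instance $\pi$ behaves identically up to the point where the histories would diverge, so it still pulls arm $2$ at most $\tau$ times, hence spends $\ge T - \tau \ge T/2$ rounds on the suboptimal arm $1$, incurring a per-round gap bounded below by a constant; this yields $R_{\bm\mu}(\pi,T) \ge \lfloor T/12 \rfloor$. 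In the second case ($\pi$ pulls arm $2$ at least $\tau$ times against the baseline), keep the baseline instance itself: there arm $1$ is optimal, arm $2$'s payoff stays at $a < b$, so every one of those $\ge \tau$ pulls of arm $2$ costs $b - a$, again giving $\Omega(T)$ regret with the constant arranged to be $\ge \lfloor T/12\rfloor$. A brief remark handles randomized policies by conditioning on the internal randomness and applying the deterministic bound, or equivalently by Yao-type averaging.

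The main obstacle is making the two cases interlock with \emph{one} choice of threshold $\tau$ and constants $a,b,c$ so that the \emph{same} bound $\lfloor T/12 \rfloor$ comes out of both branches, while simultaneously (i) keeping $\gamma_{\max} \le 1$ and in fact tracking the $\gamma_{\max}$-dependence so the final statement $R_{\bm\mu}(\pi,T) \ge \lfloor \frac{\gamma_{\max}}{12} T\rfloor$ of Theorem~\ref{thr:nonLearnable} follows by rescaling (Lemma~\ref{lemma:nonLearnableLemma} is just the $\gamma_{\max}=1$ case, after which one shrinks all the jumps by a factor $\gamma_{\max}$ and the gaps scale accordingly), and (ii) ensuring the ``early jump'' instance really does make arm $2$ the optimal oracle-constant arm, which constrains how small $s$ can be versus how large $\tau$ is. I expect the clean way is: the divergence of histories only happens at the $(\tau+1)$-th pull of arm $2$, so setting the jump at $s = \tau$ makes the early-jump and baseline instances indistinguishable to $\pi$ up to that pull — the policy cannot tell them apart and must commit. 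Balancing $T - \tau$ against $\tau$ forces $\tau \approx T/2$, and then choosing the payoff values to make the minimum of the two per-round losses times the respective horizon fractions equal to $T/12$ is the final constant-chasing step; I would not belabour it in the write-up beyond stating the chosen values and checking the inequalities.
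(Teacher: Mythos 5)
Your proposal is correct and follows essentially the same route as the paper: the paper's hard pair is exactly your construction with $a=0$, $b=\tfrac12$, $c=1$ and the jump after $\lfloor T/3\rfloor$ pulls of arm $2$, and its argument is the same indistinguishability-plus-two-instance lower bound, only bookkept by optimizing over the (common) number of arm-$1$ pulls at round $\lfloor T/3\rfloor$ rather than by your hard threshold on arm-$2$ pulls. The only quibble is your deferred balancing: with the natural payoff values the two branches equalize at $\tau\approx T/3$ (giving $T/6$, comfortably above $\lfloor T/12\rfloor$) rather than $\tau\approx T/2$, but this is precisely the constant-chasing you left open and does not affect the validity of the argument.
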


\begin{proof}
Let $\bm{\mu}^A$ and $\bm{\mu}^B$ be two non-concave non-decreasing rested bandits, defined as:
\begin{align*}
	& \mu^A_1(n) = \mu^B_1(n) = \frac{1}{2},\\
	& \mu^A_2(n) = \begin{cases}
			0 & \text{if } n \le \lfloor \frac{T}{3} \rfloor \\ 
			1 & \text{otherwise}
		\end{cases} ,\\
		& \mu^B_2(n) = 0.
\end{align*}
It is clear that for $\bm{\mu}^A$ the optimal arm is $2$, whereas for  bandit $\bm{\mu}^B$ the optimal arm is $1$, having optimal performance respectively $J_{\bm{\mu}^A}^*(T) = \lceil \frac{2}{3}T \rceil$ and $J_{\bm{\mu}^B}^*(T) = \frac{T}{2}$.

\begin{center}
\begin{tikzpicture}[
  declare function={
    func(\x)= (\x < 3) * (0) + (\x >= 3) * (1);
  }
]
\begin{groupplot}[
  group style={
    group size=2 by 1,
    vertical sep=0pt,
    horizontal sep=4cm,
    group name=G},
     width=6cm,height=4cm,
  axis lines=middle,
  legend style={at={(1.2,1)},anchor=north,legend cell align=left} 
%  y axis line style={draw=none},
%  ymax=0.5,
%  clip=false,
%  no markers,
%  domain=0:8,
%  samples=100
  ]
  \nextgroupplot[xmin=0, xmax=13, ymax=1.1, ymin=-0.02, domain=0:12, samples=1000, xtick = {0,3,12}, xticklabels = {$0$, $\lfloor\frac{T}{3}\rfloor$, $T$}, xlabel={$n$}]
  \addplot[red,  ultra thick] (x,.5);\addlegendentry{$\mu^A_1$}
  \addplot[blue,  ultra thick] {func(x)};\addlegendentry{$\mu^A_2$}
  
  \nextgroupplot[xmin=0, xmax=13, ymax=1.1, ymin=-0.02, domain=0:12, samples=1000, xtick = {0,12}, xticklabels = {$0$, $T$},xlabel={$n$}]
  \addplot[red,  ultra thick] (x,.5);\addlegendentry{$\mu^B_1$}
  \addplot[blue, ultra thick] (x,0);\addlegendentry{$\mu^B_2$}
\end{groupplot}
\end{tikzpicture}
\end{center}

 Let $\pi$ be an arbitrary policy. Since the learner will receive the same rewards for both bandits until at least $\lfloor \frac{T}{3} \rfloor$. Thus, we have:
\begin{align*}
	\pi(\mathcal{H}_t(\bm{\mu}^A)) = \pi(\mathcal{H}_t(\bm{\mu}^B)) \; \implies \; \E_{\bm{\mu}^A} \left[ N_{1,\lfloor \frac{T}{3} \rfloor} \right] = \E_{\bm{\mu}^B} \left[ N_{1,\lfloor \frac{T}{3} \rfloor} \right] =: n_1.
\end{align*}
Let us now compute the performance of policy $\pi$ in the two bandits and the corresponding regrets. Let us start with $\bm{\mu}^A$:
\begin{align}
	J_{\bm{\mu}^A}(\pi,T) & = \frac{1}{2}\E_{\bm{\mu}^A}[N_{1,T}] + \max \left\{0, \E_{\bm{\mu}^A}[N_{2,T}] - \left\lfloor \frac{T}{3} \right\rfloor \right\} \label{p:001}\\
	& = \frac{1}{2}\E_{\bm{\mu}^A}[N_{1,T}] + \max \left\{0, \left\lceil \frac{2}{3}T \right\rceil - \E_{\bm{\mu}^A}[N_{1,T}] \right\},\label{p:002}
\end{align}
where Equation~\eqref{p:001} follows from observing that we get reward from arm $2$ only if we pull it more than $\lfloor\frac{T}{3}\rfloor$ times and Equation~\eqref{p:002} derives from observing that $T = \E_{\bm{\mu}^A}[N_{1,T}] + \E_{\bm{\mu}^A}[N_{2,T}]$. Now, consider the two cases:
\paragraph{Case (i) : $\E_{\bm{\mu}^A}[N_{1,T}] \ge \lceil \frac{2}{3} T \rceil$}
\begin{align*}
	J_{\bm{\mu}^A}(\pi,T) = \frac{1}{2} \E_{\bm{\mu}^A}[N_{1,T}],
\end{align*}
that is maximized by taking $\E_{\bm{\mu}^A}[N_{1,T}] = T$. 
\paragraph{Case (ii) : $\E_{\bm{\mu}^A}[N_{1,T}] < \lceil \frac{2}{3} T\rceil$}
\begin{align*}
	J_{\bm{\mu}^A}(\pi,T) =\left\lceil \frac{2}{3}T \right\rceil - \frac{1}{2} \E_{\bm{\mu}^A}[N_{1,T}],
\end{align*}
that is maximized by taking the minimum value of $\E_{\bm{\mu}^A}[N_{1,T}]$ possible, that is $\E_{\bm{\mu}^A}[N_{1,T}] \ge \E_{\bm{\mu}^A}[N_{1,\lfloor\frac{T}{3} \rfloor}] = n_1$. Putting all together, we have:
\begin{align*}
	J_{\bm{\mu}^A}(\pi,T) \le \max \left\{ \frac{T}{2}, \left\lceil\frac{2}{3}T\right\rceil - \frac{n_1}{2} \right\} =  \left\lceil \frac{2}{3}T \right\rceil - \frac{n_1}{2},
\end{align*}
having observed that $n_1 \le \lfloor\frac{T}{3}\rfloor$. Let us now focus on the regret:
\begin{align*}
	R_{\bm{\mu}^A}(\pi, T) = J_{\bm{\mu}^A}^*(T) - J_{\bm{\mu}^A}(\pi,T)  =\left\lceil \frac{2}{3}T \right\rceil -\left\lceil \frac{2}{3}T \right\rceil + \frac{n_1}{2} =  \frac{n_1}{2}.
\end{align*}
Consider now bandit $\bm{\mu}^B$, we have:
\begin{align*}
	J_{\bm{\mu}^B}(\pi,T) =  \frac{1}{2}  \E_{\bm{\mu}^B}[N_{1,T}] \le \frac{n_1}{2} + \left\lfloor \frac{T}{3} \right\rfloor,
\end{align*}
having observed that $ \E_{\bm{\mu}^B}[N_{1,T}] =  n_1 + \E_{\bm{\mu}^B}[N_{1,T}] -  \E_{\bm{\mu}^B}[N_{1,\lfloor \frac{T}{3} \rfloor}] \le n_1 + \lceil \frac{2}{3}T \rceil$. Let us now compute the regret:
\begin{align*}
	R_{\bm{\mu}^B}(\pi, T) =  J_{\bm{\mu}^B}^*(T) - J_{\bm{\mu}^B}(\pi,T) = \frac{T}{2} -  \frac{n_1}{2} - \left\lfloor\frac{T}{3}\right\rfloor = \left\lceil\frac{T}{6}\right\rceil - \frac{n_1}{2}.
\end{align*}
Finally, the worst-case regret can be lower bounded as follows:
\begin{align*}
	\inf_{\pi} \sup_{\bm{\mu}}  R_{\bm{\mu}}(\pi,T) \ge \inf_{\pi} \max \left\{R_{\bm{\mu}^A}(\pi, T), R_{\bm{\mu}^B}(\pi, T) \right\} =\inf_{n_1 \in \left[ 0,\lfloor \frac{T}{3} \rfloor\right]} \max \left\{  \frac{n_1}{2} , \left\lceil \frac{T}{6} \right\rceil - \frac{n_1}{2}\right\} \ge \frac{1}{2} \left\lceil \frac{T}{6} \right\rceil \ge \left\lfloor \frac{T}{12} \right\rfloor,
\end{align*}
having minimized over $n_1$.
\end{proof}

\nonLearnable*
\begin{proof}
	It is sufficient to rescale the mean function of the proof of Lemma~\ref{lemma:nonLearnableLemma} by the quantity $\gamma_{\max}$.
\end{proof}

\begin{restatable}[]{lemma}{lemmaDetRested}\label{lemma:lemmaDetRested}
	For every arm $i\in [K]$ and every round $t \in [T]$, let us define:
	\begin{align*}
		\overline{\mu}_i^{\text{\red}}(t) \coloneqq \mu_i(N_{i,t-1}) + (t - N_{i,t-1}) \gamma_i(N_{i,t-1}-1),
	\end{align*}
	if $N_{i,t-1} \ge 2$ else $\overline{\mu}_i^{\text{\red}}(t) \coloneqq +\infty$. Then, $\overline{\mu}_i^{\text{\red}}(t) \ge \mu_i(t) $	and, if $N_{i,t-1} \ge 2$, it holds that:
	\begin{align*}
		\overline{\mu}_i^{\text{\red}}(t) - \mu_i(N_{i,t}) \le (t-N_{i,t-1})  \gamma_i(N_{i,t-1}-1).
	\end{align*}
\end{restatable}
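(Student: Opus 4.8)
The plan is to establish the two claims separately, both being direct consequences of Assumptions~\ref{ass:incr} and~\ref{ass:decrDeriv}. I first dispose of the trivial case: if $N_{i,t-1} \le 1$, then $\overline{\mu}_i^{\text{\red}}(t) = +\infty \ge \mu_i(t)$ and there is nothing else to prove, so from now on I assume $N_{i,t-1} \ge 2$.

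\textbf{Optimism.} Starting from the exact decomposition in Equation~\eqref{eq:estRestedDet}, namely $\mu_i(t) = \mu_i(N_{i,t-1}) + \sum_{n=N_{i,t-1}}^{t-1} \gamma_i(n)$, it suffices to bound the sum of future increments. By concavity (Assumption~\ref{ass:decrDeriv}), the increments $\gamma_i(\cdot)$ are non-increasing, so each term $\gamma_i(n)$ with $n \ge N_{i,t-1} > N_{i,t-1}-1$ satisfies $\gamma_i(n) \le \gamma_i(N_{i,t-1}-1)$. Summing over the $n$ ranging from $N_{i,t-1}$ to $t-1$ — a set of $t-N_{i,t-1}$ indices — gives $\sum_{n=N_{i,t-1}}^{t-1} \gamma_i(n) \le (t-N_{i,t-1})\gamma_i(N_{i,t-1}-1)$, hence $\mu_i(t) \le \mu_i(N_{i,t-1}) + (t-N_{i,t-1})\gamma_i(N_{i,t-1}-1) = \overline{\mu}_i^{\text{\red}}(t)$. (When $t \le N_{i,t-1}$ the sum is empty and the inequality is immediate; in the algorithm one always has $t \ge N_{i,t-1}+1$ after a pull, but the bound holds regardless.)

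\textbf{Bias bound.} For the second inequality, I want to control $\overline{\mu}_i^{\text{\red}}(t) - \mu_i(N_{i,t})$. The key observation is that after pulling arm $i$ at round $t$ we have $N_{i,t} = N_{i,t-1}+1$, so by the non-decreasing property (Assumption~\ref{ass:incr}) and the definition of $\gamma_i$, $\mu_i(N_{i,t}) = \mu_i(N_{i,t-1}+1) = \mu_i(N_{i,t-1}) + \gamma_i(N_{i,t-1}) \ge \mu_i(N_{i,t-1})$. Therefore
\begin{align*}
\overline{\mu}_i^{\text{\red}}(t) - \mu_i(N_{i,t}) &= \mu_i(N_{i,t-1}) + (t-N_{i,t-1})\gamma_i(N_{i,t-1}-1) - \mu_i(N_{i,t-1}) - \gamma_i(N_{i,t-1}) \\
&= (t-N_{i,t-1})\gamma_i(N_{i,t-1}-1) - \gamma_i(N_{i,t-1}) \\
&\le (t-N_{i,t-1})\gamma_i(N_{i,t-1}-1),
\end{align*}
where the last step uses $\gamma_i(N_{i,t-1}) \ge 0$ from Assumption~\ref{ass:incr}. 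This is exactly the claimed bound.

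I do not anticipate a genuine obstacle here: the statement is essentially the formal verification that the telescoping identity plus "future increments are dominated by the most recent increment" (concavity) plus "increments are non-negative" (monotonicity) yield optimism and a clean bias bound. The only points requiring mild care are the edge cases ($N_{i,t-1} \le 1$, or $t = N_{i,t-1}$ with an empty sum) and keeping the index bookkeeping straight — in particular that the estimator uses $\gamma_i(N_{i,t-1}-1)$, the increment leading \emph{into} the most recent observed payoff, which is why concavity applies to all indices $n \ge N_{i,t-1}$ appearing in the sum. The bias bound is stated relative to $\mu_i(N_{i,t})$ (the payoff actually realized when the arm is pulled), not $\mu_i(t)$, which is what makes the subtraction of $\gamma_i(N_{i,t-1})$ appear and then get discarded by nonnegativity; this is the form that will be convenient for the regret analysis of Theorem~\ref{thr:theRegretDetRested}.
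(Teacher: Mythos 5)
Your proof is correct and follows essentially the same route as the paper's: the telescoping identity in Equation~\eqref{eq:estRestedDet} plus concavity (Assumption~\ref{ass:decrDeriv}) for optimism, and monotonicity/nonnegativity of the increments (Assumption~\ref{ass:incr}) for the bias bound. The only, and inconsequential, difference is that in the bias step you assume arm $i$ is pulled at round $t$ (writing $N_{i,t}=N_{i,t-1}+1$), whereas the lemma is stated for every arm and round; the paper instead just invokes $\mu_i(N_{i,t-1}) \le \mu_i(N_{i,t})$, which covers uniformly the remaining trivial case $N_{i,t}=N_{i,t-1}$, where your bound holds with equality.
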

%\map{if $N_{i,t-1} \le 2$ $\overline{\mu}_i^{\text{\red}}(t-1) \coloneqq 1$: $1$ or $+\infty$?}
\begin{proof}
	Let us consider the following derivation:
	\begin{align*}
		\mu_i(t) = \mu_i(N_{i,t-1}) + \sum_{n=N_{i,t-1}}^{t-1} \gamma_i(n) \le \mu_i(N_{i,t-1}) + (t-N_{i,t-1}) \gamma_i(N_{i,t-1}-1) \eqqcolon \overline{\mu}^{\text{\red}}_i(t),
	\end{align*}
	where the inequality holds thanks to Assumption~\ref{ass:decrDeriv}, having observed that $\sum_{n=N_{i,t-1}}^{t-1} \gamma_i(n) \le (t-N_{i,t-1}) \gamma_i(N_{i,t-1}) \le (t-N_{i,t-1}) \gamma_i(N_{i,t-1}-1)$. 
	For the bias bound, when $N_{i,t-1} \ge 2$, we consider the following derivation:
	\begin{align*}
		\overline{\mu}^{\text{\red}}_i(t) - \mu_i(N_{i,t}) & = \mu_i(N_{i,t-1}) + (t-N_{i,t-1}) \gamma_i(N_{i,t-1}-1) - \mu_i(N_{i,t}) \le (t-N_{i,t-1}) \gamma_i(N_{i,t-1}-1).
	\end{align*}
	having observed that $ \mu_i(N_{i,t-1})\le  \mu_i(N_{i,t})$ by Assumption~\ref{ass:incr}.
\end{proof}

\theRegretDetRested*
\begin{proof}
We have to analyze the following expression:
\begin{align*}
	R_{\bm{\mu}}(\text{\algrested},T) = \sum_{t=1}^T \mu_{i^*}(t) - \mu_{I_t}(N_{i,t}) ,
\end{align*}
where $i^* \in \argmax_{i \in [K]}\left\{ \sum_{l \in [T]} \mu_i(l) \right\}$. We consider a term at a time, use $B_i(t)\equiv\overline{\mu}_i^{\text{\red}}(t)$, and we exploit the optimism, \ie $B_{i^*}(t) \le B_{I_t}(t)$:
\begin{align*}
	\mu_{i^*}(t) - \mu_{I_t}(N_{I_t,t}) + B_{I_t}(t) - B_{I_t}(t) & \le  \min \left\{ 1, \underbrace{\mu_{i^*}(t) - B_{i^*}(t)}_{\le 0} + B_{I_t}(t) - \mu_{I_t}(N_{I_t,t}) \right\}\\
	&  \le \min \left\{ 1, B_{I_t}(t) - \mu_{I_t}(N_{I_t,t})\right\}.
\end{align*}
Now we work on the term inside the minimum when $N_{I_t,t-1} \ge 2$:
\begin{align*}
B_{I_t}(t) - \mu_{I_t}(N_{I_t,t}) & = \overline{\mu}_i^{\text{\red}}(t) - \mu_{I_t}(N_{I_t,t})
%& = \mu_{I_t}(N_{I_t,t-1}) + (t - N_{i,t-1}) \gamma_{I_t}( N_{i,t-1}-1)- \mu_{I_t}(N_{I_t,t}) 
\le (t - N_{i,t-1}) \gamma_{I_t}( N_{i,t-1}-1),
\end{align*}
where the inequality follows from Lemma~\ref{lemma:lemmaDetRested}.
We are going to decompose the summation of this term over the $K$ arms:
\begin{align*}
R_{\bm{\mu}}(\text{\algrested},T) & \le \sum_{t=1}^T  \min \left\{ 1, (t - N_{i,t-1}) \gamma_{I_t}( N_{i,t-1}-1)\right\} \\
& \le 2K + \sum_{i \in [K]} \sum_{j=3}^{N_{i,T}} \min \left\{ 1,(t_{i,j} - (j-1)) \gamma_{i}( j-2)\right\},
\end{align*}
where $t_{i,j} \in [T]$ is the round at which arm $i \in [K]$ was pulled for the $j$-th time.
%We now consider an arm $i \in [K]$ at a time and exploit  Lemma~\ref{lemma:lemmaDetRested}:
%\begin{align*}
% \sum_{j=2}^{N_{i,T}} \min \left\{ 1, B_{i}(t_{i,j}) - \mu_{i}(N_{I_t,t-1})\right\} & \le \sum_{j=2}^{N_{i,T}} \min \left\{ 1, (t_{i,j}-j) \gamma_i(j-1) \right\},
%\end{align*}
Now, $q \in [0,1]$, then for any $x \ge 0$ it holds that $\min\{1,x\} \le \min\{1,x\}^{q} \le x^{q}$. By applying this latter inequality to the inner summation, we get:
\begin{align*}
\sum_{j=3}^{N_{i,T}} \min \left\{ 1, (t_{i,j}-(j-1)) \gamma_i(j-2)  \right\} & \le \sum_{j=3}^{N_{i,T}} \min \left\{1, T \gamma_i(j-2)\right\} \le T^q \sum_{j=3}^{N_{i,T}} \gamma_i(j-2)^q,
\end{align*}
having used $t_{i,j}-(j-1) \le T$.
Summing over the arms, we obtain:
\begin{align*}
	 T^q \sum_{i \in [K]}  \sum_{j=3}^{N_{i,T}} \gamma_i(j-2)^q \le T^q K \Upsilon_{\bm{\mu}}\left( \left\lceil \frac{T}{K} \right\rceil,q \right),
\end{align*}
where the last inequality is obtained from Lemma~\ref{lemma:boundUpsilonMax}.
\end{proof}

\paragraph{Estimator Construction for the Stochastic Rising Rested  Setting}
Before moving to the proofs, we provide some intuition behind the estimator construction. We start observing that for every $l\in \{2, \dots, N_{i,t-1}\}$, we have that:
\begin{align*}
	\mu_i(t) = \textcolor{color11}{\underbrace{\mu_i(l)}_{\text{(past payoff)}}} + \textcolor{color12}{\underbrace{\sum_{j=l}^{t-1} \gamma_i(j)}_{\text{(sum of future increments)}}} \le  \textcolor{color11}{\underbrace{\mu_i(l)}_{\text{(past payoff)}}} + \textcolor{color12}{(t-l)\underbrace{\gamma_i(l-1)}_{\text{(past increment)}}},
\end{align*}
where the inequality follows from Assumption~\ref{ass:decrDeriv}.\footnote{The estimator of the deterministic case in Equation~\eqref{eq:estRestedDetEst} is obtained by setting $l=N_{i,t-1}$.} Since we do not have access to the exact payoffs $\mu_i(l)$ and exact increments $\gamma_i(l-1) = \mu_i(l) - \mu_i(l-1)$, one may be tempted to directly replace them with the corresponding point estimates $R_{t_{i,l}}$ and $R_{t_{i,l}} - R_{t_{i,l-1}}$ and average the resulting estimators for a window of the most recent $h$ values of $l$. Unfortunately, while replacing $\mu_i(l)$ with $R_{t_{i,l}}$ is a viable option, replacing $\gamma_i(l-1)$ with $R_{t_{i,l}} - R_{t_{i,l-1}}$ will prevent concentration since the estimate  $R_{i,t_l} - R_{i,t_{l-1}}$ is too unstable. To this end, before moving to the estimator, we need a further bounding step to get a more stable, although looser, quantity. Based on Lemma~\ref{lemma:boundDeriv}, we bound for every $l\in \{2, \dots, N_{i,t-1}\}$ and $h \in[ l -1]$:
\begin{align*}
\textcolor{color12}{\underbrace{\gamma_i(l-1)}_{\text{(past increment at $l$)}}} \le \textcolor{color12}{\underbrace{\frac{\mu_i(l) - \mu_i(l-h)}{h}}_{\text{(average past increment over $\{l-h,\dots,l\}$)}}}.
\end{align*}
We can now introduce the optimistic approximation of $\mu_i(t)$, \ie $\widetilde{\mu}_i^{\text{\red},h}(t)$, and the corresponding estimator, \ie $\widehat{\mu}_i^{\text{\red},h}(t)$, that are defined in terms of a window of size $1 \le h \le \lfloor N_{i,t-1}/2 \rfloor$:
\begin{align*}
&  \widetilde{\mu}_i^{\text{\red},h} (t)  \coloneqq \frac{1}{h} \sum_{l=N_{i,t-1}-h+1}^{N_{i,t-1}} \Bigg(\textcolor{color11}{\underbrace{\mu_i(l)}_{\text{(past payoff)}}} + \textcolor{color12}{(t-l) \underbrace{\frac{\mu_i(l) - \mu_{i}(l-h)}{h}}_{\text{(average past increment)}}}\Bigg),\\
& \widehat{\mu}_i^{\text{\red},h} (t)  \coloneqq \frac{1}{h} \sum_{l=N_{i,t-1}-h+1}^{N_{i,t-1}} \Bigg(\textcolor{color11}{\underbrace{R_{t_{i,l}}}_{\text{(estimated past payoff)}}} + \textcolor{color12}{(t-l) \underbrace{\frac{R_{t_{i,l}} - R_{t_{i,l-h}}}{h}}_{\text{(estimated average past increment)}}}\Bigg).
\end{align*}

The proof is composed of the following steps:
\begin{enumerate}[label=(\roman*)]
	\item Lemma~\ref{lemma:firstEstimator} shows that $\widetilde{\mu}_i^{\text{\red},h}(t)$ is an upper-bound for $\mu_i(t)$ and provides a bound to its bias \wrt $\mu_i(N_{i,t})$ for every value of $h$;
	\item Lemma~\ref{lemma:firstEstimatorConc} analyzes the concentration of $\widehat{\mu}_i^{\text{\red},h}(t)$ around $\widetilde{\mu}_i^{\text{\red},h}(t)$ for a specific choice of $\delta_t = t^{-\alpha}$ and when $h_{i,h} \coloneqq h(N_{i,t-1})$ is a function of the number of pulls $N_{i,t-1}$ only;
	\item Theorem~\ref{thr:theRegretRestedStochastic} bounds the expected regret of \algrested when $h_{i,h} =  \lfloor \epsilon N_{i,t-1} \rfloor$, for $\epsilon \in (0,1/2)$.
\end{enumerate}

\begin{restatable}[]{lemma}{firstEstimator}\label{lemma:firstEstimator}
For every arm $i \in [K]$, every round $t \in [T]$, and window width $ 1 \le h \le \lfloor N_{i,t-1}/2 \rfloor$, let us define:
\begin{align*}
	 \widetilde{\mu}_i^{\text{\red},h} (t)  \coloneqq \frac{1}{h} \sum_{l=N_{i,t-1}-h+1}^{N_{i,t-1}} \left(\mu_i(l) + (t-l) \frac{\mu_i(l) - \mu_{i}(l-h)}{h}\right),
\end{align*}
%\begin{align*}
%	& \widetilde{\mu}_i^{\text{\red},h}(N_{i,t-1}) \coloneqq \frac{1}{h} \sum_{l=N_{i,t-1}-h}^{N_{i,t-1}-1} \Bigg(\mu_i(l) \\
%	& \quad + (N_{i,t-1}-l) \frac{\mu_i(l) - \mu_{i}(l-h)}{h}\Bigg),
%\end{align*}
otherwise if $h=0$, we set $\widetilde{\mu}_i^{\text{\red},h}(t)  \coloneqq +\infty$. Then, $\widetilde{\mu}_i^{\text{\red},h}(t) \ge \mu_i(t)$ and, if $N_{i,t-1} \ge 2$, it holds that:
\begin{align*}
	& \widetilde{\mu}_i^{\text{\red},h}(t)   - \mu_i(N_{i,t}) \le \frac{1}{2}(2t - 2N_{i,t-1} + h -1) \gamma_i(N_{i,t-1}-2h+1).
\end{align*}
\end{restatable}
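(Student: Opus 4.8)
The plan is to establish the two claims of Lemma~\ref{lemma:firstEstimator} separately: first optimism ($\widetilde{\mu}_i^{\text{\red},h}(t) \ge \mu_i(t)$), then the bias bound. For optimism, I would start from the exact identity $\mu_i(t) = \mu_i(l) + \sum_{j=l}^{t-1}\gamma_i(j)$ valid for every $l \le t$, then apply concavity (Assumption~\ref{ass:decrDeriv}) twice: once to bound $\sum_{j=l}^{t-1}\gamma_i(j) \le (t-l)\gamma_i(l-1)$, and once to bound the single increment $\gamma_i(l-1) \le \frac{\mu_i(l)-\mu_i(l-h)}{h}$ (this second bound should follow from a helper like Lemma~\ref{lemma:boundDeriv}, since $\frac{\mu_i(l)-\mu_i(l-h)}{h}$ is the average of $\gamma_i(l-h),\dots,\gamma_i(l-1)$, each of which is at least $\gamma_i(l-1)$ by concavity). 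This gives $\mu_i(t) \le \mu_i(l) + (t-l)\frac{\mu_i(l)-\mu_i(l-h)}{h}$ termwise for each $l$ in the window $\{N_{i,t-1}-h+1,\dots,N_{i,t-1}\}$; averaging over the $h$ values of $l$ preserves the inequality, yielding $\mu_i(t) \le \widetilde{\mu}_i^{\text{\red},h}(t)$.

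For the bias bound, I would write $\widetilde{\mu}_i^{\text{\red},h}(t) - \mu_i(N_{i,t}) = \frac{1}{h}\sum_{l}\bigl(\mu_i(l) - \mu_i(N_{i,t})\bigr) + \frac{1}{h}\sum_{l}(t-l)\frac{\mu_i(l)-\mu_i(l-h)}{h}$. The first sum is non-positive since $l \le N_{i,t-1} \le N_{i,t}$ and $\mu_i$ is non-decreasing (Assumption~\ref{ass:incr}), so it can be dropped. For the second sum, I would again use concavity to push the average increment $\frac{\mu_i(l)-\mu_i(l-h)}{h}$ upward in terms of an increment evaluated at the \emph{leftmost} index reached: for $l$ in the window, $l-h \ge N_{i,t-1}-2h+1$, and by concavity each $\gamma_i(j)$ for $j \ge l-h$ is at most $\gamma_i(N_{i,t-1}-2h+1)$ is the wrong direction — rather, $\frac{\mu_i(l)-\mu_i(l-h)}{h} = \frac1h\sum_{j=l-h}^{l-1}\gamma_i(j) \le \gamma_i(l-h) \le \gamma_i(N_{i,t-1}-2h+1)$, where the last step uses monotonicity of $\gamma_i$ (concavity again) together with $l-h \ge N_{i,t-1}-h+1-h = N_{i,t-1}-2h+1$. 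Then $\widetilde{\mu}_i^{\text{\red},h}(t) - \mu_i(N_{i,t}) \le \gamma_i(N_{i,t-1}-2h+1)\cdot\frac{1}{h}\sum_{l=N_{i,t-1}-h+1}^{N_{i,t-1}}(t-l)$, and the arithmetic sum $\frac1h\sum_{l}(t-l)$ evaluates (with $l$ ranging over $h$ consecutive integers ending at $N_{i,t-1}$) to $t - N_{i,t-1} + \frac{h-1}{2} = \frac12(2t-2N_{i,t-1}+h-1)$, matching the stated bound.

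The step I expect to be most delicate is getting the index bookkeeping exactly right — in particular verifying that $\gamma_i(N_{i,t-1}-2h+1)$ is well-defined (i.e., its argument is $\ge 1$, which is where the hypothesis $h \le \lfloor N_{i,t-1}/2\rfloor$ enters: it guarantees $N_{i,t-1}-2h+1 \ge 1$) and that the chain $\gamma_i(l-1) \le \frac{\mu_i(l)-\mu_i(l-h)}{h}$ used for optimism is consistent with the chain $\frac{\mu_i(l)-\mu_i(l-h)}{h} \le \gamma_i(l-h) \le \gamma_i(N_{i,t-1}-2h+1)$ used for the bias. Both rely on the same two facts — that $\frac1h\sum_{j=a}^{a+h-1}\gamma_i(j)$ lies between $\gamma_i(a+h-1)$ and $\gamma_i(a)$ by concavity/monotonicity of $\gamma_i$ — so I would isolate that as the single analytic lemma (presumably Lemma~\ref{lemma:boundDeriv}) and invoke it in both directions. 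The averaging over the window and the evaluation of the arithmetic series are then routine.
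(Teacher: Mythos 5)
Your proposal is correct and follows essentially the same route as the paper's proof: the same termwise optimism chain (exact identity, concavity to bound the future increments by $(t-l)\gamma_i(l-1)$, then the Lemma~\ref{lemma:boundDeriv}-type averaging bound $\gamma_i(l-1)\le\frac{\mu_i(l)-\mu_i(l-h)}{h}$, then averaging over the window), and the same bias argument (drop the non-positive payoff difference via Assumption~\ref{ass:incr}, bound the average increment by $\gamma_i(l-h)\le\gamma_i(N_{i,t-1}-2h+1)$ via concavity, and evaluate $\frac1h\sum_l(t-l)=\frac12(2t-2N_{i,t-1}+h-1)$). Your index bookkeeping, including the role of $h\le\lfloor N_{i,t-1}/2\rfloor$ in keeping $N_{i,t-1}-2h+1\ge 1$, matches the paper.
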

\begin{proof}
Following the derivation provided above, we have for every $l\in \{2, \dots, N_{i,t-1}\}$:
\begin{align}
\mu_i(t) & = {\mu_i(l)} + \sum_{j=l}^{t-1} \gamma_i(j) \notag\\
&  \le  \mu_i(l) + (t-l){\gamma_i(l-1)} \label{:pp:-001}\\
& \le  \mu_i(l) + (t-l)\frac{\mu_i(l) - \mu_i(l-h)}{h}, \label{:pp:-002}
\end{align}
where line~\eqref{:pp:-001} follows from Assumption~\ref{ass:decrDeriv}, line~\eqref{:pp:-002} is obtained from Lemma~\ref{lemma:boundDeriv}. By averaging over the most recent $1 \le h \le \lfloor N_{i,t-1}/2 \rfloor$ pulls, we obtain:
\begin{align*}
	\mu_i(t) &\le \frac{1}{h} \sum_{l=N_{i,t-1}-h+1}^{N_{i,t-1}} \left(\mu_i(l) + (t-l)\frac{\mu_i(l) - \mu_i(l-h)}{h} \right) \eqqcolon \widetilde{\mu}_i^{\text{\red},h}(t).
\end{align*}
For the bias bound, when $N_{i,t-1} 	\ge 2$, we have:
\begin{align}
\widetilde{\mu}_i^{\text{\red},h}(t) - \mu_i(N_{i,t}) & = \frac{1}{h} \sum_{l=N_{i,t-1}-h+1}^{N_{i,t-1}}  \left(\mu_i(l) +  (t-l)  \frac{\mu_i(l) - \mu_i(l-h)}{h} \right) - \mu_i(N_{i,t}) \label{:pp:-0032}\\
& \le \frac{1}{h} \sum_{l=N_{i,t-1}-h+1}^{N_{i,t-1}} (t-l)  \frac{\mu_i(l) - \mu_i(l-h)}{h} \notag\\
& = \frac{1}{h} \sum_{l=N_{i,t-1}-h+1}^{N_{i,t-1}} (t-l)  \frac{1}{h} \sum_{j=l-h}^{l-1} \gamma_j(l)\notag \\
& \le \frac{1}{h} \sum_{l=N_{i,t-1}-h+1}^{N_{i,t-1}} (t-l) \gamma_i(l-h) \label{:pp:-003}\\
& \le \frac{1}{2}(2t - 2N_{i,t-1} + h -1) \gamma_i(N_{i,t-1}-2h+1) \label{:pp:-004}.
\end{align}
where line~\eqref{:pp:-0032} follows from Assumption~\ref{ass:incr} applied as $\mu_i(l) \le \mu_i(N_{i,t})$, line~\eqref{:pp:-003} follows from Assumption~\ref{ass:decrDeriv} and bounding $ \frac{1}{h} \sum_{j=l-h}^{l-1} \gamma_j(l) \le \gamma_i(l-h)$ and line~\eqref{:pp:-004} is derived still from Assumption~\ref{ass:decrDeriv}, $\gamma_i(l-h) \le \gamma_i(N_{i,t-1}-2h+1)$ and computing the summation.
%\begin{align*}
% \mu_i(n) & \ge\widetilde{\mu}_i^{h}(n) - \frac{1}{h} \sum_{l=n-h}^{n-1}  + (l-(n-h)+1)  \left(\frac{\mu_i(l) - \mu_i(l-h)}{h} - \gamma_i(l) \right) \\
% & = \widetilde{\mu}_i^{h}(n) - \frac{1}{h} \sum_{l=n-h}^{n-1} (l-(n-h)+1)  \left(\frac{1}{h} \sum_{j=l-h}^{l-1} \gamma_i(j) - \gamma_i(l) \right).
%\end{align*}
%Moreover:
%\begin{align*}
%\frac{1}{h} \sum_{l=n-h}^{n-1} (l-(n-h)+1)  \left(\frac{1}{h} \sum_{j=l-h}^{l-1} \gamma_i(j) - \gamma_i(l) \right) 
%	& \le  \frac{1}{h} \sum_{l=n-h}^{n-1} (l-(n-h)+1)  \left( \gamma_i(n-2h) - \gamma_i(n-1) \right)\\
%	& = \frac{h+1}{2} \left( \gamma_i(n-2h) - \gamma_i(n-1) \right) \le \frac{h+1}{2} \gamma_i(n-2h)
%\end{align*}
\end{proof}
\begin{restatable}[]{lemma}{firstEstimatorConc}\label{lemma:firstEstimatorConc}
For every arm $i \in [K]$, every round $t \in [T]$, and window width $1 \le h \le \lfloor N_{i,t-1} / 2\rfloor$, let us define:
\begin{align*}
	 & \widehat{\mu}_i^{\text{\red},h} (t)  \coloneqq \frac{1}{h} \sum_{l=N_{i,t-1}-h+1}^{N_{i,t-1}} \left(R_{t_{i,l}} + (t-l) \frac{R_{t_{i,l}} - R_{t_{i,l-h}}}{h}\right),\\
	 & \beta^{\text{\red},h}_i(t,\delta)\coloneqq \sigma  (t-N_{i,t-1}+h-1) \sqrt{ \frac{ 10  \log \frac{1}{\delta} }{h^3} },
\end{align*}
otherwise if $h=0$, we set $\widehat{\mu}_i^{\text{\red},h} (t) \coloneqq +\infty$ and $\beta^{\text{\red},h}_i(t,\delta)\coloneqq+\infty$ .
Then, if the window size depends on the number of pulls only $h_{i,t} = h(N_{i,t-1}) $ and if $\delta_t = t^{-\alpha}$ for some $\alpha > 2$, it holds for every round  $t \in [T]$ that:
\begin{align*}
	\Pr \left(\left| \widehat{\mu}_i^{\text{\red},h_{i,t}}(t) - \widetilde{\mu}_i^{\text{\red},h_{i,t}}(t) \right| >  \beta^{\text{\red},h_{i,t}}_i(t,\delta_t) \right) \le 2t^{1-\alpha}.
\end{align*}
\end{restatable}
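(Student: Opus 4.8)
The plan is to write the estimation error as a \emph{fixed} linear combination of the reward noises, apply a sub-Gaussian tail bound for each possible window size, and then remove the randomness of the window by a union bound over the number of pulls. Let $\xi_{i,m}\coloneqq R_{t_{i,m}}-\mu_i(m)$ be the noise of the $m$-th pull of arm $i$. By the model assumptions the $\xi_{i,m}$ are zero-mean and (conditionally) $\sigma^2$-sub-Gaussian martingale-difference increments w.r.t.\ the filtration that advances whenever arm $i$ is pulled. The role of the hypothesis $h_{i,t}=h(N_{i,t-1})$ is exactly this: after conditioning on the event $\{N_{i,t-1}=n\}$, both the window $h=h(n)$ and the weights computed below become deterministic, so the sub-Gaussian structure of $(\xi_{i,m})_m$ survives the conditioning.

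So first I would fix such an $n$ with $h=h(n)\in\{1,\dots,\lfloor n/2\rfloor\}$ and set $a\coloneqq t-n$ and $D\coloneqq t-n+h-1$; since $N_{i,t-1}\le t-1$ we have $a\ge 1$, hence $D\ge h$. Subtracting $\widetilde{\mu}_i^{\text{\red},h}(t)$ (from Lemma~\ref{lemma:firstEstimator}) from $\widehat{\mu}_i^{\text{\red},h}(t)$ term by term, substituting $R_{t_{i,l}}=\mu_i(l)+\xi_{i,l}$, and reindexing the increment part, one obtains
\[
\widehat{\mu}_i^{\text{\red},h}(t) - \widetilde{\mu}_i^{\text{\red},h}(t) = \sum_{l=n-h+1}^{n}\frac{h+t-l}{h^2}\,\xi_{i,l} \;-\; \sum_{l=n-2h+1}^{n-h}\frac{t-l-h}{h^2}\,\xi_{i,l},
\]
a linear combination $\sum_m c_m\xi_{i,m}$ over the \emph{disjoint} blocks $\{n-2h+1,\dots,n-h\}$ and $\{n-h+1,\dots,n\}$ with deterministic weights. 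On the lower block $|c_l|=(t-l-h)/h^2\le D/h^2$ ($h$ terms), and on the upper block $|c_l|=(h+t-l)/h^2\le(D+h)/h^2\le 2D/h^2$ ($h$ terms, using $h\le D$); therefore $\sum_m c_m^2 \le h(D/h^2)^2 + h(2D/h^2)^2 = 5D^2/h^3$.

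Then, conditionally on $\{N_{i,t-1}=n\}$, the variable $\sum_m c_m\xi_{i,m}$ is $\sigma^2\sum_m c_m^2$-sub-Gaussian, so the Chernoff (Azuma--Hoeffding) bound for sub-Gaussian martingale differences together with $\beta^{\text{\red},h}_i(t,\delta_t)=\sigma D\sqrt{10\log(1/\delta_t)/h^3}$ gives
\[
\Pr\!\left( \big|\widehat{\mu}_i^{\text{\red},h}(t)-\widetilde{\mu}_i^{\text{\red},h}(t)\big| > \beta^{\text{\red},h}_i(t,\delta_t)\ \Big|\ N_{i,t-1}=n \right) \le 2\exp\!\left(-\frac{\big(\beta^{\text{\red},h}_i(t,\delta_t)\big)^2}{2\sigma^2\sum_m c_m^2}\right) \le 2\exp\!\left(-\frac{10\log(1/\delta_t)}{2\cdot 5}\right) = 2\delta_t,
\]
the constant $10$ in $\beta$ being calibrated exactly to cancel the $5$ coming from $\sum_m c_m^2$ (when $h(n)=0$ both sides are $+\infty$ and the event is empty). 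Summing over the at most $t$ admissible values $n=N_{i,t-1}\in\{1,\dots,t-1\}$ and using $\delta_t=t^{-\alpha}$ then yields $\Pr(\,|\widehat{\mu}_i^{\text{\red},h_{i,t}}(t)-\widetilde{\mu}_i^{\text{\red},h_{i,t}}(t)|>\beta^{\text{\red},h_{i,t}}_i(t,\delta_t)\,)\le t\cdot 2\delta_t = 2t^{1-\alpha}$.

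The main obstacle is precisely the interaction between the randomness of $N_{i,t-1}$ (hence of the window $h_{i,t}$) and of the pull instants $t_{i,m}$: one cannot treat the noise combination as having deterministic coefficients — and hence cannot invoke the sub-Gaussian concentration — unless the window is frozen, which is why the conditioning-on-$n$ step (legitimised by $h_{i,t}=h(N_{i,t-1})$) is essential, and it is the ensuing union bound over the $\le t$ values of $n$ that costs the factor $t$, turning the per-$n$ bound $2\delta_t$ into the stated $2t^{1-\alpha}$.
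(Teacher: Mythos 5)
Your algebra is exactly the paper's: the reindexed decomposition into the two blocks of $h$ pulls with deterministic weights, the bound $\sum_m c_m^2\le 5D^2/h^3$ with $D=t-n+h-1$ (using $h\le D$), the calibration of the constant $10$ against the $5$, the handling of $h=0$, and the final count of at most $t$ values of $n$ giving $2t^{1-\alpha}$ all coincide with Lemma~\ref{lemma:firstEstimatorConc}'s proof. The gap is in the probabilistic justification of the per-$n$ bound. You assert that, conditionally on $\{N_{i,t-1}=n\}$, the noises $\xi_{i,m}$ remain a $\sigma^2$-sub-Gaussian martingale difference sequence because the window and weights become deterministic. That is not implied: $N_{i,t-1}$ is determined by the algorithm's adaptive choices, which depend on the past rewards of arm $i$, so the event $\{N_{i,t-1}=n\}$ is correlated with the very noises you want to concentrate, and conditioning on it can bias them (e.g.\ an algorithm that keeps pulling arm $i$ only after seeing large rewards). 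A further sign that conditioning and union bounding are being conflated: if the conditional bound $\Pr(\mathrm{bad}\mid N_{i,t-1}=n)\le 2\delta_t$ did hold for every $n$, the law of total probability would already give $\Pr(\mathrm{bad})\le 2\delta_t$ with no factor $t$ at all, so the concluding "sum over the $\le t$ values of $n$" would be both unnecessary and not the step that produces $2t^{1-\alpha}$.

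The paper's argument never conditions. For each \emph{fixed} $n$ it considers the unconditional event that the fixed-window estimator built from pulls $n-2h(n)+1,\dots,n$ of arm $i$ deviates by more than $\beta_i^{\text{\red},h(n)}(t,\delta_t)$; in the rested setting the law of the reward at the $l$-th pull is determined by $l$ alone, so the pull-indexed noises $X_l=R_{t_{i,l}}-\mu_i(l)$ entering this fixed-$n$ quantity are independent (more carefully, a martingale difference sequence under the optional-skipping argument spelled out in the restless counterpart, Lemma~\ref{lemma:firstEstimatorConcRestless}), and Azuma--Hoeffding (Lemma~\ref{lemma:hoeffding}) with the deterministic weights applies unconditionally, giving $2\delta_t$ per $n$. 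The randomness of $N_{i,t-1}$ (hence of $h_{i,t}$) is then removed by the inclusion $\{\mathrm{bad}\}\subseteq\bigcup_{n=0}^{t-1}\{\mathrm{bad}_n\}$ and a union bound over the unconditional events, which is the step that genuinely costs the factor $t$. Replacing your conditioning claim with this unconditional per-$n$ concentration (justified by the rested structure or optional skipping) repairs the proof; as written, that step does not stand.
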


\begin{proof}
First of all, we observe under the event $\{h_{i,t} = 0\}$, then $\widehat{\mu}_i^{\text{\red},h_{i,t}}(t) = \widetilde{\mu}_i^{\text{\red},h_{i,t}}(t) = \beta^{\text{\red},h_{i,t}}_i(t,\delta_t)= +\infty$. By convening that $(+\infty) - (+\infty) = 0$, we have that $0 >  \beta^{\text{\red},h_{i,t}}_i(t,\delta_t)$ is not satisfied. Thus, we perform the analysis under the event $\{h_{i,t} \ge 1\}$. We first get rid of the dependence on the random number of pulls $N_{i,t-1}$:
%
%Before applying the H\"oeffding's inequality for weighted means (Lemma~\ref{lemma:hoeffding}), we need to get rid of the dependence on the number of pulls $N_{i,t-1}$, which is a random variable as well. To this end, we proceed with a union bound:
\begin{align}
 \Pr & \left(\left| \widehat{\mu}_i^{\text{\red},h_{i,t}}(t) - \widetilde{\mu}_i^{\text{\red},h_{i,t}}(t) \right| >  \beta^{\text{\red}, h_{i,t}}_i(t,\delta_t) \right) \notag \\
 & \qquad\qquad = \Pr \left(\left| \widehat{\mu}_i^{\text{\red},h(N_{i,t-1}) }(t) - \widetilde{\mu}_i^{\text{\red},h(N_{i,t-1}) }(t) \right| >  \beta^{\text{\red},h(N_{i,t-1}) }_i(t,\delta_t) \right) \label{:pp:-000} \\
 & \qquad\qquad \le \Pr \left(\exists n \in \{0,\dots,t-1\} \text{ s.t. } h(n) \ge 1\,:\, \left| \widehat{\mu}_i^{\text{\red},h(n)}(t) - \widetilde{\mu}_i^{\text{\red}, h(n)}(t) \right| >  \beta^{\text{\red},h(n)}_i(t,\delta_t) \right) \notag \\
 & \qquad\qquad \le \sum_{n \in \{0,\dots,t-1\} \,:\, h(n) \ge 1} \Pr \left(\left| \widehat{\mu}_i^{\text{\red},h(n)}(t) - \widetilde{\mu}_i^{\text{\red},h(n)}(t) \right| >  \beta^{\text{\red},h(n)}_i(t,\delta_t) \right), \label{:pp:-010}
\end{align}
where line~\eqref{:pp:-000} derives from the definition of $h_{i,t} = h(N_{i,t-1})$ and  line~\eqref{:pp:-010} follows from a union bound over the possible values of $N_{i,t-1}$. Now, having fixed the value of $n$, we rewrite the quantity to be bounded:
\begin{align*}
h(n)  \left( \widehat{\mu}_i^{\text{\red},h(n)}(t) - \widetilde{\mu}_i^{\text{\red},h(n)}(t) \right) & = \sum_{l=n-h(n)+1}^{n} \left(X_l + (t-l) \frac{X_l - X_{l-h(n)}}{h(n)}\right)\\
&  = \sum_{l=n-h(n)+1}^{n}  \left(1 + \frac{t-l}{h(n)}\right) X_l - \sum_{l=n-h(n)+1}^{n}  \frac{t-l}{h(n)} \cdot X_{l-h(n)},
\end{align*}
where $X_{l} \coloneqq R_{t_{i,l}} - \mu_i(l)$. It is worth noting that we can index $X_l$ with the number of pulls $l$ only as the distribution of $R_{t_{i,l}}$ is fully determined by $l$ and $n$  (that are non-random quantities now) and, consequently, all variables $X_l$ and $X_{l-h(n)}$ are independent.

Now we apply Azuma-Ho\"effding's inequality of Lemma~\ref{lemma:hoeffding} for weighted sums of subgaussian martingale difference sequences. To this purpose, we compute the sum of the square weights:
\begin{align}
	\sum_{l=n-h(n)+1}^{n} \left(1+ \frac{t-l}{h(n)}\right)^2 & + \sum_{l=n-h(n)+1}^{n} \left( \frac{t-l}{h(n)} \right)^2 \nonumber\\
	 & \le h(n) \left(1+ \frac{ t-n+h(n)-1}{h(n)}\right)^2 + h(n)\left( \frac{ t-n+h(n)-1}{h(n)} \right)^2 \label{eq:prim}\\
	 &  \le \frac{ 5(t-n+h(n)-1)^2}{h(n)}, \label{eq:secon}
	 \end{align}
	 where line~\eqref{eq:prim} follows from bounding $t-l \le t-n+h(n)-1$ and line~\eqref{eq:secon} from observing that $\frac{ t-n+h(n)-1}{h(n)} \ge 1$. Thus, we have:
	 \begin{align*}
	 & \Pr \left(\left| \widehat{\mu}_i^{\text{\red},h(n)}(t) - \widetilde{\mu}_i^{\text{\red}, h(n)}(t) \right| >  \beta^{\text{\red},h(n)}_i(t,\delta_t) \right) \\
	 & \qquad\qquad \le \Pr \left(\left| \sum_{l=n-h(n)+1}^{n}  \left(1 + \frac{t-l}{h(n)}\right) X_l - \sum_{l=n-h(n)+1}^{n}  \frac{t-l}{h(n)} \cdot X_{l-h(n)} \right| > h(n) \beta^{\text{\red},h(n)}_i(t,\delta_t) \right)\\
	 & \qquad\qquad 2 \exp \left( - \frac{\left(h(n) \beta^{\text{\red},h(n)}_i(t,\delta_t) \right)^2}{2 \sigma^2 \left( \frac{ 5(t-n+h(n)-1)^2}{h(n)} \right)}\right) = 2 \delta_t.
	 \end{align*}
	 By replacing this result into Equation~\eqref{:pp:-010}, and recalling the value of $\delta_t$, we obtain:
	 \begin{align*}
	 \sum_{n\in\{0,\dots,t-1\} \,:\, h(n) \ge 1}  2 \delta_t \le \sum_{n=0}^{t-1}  2 \delta_t = \sum_{n=0}^{t-1} 2t^{-\alpha} \le 2t^{1-\alpha}.
	 \end{align*}
\end{proof}

\theRegretRestedStochastic*
\begin{proof}
Let us define the good events $\mathcal{E}_t = \bigcap_{i \in [K]} \mathcal{E}_{i,t}$ that correspond to the event in which all confidence intervals hold:
\begin{align*} 
\mathcal{E}_{i,t} \coloneqq \left\{ \left| \widetilde{\mu}_i^{\text{\red},h_{i,t}}(t) - \widehat{\mu}_i^{\text{\red},h_{i,t}}(t)\right| \le \beta^{\text{\red},h_{i,t}}_i(t)\right\} \qquad \forall i \in [T], \,i \in [K]
\end{align*}
We have to analyze the following expression:
\begin{align*}
	R_{\bm{\mu}}(\text{\algrested},T) = \E \left[\sum_{t=1}^T \mu_{i^*}(t) - \mu_{I_t}(N_{i,t})\right] ,
\end{align*}
where $i^* \in \argmax_{i \in [K]}\left\{\sum_{l \in  [T]} \mu_i(l) \right\}$.  We decompose the above expression according to the good events $\mathcal{E}_t$:
\begin{align}
R_{\bm{\mu}}(\text{\algrested},T) & = \sum_{t=1}^T \mathbb{E} \left[\left( \mu_{i^*}(t) - \mu_{I_t}(N_{I_t,t}) \right)\mathds{1}\{\mathcal{E}_t\} \right] + \sum_{t=1}^T \mathbb{E} \left[\left( \mu_{i^*}(t) - \mu_{I_t}(N_{I_t,t}) \right)\mathds{1}\{\lnot\mathcal{E}_t\}\right] \\
& \le \sum_{t=1}^T \mathbb{E} \left[\left( \mu_{i^*}(t) - \mu_{I_t}(N_{I_t,t}) \right)\mathds{1}\{\mathcal{E}_t\}\right] + \sum_{t=1}^T \mathbb{E} \left[\mathds{1}\{\lnot\mathcal{E}_t\}\right], \label{eq:terz}
\end{align}
where we exploited $\mu_{i^*}(t) - \mu_{I_t}(N_{I_t,t}) \le 1$ in line~\eqref{eq:terz}.
Now, we bound the second summation, recalling that $\alpha > 2$:
\begin{align*}
\sum_{t=1}^T \mathbb{E} \left[\mathds{1}\{\lnot\mathcal{E}_t\}\right] = \sum_{t=1}^T \Pr\left(\lnot\mathcal{E}_t\right) \le  1 + \sum_{t=2}^T \Pr\left( \lnot \bigcap_{i \in [K]}  \mathcal{E}_{i,t}\right) =1 + \sum_{t=2}^T \Pr\left( \bigcup_{i \in [K]} \lnot \mathcal{E}_{i,t}\right) \le 1 + \sum_{i \in [K]} \sum_{t=2}^T \Pr\left( \lnot \mathcal{E}_{i,t}\right),
%\sum_{t=1}^T \Pr\left( \lnot(\mathcal{E}^\mu_t \cap \mathcal{E}^\Gamma_t) \right) & \le \sum_{t=1}^{+\infty} \Pr\left( \lnot \mathcal{E}^\mu_t \right) + \Pr\left( \lnot \mathcal{E}^\Gamma_t \right) \le  \sum_{i \in [K]} \sum_{t=1}^{+\infty} 2 t^{-\alpha} \le K + K \int_{x=1}^{+\infty} x^{-\alpha} \de x = \frac{2K\alpha}{\alpha-1}.
\end{align*}
where the first inequality is obtained with $\Pr(\lnot \mathcal{E}_{1}) \le 1$ and the second with a union bound over $[K]$. Recalling $\Pr(\lnot\mathcal{E}_{i,t})$ was bounded in Lemma~\ref{lemma:firstEstimatorConc}, we bound the summation with the integral as in Lemma~\ref{lemma:tecSum} to get:
\begin{align*}
	\sum_{i \in [K]} \sum_{t=2}^T \Pr\left( \lnot \mathcal{E}_{i,t}\right) \le \sum_{i \in [K]} \sum_{t=2}^T 2t^{1-\alpha} \le 2K \int_{x=1}^{+\infty} x^{1-\alpha} \de x = \frac{2K}{\alpha-2}.
\end{align*}
From now on, we proceed the analysis under the good events $\mathcal{E}_t$, recalling that $B_i(t) \equiv \widehat{\mu}_i^{\text{\red}, h_{i,t}}(t) + \beta_i^{\text{\red}, h_{i,t}}(t,\delta_t)$. 
We consider each addendum of the summation and we exploit the optimism, \ie $B_{i^*}(t) \le B_{I_t}(t)$:
\begin{align*}
	\mu_{i^*}(t) - \mu_{I_t}(N_{I_t,t}) + B_{I_t}(t) - B_{I_t}(t) & \le  \min \left\{ 1, \underbrace{\mu_{i^*}(t) - B_{i^*}(t)}_{\le 0} + B_{I_t}(t) - \mu_{I_t}(N_{I_t,t}) \right\} \\
	& \le \min \left\{ 1, B_{I_t}(t) - \mu_{I_t}(N_{I_t,t})\right\}.
\end{align*}
Now, we work on the term inside the minimum:
\begin{align}
	B_{I_t}(t) - \mu_{I_t}(N_{I_t,t}) &= \widehat{\mu}_{I_t}^{\text{\red},h_{I_t,t}}(t)  + \beta^{\text{\red},h_{I_t,t}}_{I_t}(t,\delta_t)-\mu_{I_t}(N_{I_t,t}) \label{:pp:-2001} \\
	%& \le{\widetilde{\mu}_{I_t}^{\text{\red},h_{I_t,t}}(t)}+ {2\beta^{\text{\red},h_{I_t,t}}_{I_t}(t,\delta_t)}  - \mu_{I_t}(N_{i,t}) \label{:pp:-2002} \\
	& \le \underbrace{\widetilde{\mu}_{I_t}^{\text{\red},h_{I_t,t}}(t) - \mu_{I_t}(N_{I_t,t})}_{\text{(a)}}+ \underbrace{2 \beta^{\text{\red},h_{I_t,t}}_{I_t}(t,\delta_t)}_{\text{(b)}},\label{:pp:-2002}
\end{align}	
where line~\eqref{:pp:-2001} follows from the definition of $B_{i}(t)$, and line~\eqref{:pp:-2002} derives from the fact that we are under the good event $\mathcal{E}_t$. We now decompose over the arms and consider one term at a time. We start with (a):
\begin{align}
\sum_{t=1}^T \min\left\{1, \widetilde{\mu}_{I_t}^{\text{\red},h_{I_t,t}}(t) - \mu_{I_t}(N_{I_t,t}) \right\} & \le 2K + \sum_{i \in [K]} \sum_{j=3}^{N_{i,T}}\min \left\{1, \widetilde{\mu}_{i}^{\text{\red},h_{i,t_{i,j}}}(t_{i,j}) - \mu_{i}(j) \right\} \notag\\
& \le 2K + \sum_{i \in [K]} \sum_{j=3}^{N_{i,T}}\min \left\{1, \frac{1}{2} (2t_{i,j} - 2(j-1)+ h_{i,t_{i,j}} - 1)\gamma_{i}((j-1)-2h_{i,t_{i,j}}+1) \right\}  \label{:pr:001}\\
& \le 2K + \sum_{i \in [K]} \sum_{j=3}^{N_{i,T}}\min \left\{1, T \gamma_{i}(j- 2\lfloor \epsilon (j-1) \rfloor) \right\}\label{:pr:002}\\
& \le 2K + \sum_{i \in [K]} \sum_{j=3}^{N_{i,T}}\min \left\{1, T \gamma_{i}(\lfloor (1-2\epsilon) j \rfloor ) \right\}\label{:pr:003}\\
& \le 2K + T^q \sum_{i \in [K]} \sum_{j=3}^{N_{i,T}} \gamma_{i}(\lfloor (1-2\epsilon) j \rfloor )^q \label{:pr:0035}\\
& \le 2K + T^q \left\lceil \frac{1}{1-2\epsilon}  \right\rceil  \sum_{i \in [K]} \sum_{j=\lfloor 3(1-2\epsilon)\rfloor}^{\lfloor (1-2\epsilon) N_{i,T} \rfloor} \gamma_{i}( j ) \label{:pr:004}\\
%& \le 2K + T^q \left\lceil \frac{1}{1-2\epsilon}  \right\rceil  \sum_{i \in [K]}  \Upsilon_{\bm{\mu}}\left( \lfloor (1-2\epsilon) N_{i,T} \rfloor,q \right)\label{:pr:006}\\
& \le 2K + K T^q \left\lceil \frac{1}{1-2\epsilon}  \right\rceil  \Upsilon_{\bm{\mu}}\left( \left\lceil (1-2\epsilon) \frac{T}{K} \right\rceil,q\right),\label{:pr:007}
\end{align}
where line~\eqref{:pr:001} follows from Lemma~\ref{lemma:firstEstimator}, line~\eqref{:pr:002} is obtained by bounding $2t_{i,j} - 2(j-1)+ h_{i,t_{i,j}} - 1 \le 2T$ and exploiting the definition of $h_{i,t} = \lfloor \epsilon N_{i,t-1} \rfloor$, line~\eqref{:pr:003} follows from the observation $j- 2\lfloor \epsilon (j-1) \rfloor  \ge j- 2\epsilon (j-1)  \ge \lfloor (1-2\epsilon) j \rfloor$, line~\eqref{:pr:0035} is obtained from the already exploited inequality $\min\{1,x\} \le \min\{1,x\}^q \le x^q$ for $q \in [0,1]$, line~\eqref{:pr:004} is an application of Lemma~\ref{lemma:sumWithFloor}, and line~\eqref{:pr:007} follows from Lemma~\ref{lemma:boundUpsilonMax} recalling that $\sum_{i\in [K]} \lfloor  (1-2\epsilon) N_{i,T} \rfloor \le (1-2\epsilon) T$.

Let us now move to the concentration term (b). We decompose over the arms as well, taking care of the pulls in which $h_{i,j} = 0$, that are at most $1+\left\lceil \frac{1}{\epsilon} \right\rceil$:
\begin{align}
	\sum_{t=1}^T \min\left\{1, 2 \beta^{\text{\red},h_{I_t,t}}_{I_t}(t,\delta_t)\right\} & \le K + K\left\lceil \frac{1}{\epsilon} \right\rceil + \sum_{i \in [K]} \sum_{j=\left\lceil \frac{1}{\epsilon} \right\rceil + 1}^{N_{i,T}}\min \left\{1, 2\sigma (t_{i,t}- (j-1) + h_{i,t_{i,t}} - 1) \sqrt{\frac{10 \log (t^\alpha)}{h_{i,t_{i,t}}^3}} \right\} \notag.\\
	& = K + K\left\lceil \frac{1}{\epsilon} \right\rceil + \sum_{i \in [K]} \sum_{j=\left\lceil \frac{1}{\epsilon} \right\rceil + 1}^{N_{i,T}}\min \left\{1, 2\sigma T \sqrt{\frac{10 \alpha \log (T)}{ \lfloor \epsilon (j-1) \rfloor^3}} \right\},\label{:pr:008}
%	
%	
%	
%	2 \beta \right\}\sum_{i \in [K]} \sum_{j=3}^{N_{i,T}} \min \left\{1, \text{(c)}\right\} &  = \sum_{i \in [K]} \sum_{j=4}^{N_{i,T}} \min \left\{1, 2\sigma (t_{i,j} - t_{i,\lceil\frac{3}{4}j\rceil}) \sqrt{\frac{10\alpha }{\lfloor\frac{j}{4} \rfloor^3} \log t_{i,j}} \right\} \notag \\
%	&  \le \sum_{i \in [K]} \sum_{j=4}^{N_{i,T}} \min \left\{1, 2\sigma T \sqrt{\frac{384\alpha }{(j-3)^3} \log T} \right\} \label{pp:-3000}\\
%	& \le \sum_{i \in [K]}  \left( j^*(i) - 3 + \sum_{j=j^*(i)+1}^{N_{i,T}} 2\sigma T \sqrt{\frac{384\alpha }{(j-3)^3} \log T} \right) \notag \\
%%	& \le \sum_{i \in [K]} \left( \left\lceil 8  \left( 3 \alpha \sigma^2 T^2 \log T \right)^{\frac{1}{3}} \right\rceil + \int_{x=j^*(i)}^{N_{i,T}} 2\sigma T \sqrt{\frac{384\alpha }{(x-3)^3} \log T}  \de x \right) \notag\\
%	& \le \sum_{i \in [K]} \left( 1 +  24 \left( 3 \alpha \sigma^2 T^2 \log T \right)^{\frac{1}{3}} \right) = \notag  K +  24 K \left( 3 \alpha \sigma^2 T^2 \log T \right)^{\frac{1}{3}}. 
\end{align}
where line~\eqref{:pr:008} follows from bounding $t^\alpha \le T^\alpha$ and from the definition of $h_{i,t} = \lfloor \epsilon N_{i,t-1} \rfloor$. To bound the summation, we compute the minimum integer value $j^* $ (that turns out to be independent of $i$) of $j$ such that the minimum is attained by its second argument:
\begin{align*}
	2\sigma T \sqrt{\frac{10 \alpha \log (T)}{ \lfloor \epsilon (j-1) \rfloor^3}}  \le 1 & \implies \lfloor \epsilon (j-1) \rfloor \ge (2\sigma T)^{\frac{2}{3}} \left( 10 \alpha \log T \right)^{\frac{1}{3}} \\
	& \implies j^* = \left\lceil \frac{1+\epsilon + (2\sigma T)^{\frac{2}{3}} \left( 10 \alpha \log T \right)^{\frac{1}{3}}}{\epsilon} \right\rceil.
\end{align*}
Thus, we have:
\begin{align}
 K + K\left\lceil \frac{1}{\epsilon} \right\rceil + \sum_{i \in [K]} \sum_{j=\left\lceil \frac{1}{\epsilon} \right\rceil + 1}^{N_{i,T}}& \min \left\{1, 2\sigma T \sqrt{\frac{10 \alpha \log (T)}{ \lfloor \epsilon (j-1) \rfloor^3}} \right\}  \le  K + K\left\lceil \frac{1}{\epsilon} \right\rceil + \sum_{i \in [K]} \left(\sum_{j=\lceil \frac{1}{\epsilon}\rceil+1}^{j^*} 1 + \sum_{j=j^*+1}^{N_{i,T}} 2\sigma T \sqrt{\frac{10 \alpha \log (T)}{ \lfloor \epsilon (j-1) \rfloor^3}} \right) \label{:pr:991}\\
& \le K + K\left\lceil \frac{1}{\epsilon} \right\rceil + K\left(j^*-1-\left\lceil \frac{1}{\epsilon} \right\rceil +1 \right) + 2 K \sigma T \sqrt{{10 \alpha \log (T)}} \int_{x=j^*}^{+\infty}  \frac{1}{(\epsilon (x-1) - 1)^\frac{3}{2}} \de x \label{:pr:992} \\
& = K + Kj^* + \frac{4 K \sigma T \sqrt{{10 \alpha \log (T)}}}{ \epsilon \left( \epsilon (j^*-1) - 1 \right)^{\frac{1}{2}}} \notag \\
%& = K \left(2 + \frac{1}{\epsilon} \right)+ \frac{3K}{\epsilon}(2\sigma T)^{\frac{2}{3}} \left( 10 \alpha \log T \right)^{\frac{1}{3}} + \frac{4 K \sigma T \sqrt{{10 \alpha \log (T)}}}{ \epsilon \left( (2\sigma T)^{\frac{2}{3}} \left( 10 \alpha \log T \right)^{\frac{1}{3}} \right)^{\frac{1}{2}}}
& = K \left(3 + \frac{1}{\epsilon} \right)+ \frac{3K}{\epsilon}(2\sigma T)^{\frac{2}{3}} \left( 10 \alpha \log T \right)^{\frac{1}{3}}\label{:pr:993},
\end{align}
where line~\eqref{:pr:991} is obtained by splitting the summation based on the value of $j^*$, line~\eqref{:pr:992} comes from bounding the summation with the integral (Lemma~\ref{lemma:tecSum}), and line~\eqref{:pr:993} follows from substituting the value of $j^*$ and simple algebraic manipulations.
Putting all together, we obtain:
\begin{align*}
R_{\bm{\mu}}(\text{\algrested},T) & \le 1+ \frac{2K}{\alpha-2} + 5K+ \frac{K}{\epsilon}+  \frac{3K}{\epsilon}(2\sigma T)^{\frac{2}{3}} \left( 10 \alpha \log T \right)^{\frac{1}{3}} + K T^q \left\lceil \frac{1}{1-2\epsilon}  \right\rceil  \Upsilon_{\bm{\mu}}\left( \left\lceil (1-2\epsilon) \frac{T}{K} \right\rceil,q\right)\\
& = \BigO \left( K T^q \left\lceil \frac{1}{1-2\epsilon}  \right\rceil  \Upsilon_{\bm{\mu}}\left( \left\lceil (1-2\epsilon) \frac{T}{K} \right\rceil,q\right) +  \frac{K}{\epsilon}(\sigma T)^{\frac{2}{3}} \left(\alpha \log T \right)^{\frac{1}{3}} \right) .
%
%%\\
%& \le \BigO \left( K T^q  \Upsilon_{\bm{\mu}}\left( \left\lceil \frac{T}{K} \right\rceil,q\right) +  \frac{K}{\epsilon}(\sigma T)^{\frac{2}{3}} \left(\alpha \log T \right)^{\frac{1}{3}} \right),
\end{align*}
%where the last inequality is obtained by recalling that $\epsilon \in (0,1/2)$.
%\ft{Qui è rested come algoritmo, giusto?}
%\am{Yes}
\end{proof}

\subsection{Proofs of Section~\ref{sec:rrestless}}\label{apx:prrestless}

\thrRestlessOptimal*
\begin{proof}
	Trivially follows from the fact that the greedy policy at each round $t$ is selecting the largest expected reward, therefore any optimal policy other than the greedy one should select a larger expected reward at least for a single round $t'$, which is in contradiction with the definition of greedy policy.
\end{proof}

\begin{restatable}[]{lemma}{lemmaDetRestless}\label{lemma:lemmaDetRestless}
	For every arm  $i\in [K]$ and every round $t \in [T]$, let us define:
	\begin{align*}
		\overline{\mu}_i^{\text{\rless}}& (t)  \coloneqq \mu_i(t_{i,N_{i,t-1}}) + (t - t_{i,N_{i,t-1}}) \frac{\mu_i(t_{i,N_{i,t-1}}) - \mu_i(t_{i,N_{i,t-1}-1})}{t_{i,N_{i,t-1}} - t_{i,N_{i,t-1}-1}},
	\end{align*}
	if $N_{i,t-1} \ge 2$ else $\overline{\mu}_i^{\text{\rless}} (t) \coloneqq +\infty$. Then, $\overline{\mu}_i^{\text{\rless}}(t) \ge  \mu_i(t)$ and, if $N_{i,t-1} \ge 2$, it holds that:
	\begin{align*}
		\overline{\mu}_i^{\text{\rless}}(t) - \mu_i(t) \le \left(t - t_{i,N_{i,t-1}}\right) \gamma_i(t_{i,N_{i,t-1}-1}) .
	\end{align*}
\end{restatable}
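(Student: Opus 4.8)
The argument mirrors that of Lemma~\ref{lemma:lemmaDetRested} almost verbatim, with pull-counts replaced by the round indices at which arm $i$ was last pulled. To lighten notation, fix $i\in[K]$ and $t\in[T]$ with $N_{i,t-1}\ge 2$, and abbreviate $a\coloneqq t_{i,N_{i,t-1}-1}$ and $b\coloneqq t_{i,N_{i,t-1}}$, so that $a<b\le t-1<t$ and
\begin{align*}
\overline{\mu}_i^{\text{\rless}}(t) = \mu_i(b) + (t-b)\,\frac{\mu_i(b)-\mu_i(a)}{b-a}.
\end{align*}
The first step is to rewrite the slope appearing in the estimator as an average of increments: $\frac{\mu_i(b)-\mu_i(a)}{b-a} = \frac{1}{b-a}\sum_{l=a}^{b-1}\gamma_i(l)$, which is the telescoping identity already used in the main text.

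For \textbf{optimism}, I would start from the exact decomposition $\mu_i(t) = \mu_i(b) + \sum_{l=b}^{t-1}\gamma_i(l)$ (Assumption~\ref{ass:incr}). By concavity (Assumption~\ref{ass:decrDeriv}) the sequence $\gamma_i(\cdot)$ is non-increasing, hence for every $l\ge b$ we have $\gamma_i(l)\le \gamma_i(b-1)\le \frac{1}{b-a}\sum_{j=a}^{b-1}\gamma_i(j) = \frac{\mu_i(b)-\mu_i(a)}{b-a}$, where the middle inequality holds because $\gamma_i(b-1)$ is the smallest term of the average on its right. Summing this bound over $l\in\{b,\dots,t-1\}$ yields $\mu_i(t) \le \mu_i(b) + (t-b)\frac{\mu_i(b)-\mu_i(a)}{b-a} = \overline{\mu}_i^{\text{\rless}}(t)$. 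The case $N_{i,t-1}<2$ is trivial since then $\overline{\mu}_i^{\text{\rless}}(t)=+\infty$.

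For the \textbf{bias bound}, I would compute $\overline{\mu}_i^{\text{\rless}}(t) - \mu_i(t) = \mu_i(b) + (t-b)\frac{\mu_i(b)-\mu_i(a)}{b-a} - \mu_i(t) \le (t-b)\frac{\mu_i(b)-\mu_i(a)}{b-a}$, using $\mu_i(b)\le\mu_i(t)$ from Assumption~\ref{ass:incr} (monotonicity in $t$). Then, again by the non-increasing property of $\gamma_i(\cdot)$, every term in $\frac{1}{b-a}\sum_{l=a}^{b-1}\gamma_i(l)$ is at most $\gamma_i(a)$, so $\frac{\mu_i(b)-\mu_i(a)}{b-a}\le \gamma_i(a)$, giving $\overline{\mu}_i^{\text{\rless}}(t)-\mu_i(t)\le (t-b)\gamma_i(a) = (t-t_{i,N_{i,t-1}})\gamma_i(t_{i,N_{i,t-1}-1})$, as claimed.

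There is essentially no hard step here; the only thing to be careful about is the bookkeeping of the two index shifts (one for the average-vs-pointwise increment comparison, one for projecting into the future), and making sure the concavity comparisons are applied in the correct direction — the average increment over $\{a,\dots,b-1\}$ upper-bounds all future increments (used for optimism) and is itself upper-bounded by $\gamma_i(a)$ (used for the bias).
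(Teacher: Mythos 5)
Your proof is correct and follows essentially the same route as the paper: decompose $\mu_i(t)$ via the telescoping identity, use concavity to bound each future increment by the last observed one and that in turn by the average slope $\frac{\mu_i(b)-\mu_i(a)}{b-a}$ (the paper cites its Lemma~\ref{lemma:boundDeriv} for this step, which you simply inline), and for the bias use $\mu_i(b)\le\mu_i(t)$ plus the bound of the average increment over $\{a,\dots,b-1\}$ by $\gamma_i(a)$. No gaps.
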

\begin{proof}
	Let us consider the following derivation:
	\begin{align}
		\mu_i(t) & = \mu_i(t_{i,N_{i,t-1}}) + \sum_{l=t_{i,N_{i,t-1}}}^{t-1} \gamma_i(l) \notag\\
		& \le \mu_i(t_{i,N_{i,t-1}}) + (t - t_{i,N_{i,t-1}}) \gamma_i(t_{i,N_{i,t-1}}) \label{rls:001}\\
		&  \le \mu_i(t_{i,N_{i,t-1}}) + (t - t_{i,N_{i,t-1}}) \frac{\mu_i(t_{i,N_{i,t-1}}) - \mu_i(t_{i,N_{i,t-1}-1})}{t_{i,N_{i,t-1}}-t_{i,N_{i,t-1}-1}} \eqqcolon \overline{\mu}_i^{\text{\rless}}(t),\label{rls:002}
	\end{align}
	where line~\eqref{rls:001} follows from Assumption~\ref{ass:decrDeriv} and line~\eqref{rls:002} from Lemma~\ref{lemma:boundDeriv}. 
	Moreover, if $N_{i,t-1} \ge 2$, we have:
	\begin{align*}
		 \overline{\mu}_i^{\text{\rless}}(t) - \mu_i(t) & = (t - t_{i,N_{i,t-1}}) \frac{\mu_i(t_{i,N_{i,t-1}}) - \mu_i(t_{i,N_{i,t-1}-1})}{t_{i,N_{i,t-1}}-t_{i,N_{i,t-1}-1}} + \underbrace{\mu_i(t_{i,N_{i,t-1}}) -  \mu_i(t)}_{\leq 0}\\
		 & \le (t - t_{i,N_{i,t-1}}) \frac{\mu_i(t_{i,N_{i,t-1}}) - \mu_i(t_{i,N_{i,t-1}-1})}{t_{i,N_{i,t-1}}-t_{i,N_{i,t-1}-1}} \\
		 %& = \frac{t - t_{i,N_{i,t-1}}}{t_{i,N_{i,t-1}}-t_{i,N_{i,t-1}-1}} \sum_{l=t_{i,N_{i,t-1}-1}}^{t_{i,N_{i,t-1}}-1} \left(\mu_i(l+1) - \mu_i(l) \right)\\
		 & = \frac{t - t_{i,N_{i,t-1}}}{t_{i,N_{i,t-1}}-t_{i,N_{i,t-1}-1}} \sum_{l=t_{i,N_{i,t-1}-1}}^{t_{i,N_{i,t-1}}-1} \gamma_i(l), \\
		 & \le \left(t - t_{i,N_{i,t-1}}\right) \gamma_i(t_{i,N_{i,t-1}-1}),
		% & \le (t - t_{i,N_{i,t-1}}) \gamma_i(t_{i,N_{i,t-1}-1}),
	\end{align*}
	where we employed Assumption~\ref{ass:decrDeriv} in the last line, noting $\frac{1}{t_{i,N_{i,t-1}}-t_{i,N_{i,t-1}-1}} \sum_{l=t_{i,N_{i,t-1}-1}}^{t_{i,N_{i,t-1}}-1} \gamma_i(l) \le \gamma_i(t_{i,N_{i,t-1}-1})$.
\end{proof}

\theRegretDetRestless*
\begin{proof}
We have to analyze the following expression:
\begin{align*}
	R_{\bm{\mu}}(\text{\algrestless},T) = \sum_{t=1}^T \mu_{i^*_t}(t) - \mu_{I_t}(t) ,
\end{align*}
where $i^*_t \in \argmax_{i \in [K]}\left\{\mu_i(t) \right\}$ for all $t \in [T]$. We consider each round at a time, recalling that $B_i(t) \equiv \overline{\mu}_i^{\text{\rless}}(t)$, and using optimism, \ie $B_{i^*_t}(t) \le B_{I_t}(t)$, we have:
\begin{align}
	\mu_{i^*}(t) - \mu_{I_t}(t) + B_{I_t}(t) - B_{I_t}(t) & \le  \min \left\{ 1, \underbrace{\mu_{i^*_t}(t) - B_{i^*_t}(t)}_{\le 0} + B_{I_t}(t) - \mu_{I_t}(t) \right\} \notag \\
	& \le \min \left\{ 1, B_{I_t}(t) - \mu_{I_t}(t)\right\}. \label{eq:boundK}
\end{align}
Now we consider the term inside the minimum, when $N_{I_t,t-1} \ge 2$:
\begin{align}
 B_{I_t}(t) - \mu_{I_t}(t) & = \overline{\mu}_{I_t}^{\text{\rless}}(t) - \mu_{I_t}(t) \\
 % & = \mu_i(t_{i,N_{i,t-1}}) -  \mu_{I_t}(t) + \frac{\mu_i(t_{i,N_{i,t-1}}) - \mu_i(t_{i,N_{i,t-1}-1})}{t_{i,N_{i,t-1}} - t_{i,N_{i,t-1}-1}} \\
 & \le  (t - t_{i,N_{i,t-1}}) \gamma_i(t_{i,N_{i,t-1}-1}), \label{eq:quart}
\end{align}
where to get line~\eqref{eq:quart} we applied Lemma~\ref{lemma:lemmaDetRestless}. Let us plug the expression derived in Equation~\eqref{eq:boundK} and decompose the summation of this term w.r.t.~the $K$ arms:
\begin{align}
	R_{\bm{\mu}}(\text{\algrestless},T) & \le \sum_{t=1}^T  \min \left\{ 1, (t - t_{i,N_{i,t-1}}) \gamma_i(t_{i,N_{i,t-1}-1}),\right\}\notag \\
& = 2K + \sum_{i \in [K]} \sum_{j=3}^{N_{i,T}} \min \left\{ 1, (t_{i,j} - t_{i,j-1}) \gamma_i(t_{i,j-2}) \right\}\notag\\
& \le 2K + \sum_{i \in [K]} \sum_{j=3}^{N_{i,T}} (t_{i,j} - t_{i,j-1})^y \gamma_i(t_{i,j-2})^y \label{:prl:001} \\
& \le 2K + \sum_{i \in [K]} \left(\sum_{j=3}^{N_{i,T}} (t_{i,j} - t_{i,j-1}) \right)^y \left(\sum_{j=3}^{N_{i,T}} \gamma_i(t_{i,j-2})^{\frac{y}{1-y}}\right)^{1-y} \label{:prl:002} \\
& \le 2K + T^y \sum_{i \in [K]} \left(\sum_{j=3}^{N_{i,T}} \gamma_i(j-2)^{\frac{y}{1-y}}\right)^{1-y} \label{:prl:003} \\
%& \le 2K + T^y \sum_{i \in [K]} \Upsilon_{\bm{\mu}}\left(N_{i,T},\frac{y}{1-y} \right)^{1-y} \notag \\
& \le 2K + T^y K^y\left( \sum_{i \in [K]} \sum_{j=3}^{N_{i,T}} \gamma_i(j-2)^{\frac{y}{1-y}} \right)^{1-y} \label{:prl:004}  \\
& \le 2K + T^y K \Upsilon_{\bm{\mu}}\left(\left\lceil \frac{T}{K} \right\rceil,\frac{y}{1-y} \right)^{1-y} \label{:prl:005},
\end{align}
 line~\eqref{:prl:001} follows from  the inequality $\min\{1,x\} \le \min\{1,x\}^{y} \le x^{y}$ for $y \in \left[0, \frac{1}{2}\right]$, line~\eqref{:prl:002} follows from H\"older's inequality with powers $\frac{1}{y} \geq 1$ and $\frac{1}{1-y} \geq 1$ (since $y \in \left[0, \frac{1}{2} \right]$), line~\eqref{:prl:003} is obtained from observing that $\sum_{j=3}^{N_{i,T}} (t_{i,j} - t_{i,j-1}) \le T$ and $\gamma_i(t_{i,j-2}) \le \gamma_i(j-2)$ from Assumption~\ref{ass:decrDeriv}, line~\eqref{:prl:004} follows from Jensen's inequality as $y \in \left[0, \frac{1}{2} \right]$ and observing:
\begin{align*}
\sum_{i \in [K]} \left(\sum_{j=3}^{N_{i,T}} \gamma_i(j-2)^{\frac{y}{1-y}}\right)^{1-y} = K \sum_{i \in [K]} \frac{1}{K} \left(\sum_{j=3}^{N_{i,T}} \gamma_i(j-2)^{\frac{y}{1-y}}\right)^{1-y} \le K^y \left( \sum_{i \in [K]} \sum_{j=3}^{N_{i,T}} \gamma_i(j-2)^{\frac{y}{1-y}} \right)^{1-y},
\end{align*}
where line~\eqref{:prl:005} is obtained from Lemma~\ref{lemma:boundUpsilonMax}.
The final theorem statement is obtained by defining $q := \frac{y}{1-y} \in [0,1]$ and substituting it to the above equation.

%From Lemma~\ref{lemma:lemmaDetRestless}, we have:
%\begin{align*}
%	 B_{I_t}(t) - \mu_{I_t}(t) \le (t - t_{I_t,N_{I_t,t-1}}) \gamma_{I_t}(t_{i,N_{I_t,t-1}}) \le T \gamma_{I_t}(N_{I_t,t-1}+1),
%\end{align*}
%where the last inequality follows from $t_{i,N_{I_t,t-1}} \ge N_{I_t,t-1}+1$. Thus, by decomposing the regret over the arms, we have:
%\begin{align*}
%	R(T) \le K + \sum_{t=K+1}^T \min \left\{1, T\gamma_{I_t}(N_{I_t,t-1}) \right\} = K + \sum_{i \in [K]} \sum_{l = 1}^{N_{i,T}} \min\left\{ 1, T\gamma_{i}(l) \right\} \le K + \Upsilon_{\bm{\mu}}(T).
%\end{align*}
\end{proof}

\paragraph{Estimator Construction for the Stochastic Rising Restless  Setting}
We provide the intuition behind the estimator construction and explain why it differs significantly from the one employed for the deterministic case. We start observing that for every $l\in \{2, \dots, N_{i,t-1}\}$, we have that:
\begin{align*}
	\mu_i(t) = \textcolor{color11}{\underbrace{\mu_i(t_{i,l})}_{\text{(past payoff)}}} + \textcolor{color12}{\underbrace{\sum_{j=t_l}^{t-1} \gamma_i(j)}_{\text{(sum of future increments)}}} \le  \textcolor{color11}{\underbrace{\mu_i(l)}_{\text{(past payoff)}}} + \textcolor{color12}{(t-t_{i,l})\underbrace{\gamma_i(t_{i,l-1})}_{\text{(past increment)}}},
\end{align*}
where the inequality follows from Assumption~\ref{ass:decrDeriv}. Since do not have access to $\gamma_i(t_{i,l-1})$ and we cannot directly estimate it, we need to perform a further bounding step. Specifically, based on Lemma~\ref{lemma:boundDeriv}, we bound for every $l\in \{2, \dots, N_{i,t-1}\}$ and $h \in[ l -1]$:
\begin{align*}
\textcolor{color12}{\underbrace{\gamma_i(t_{i,l-1})}_{\text{(past increment at $t_{i,l}$)}}} \le \textcolor{color12}{\underbrace{\frac{\mu_i(t_{i,l}) - \mu_i(t_{i,l-h})}{t_{i,l} - t_{i,l-h}}}_{\text{(average past increment over $\{t_{i,l-h},\dots,t_{i,l}\}$)}}}.
\end{align*}
We report a first proposal of optimistic approximation of $\mu_i(t)$, \ie $\widetilde{\widetilde{\mu}}_i^{\text{\red},h}(t)$, and the corresponding estimator, \ie $\widehat{\widehat{\mu}}_i^{\text{\red},h}(t)$, that are defined in terms of a window of size $1 \le h \le \lfloor N_{i,t-1}/2 \rfloor$:
\begin{align*}
&  \widetilde{\widetilde{\mu}}_i^{\text{\rless},h} (t)  \coloneqq \frac{1}{h} \sum_{l=N_{i,t-1}-h+1}^{N_{i,t-1}} \Bigg(\textcolor{color11}{\underbrace{\mu_i(t_{i,l})}_{\text{(past payoff)}}} + \textcolor{color12}{(t-t_{i,l}) \underbrace{\frac{\mu_i(t_{i,l}) - \mu_{i}(t_{i,l-h})}{t_{i,l} - t_{i,l-h}}}_{\text{(average past increment)}}}\Bigg),\\
& \widehat{\widehat{\mu}}_i^{\text{\rless},h} (t)  \coloneqq \frac{1}{h} \sum_{l=N_{i,t-1}-h+1}^{N_{i,t-1}} \Bigg(\textcolor{color11}{\underbrace{R_{t_{i,l}}}_{\text{(estimated past payoff)}}} + \textcolor{color12}{(t-t_{i,l}) \underbrace{\frac{R_{t_{i,l}} - R_{t_{i,l-h}}}{t_{i,l} - t_{i,l-h}}}_{\text{(estimated average past increment)}}}\Bigg).
\end{align*}
Unfortunately, this estimator, although intuitive, does not enjoy desirable concentration properties due to the presence of the denominator $t_{i,l} - t_{i,l-h}$ that is inconveniently correlated with the numerator $R_{t_{i,l}} - R_{t_{i,l-h}}$. For this reason, we resort to different estimators, with better concentration properties but larger bias:
\begin{align*}
&  {\widetilde{\mu}}_i^{\text{\rless},h} (t)  \coloneqq \frac{1}{h} \sum_{l=N_{i,t-1}-h+1}^{N_{i,t-1}} \Bigg(\textcolor{color11}{\underbrace{\mu_i(t_{i,l})}_{\text{(past payoff)}}} + \textcolor{color12}{(t-l) \underbrace{\frac{\mu_i(t_{i,l}) - \mu_{i}(t_{i,l-h})}{h}}_{\text{(average past increment)}}}\Bigg),\\
& {\widehat{\mu}}_i^{\text{\rless},h} (t)  \coloneqq \frac{1}{h} \sum_{l=N_{i,t-1}-h+1}^{N_{i,t-1}} \Bigg(\textcolor{color11}{\underbrace{R_{t_{i,l}}}_{\text{(estimated past payoff)}}} + \textcolor{color12}{(t-l) \underbrace{\frac{R_{t_{i,l}} - R_{t_{i,l-h}}}{h}}_{\text{(estimated average past increment)}}}\Bigg).
\end{align*}
These estimators are actually upper-bounds of the previous ones since $t_{i,l} - t_{i,l-h} \ge h$ and $t_{i,l} \ge l$.

The proof is composed of the following steps:
\begin{enumerate}[label=(\roman*)]
	\item Lemma~\ref{lemma:firstEstimatorRestless} shows that $\widetilde{\mu}_i^{\text{\rless},h}(t)$ is an upper-bound for $\mu_i(t)$ and provides a bound to its bias for every value of $h$;
	\item Lemma~\ref{lemma:firstEstimatorConcRestless} analyzes the concentration of $\widehat{\mu}_i^{\text{\rless},h}(t)$ around $\widetilde{\mu}_i^{\text{\rless},h}(t)$ for a specific choice of $\delta_t = t^{-\alpha}$ and when $h_{i,h} \coloneqq h(N_{i,t-1})$ is a function of the number of pulls $N_{i,t-1}$ only;
	\item Theorem~\ref{thr:theRegretRestlessStochastic} bounds the expected regret of \algrestless when $h_{i,h} =  \lfloor \epsilon N_{i,t-1} \rfloor$ for $\epsilon \in (0,1/2)$.
\end{enumerate}

\begin{restatable}[]{lemma}{firstEstimatorRestless}\label{lemma:firstEstimatorRestless}
For every arm $i \in [K]$, every round $t \in [T]$, and window width $1 \le h \le \lfloor N_{i,t-1}/2 \rfloor$, let us define:
%\begin{align*}
%	& \widetilde{\mu}_i^{\text{\rless},h}(t_{i,N_{i,t-1}}) \coloneqq \frac{1}{h} \sum_{l=N_{i,t-1}-h}^{N_{i,t-1}-1} \Bigg(\mu_i(t_{i,l}) \\
%	& \quad + (l - (N_{i,t-1} - h) + 1) ( t_{i,l} - t_{i,l-1}) \frac{\mu_i(t_{i,l}) - \mu_{i}(t_{i,l-h})}{t_{i,l}- t_{i,l-h}}\Bigg),
%\end{align*}
\begin{align*}
	 \widetilde{\mu}_i^{\text{\rless},h}(t) &\coloneq \frac{1}{h} \sum_{l=N_{i,t-1}-h+1}^{N_{i,t-1}} \bigg(\mu_i(t_{i,l}) + ( t-l)\frac{\mu_i(t_{i,l}) - \mu_{i}(t_{i,l-h})}{h} \bigg) ,
\end{align*}
otherwise if $h=0$, we set $\widetilde{\mu}_i^{\text{\rless},h}(t)  \coloneqq +\infty$. Then, $\widetilde{\mu}_i^{\text{\rless},h}(t) \ge \mu_i(t_{i,N_{i,t-1}})$ and, if $N_{i,t-1} \ge 2$ it holds that:
\begin{align*}
	 \widetilde{\mu}_i^{\text{\rless},h }  (t)   - \mu_i(t) \le \frac{(2t-2N_{i,t-1}+h-1)(t_{i,N_{i,t-1}} - t_{i,N_{i,t-1}-2h+1})}{2h} \gamma_{i}(t_{i,N_{i,t-1}-2h+1}).
\end{align*}
\end{restatable}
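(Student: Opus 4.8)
The plan is to follow the same template as Lemma~\ref{lemma:firstEstimator} (the rested analogue), carefully bookkeeping the random pull epochs $t_{i,l}$; throughout write $N \coloneqq N_{i,t-1}$. \emph{Optimism.} Fix any $l \in \{N-h+1,\dots,N\}$. By Assumption~\ref{ass:incr} I would telescope $\mu_i(t) = \mu_i(t_{i,l}) + \sum_{j=t_{i,l}}^{t-1}\gamma_i(j)$, and then, since $\gamma_i(\cdot)$ is non-increasing (Assumption~\ref{ass:decrDeriv}), bound the tail by its largest term, $\sum_{j=t_{i,l}}^{t-1}\gamma_i(j) \le (t-t_{i,l})\,\gamma_i(t_{i,l})$. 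Next I would invoke Lemma~\ref{lemma:boundDeriv} in the form $\gamma_i(t_{i,l}) \le \frac{\mu_i(t_{i,l}) - \mu_i(t_{i,l-h})}{t_{i,l}-t_{i,l-h}}$: the right-hand side is the average of $\gamma_i$ over the indices $\{t_{i,l-h},\dots,t_{i,l}-1\}$, each smaller than $t_{i,l}$, so by concavity each summand is at least $\gamma_i(t_{i,l})$. This yields $\mu_i(t) \le \mu_i(t_{i,l}) + (t-t_{i,l})\frac{\mu_i(t_{i,l})-\mu_i(t_{i,l-h})}{t_{i,l}-t_{i,l-h}}$, which is exactly the first-proposal summand $\widetilde{\widetilde{\mu}}_i^{\text{\rless},h}$. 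I would then relax it to the summand defining $\widetilde{\mu}_i^{\text{\rless},h}$ using the two elementary facts $t_{i,l} \ge l$ (so $t-t_{i,l} \le t-l$) and $t_{i,l}-t_{i,l-h} \ge h$ (consecutive pulls are at least one round apart), both of which only enlarge the bound; averaging over $l$ then gives $\widetilde{\mu}_i^{\text{\rless},h}(t) \ge \mu_i(t) \ge \mu_i(t_{i,N})$, the last step by monotonicity since $t_{i,N} \le t$.

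\emph{Bias bound.} Assuming $N \ge 2$, I would subtract $\mu_i(t)$ from the definition of $\widetilde{\mu}_i^{\text{\rless},h}(t)$ and discard the ``past payoff'' part via $\mu_i(t_{i,l}) \le \mu_i(t)$ for all $l \le N$ (since $t_{i,l} \le t_{i,N} \le t-1$), obtaining $\widetilde{\mu}_i^{\text{\rless},h}(t) - \mu_i(t) \le \frac{1}{h^2}\sum_{l=N-h+1}^{N}(t-l)\bigl(\mu_i(t_{i,l}) - \mu_i(t_{i,l-h})\bigr)$. Then I would bound $\mu_i(t_{i,l}) - \mu_i(t_{i,l-h}) = \sum_{j=t_{i,l-h}}^{t_{i,l}-1}\gamma_i(j) \le (t_{i,l}-t_{i,l-h})\,\gamma_i(t_{i,l-h})$ by concavity (largest term at the smallest index), and replace $t_{i,l}-t_{i,l-h}$ and $\gamma_i(t_{i,l-h})$ by their worst cases over the window: from $l-h \ge N-2h+1$ we get $t_{i,l-h} \ge t_{i,N-2h+1}$, hence $t_{i,l}-t_{i,l-h} \le t_{i,N}-t_{i,N-2h+1}$ and $\gamma_i(t_{i,l-h}) \le \gamma_i(t_{i,N-2h+1})$. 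What is left is the arithmetic identity $\sum_{l=N-h+1}^{N}(t-l) = \tfrac{h}{2}(2t-2N+h-1)$, and substituting it (with $N = N_{i,t-1}$) produces exactly the claimed inequality.

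\emph{Main obstacle.} There is no deep difficulty here; the delicate part is the bookkeeping with the random pull epochs. The deliberate relaxations $t-t_{i,l}\le t-l$ and $t_{i,l}-t_{i,l-h}\ge h$ are precisely what converts the inconveniently correlated quantity $\widetilde{\widetilde{\mu}}_i^{\text{\rless},h}$ into the estimator $\widetilde{\mu}_i^{\text{\rless},h}$ whose empirical counterpart concentrates (needed later in Lemma~\ref{lemma:firstEstimatorConcRestless}), at the price of the extra factor $t_{i,N_{i,t-1}}-t_{i,N_{i,t-1}-2h+1}$ in the bias that is absent in the rested case. The one thing to check throughout is that every monotonicity/concavity step is applied in the sign-preserving direction --- in particular that $\gamma_i$ ends up evaluated at the earliest relevant pull epoch $t_{i,N_{i,t-1}-2h+1}$; beyond that it is routine telescoping, the ``average versus extreme term'' consequence of concavity, and summing an arithmetic progression.
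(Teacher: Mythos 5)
Your proposal is correct and follows essentially the same route as the paper's proof: telescoping plus concavity to get the intermediate bound with denominator $t_{i,l}-t_{i,l-h}$, the relaxations $t-t_{i,l}\le t-l$ and $t_{i,l}-t_{i,l-h}\ge h$ for the optimism part, and, for the bias, dropping the past-payoff terms, bounding $\mu_i(t_{i,l})-\mu_i(t_{i,l-h})\le (t_{i,l}-t_{i,l-h})\gamma_i(t_{i,l-h})$, taking worst cases over the window, and summing the arithmetic progression. The only cosmetic difference is that you apply Lemma~\ref{lemma:boundDeriv} directly at $\gamma_i(t_{i,l})$ while the paper first passes through $\gamma_i(t_{i,l}-1)$ and the averaged increments; both are the same concavity step.
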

\begin{proof}
	Let us start by observing the following equality holding for every $l \in \{2,\dots,N_{i,t-1}\}$:
	\begin{align*}
		\mu_i(t) = \mu_i(t_{i,l}) + \sum_{j=t_{i,l}}^{t-1} \gamma_i(j).
	\end{align*}
	By averaging over a window of length $h$, we obtain:
	\begin{align}
		\mu_i(t) & = \frac{1}{h} \sum_{l=N_{i,t-1}-h+1}^{N_{i,t-1}} \left(\mu_i(t_{i,l}) +  \sum_{j=t_{i,l}}^{t-1} \gamma_i(j) \right) \notag\\
		& \le \frac{1}{h} \sum_{l=N_{i,t-1}-h+1}^{N_{i,t-1}} \left(\mu_i(t_{i,l}) +  (t - t_{i,l} )\gamma_i(t_{i,l}-1) \right) \label{:prs:3001}\\
		& \le \frac{1}{h} \sum_{l=N_{i,t-1}-h+1}^{N_{i,t-1}} \left(\mu_i(t_{i,l}) +  \frac{t - t_{i,l} }{t_{i,l} - t_{i,l-h}} \sum_{j={t_{i,l-h}}}^{t_{i,l}-1} \gamma_i(j) \right)  \label{:prs:3002}\\
		& = \frac{1}{h} \sum_{l=N_{i,t-1}-h+1}^{N_{i,t-1}} \left(\mu_i(t_{i,l}) +  (t - t_{i,l})\frac{\mu_i(t_{i,l}) - \mu_i(t_{i,l-h}) }{t_{i,l} - t_{i,l-h}}  \right)  \notag \\
		& \le \frac{1}{h} \sum_{l=N_{i,t-1}-h+1}^{N_{i,t-1}} \left(\mu_i(t_{i,l}) +  (t - l)\frac{\mu_i(t_{i,l}) - \mu_i(t_{i,l-h}) }{h}  \right) \eqqcolon \widetilde{\mu}_i^{\text{\rless},h}(t),  \label{:prs:3003}
	\end{align}
	where lines~\eqref{:prs:3001} and \eqref{:prs:3002} follow from Assumption~\ref{ass:decrDeriv}, and line~\eqref{:prs:3003} is obtained from observing that $t_{i,l} \ge l$ and $t_{i,l}- t_{i,l-h} \ge h$.
	%We will actually use a looser but more convenient bound that is obtained starting from Equation~\eqref{p:0021}:
%	\begin{align*}
%	\frac{1}{h} \sum_{l=N_{i,t-1}-h+1}^{N_{i,t-1}} & \left(\mu_i(t_{i,l}) +  \sum_{m=l}^{N_{i,t-1}-1}( t_{i,m+1} - t_{i,m}) \gamma_i(t_{i,m}) \right) \\
%	& \le \frac{1}{h} \sum_{l=N_{i,t-1}-h+1}^{N_{i,t-1}} \left(\mu_i(t_{i,l}) +  \gamma_i(t_{i,l}) \sum_{m=l}^{N_{i,t-1}-1}( t_{i,m+1} - t_{i,m})  \right)  \\
%	& = \frac{1}{h} \sum_{l=N_{i,t-1}-h+1}^{N_{i,t-1}} \left(\mu_i(t_{i,l}) + ( t_{i,N_{i,t-1}} - t_{i,l}) \gamma_i(t_{i,l})  \right) \\
%	& \le \frac{1}{h} \sum_{l=N_{i,t-1}-h+1}^{N_{i,t-1}} \left(\mu_i(t_{i,l}) + ( t_{i,N_{i,t-1}} - t_{i,l})\frac{\mu_i(t_{i,l}) - \mu_{i}(t_{i,l-h})}{t_{i,l}- t_{i,l-h}} \right)  \eqqcolon \widetilde{\mu}_i^{\text{\rless},h}(t_{N_{i,t-1}}),\notag
%	\end{align*}
	%having exploited again Assumption~\ref{ass:decrDeriv} and Lemma~\ref{lemma:boundDeriv}.
	%T\ft{non mi è chiaro se quella sopra è una formulazione alternativa e servono entrambe o se una delle due non la usiamo
	 Concerning the bias, when $N_{i,t-1} \ge 2$, we have:
	 \begin{align}
	 \widetilde{\mu}_i^{\text{\rless},h}(t)- \mu_i(t) & = \frac{1}{h} \sum_{l=N_{i,t-1}-h+1}^{N_{i,t-1}} \left( \mu_i(t_{i,l}) +  (t-l)\frac{\mu_i(t_{i,l}) - \mu_{i}(t_{i,l-h})}{h}  \right) - \mu_i(t)\notag \\
	 & \le \frac{1}{h} \sum_{l=N_{i,t-1}-h+1}^{N_{i,t-1}}   (t-l)\frac{\mu_i(t_{i,l}) - \mu_{i}(t_{i,l-h})}{h}   \label{:prs:4001}\\
	  &= \frac{1}{h} \sum_{l=N_{i,t-1}-h+1}^{N_{i,t-1}}   (t-l)\frac{\mu_i(t_{i,l}) - \mu_{i}(t_{i,l-h})}{t_{i,l} - t_{i,l-h}} \cdot \frac{t_{i,l} - t_{i,l-h}}{h}   \notag\\
	  &\le \frac{1}{h} \sum_{l=N_{i,t-1}-h+1}^{N_{i,t-1}}   (t-l) \gamma_{i}(t_{i,l-h}) \cdot \frac{t_{i,l} - t_{i,l-h}}{h}   \label{:prs:4004}\\
	  &\le \frac{t_{i,N_{i,t-1}} - t_{i,N_{i,t-1}-2h+1}}{h^2} \gamma_{i}(t_{i,N_{i,t-1}-2h+1}) \sum_{l=N_{i,t-1}-h+1}^{N_{i,t-1}}   (t-l)   \label{:prs:4005}\\
	  & = \frac{(2t-2N_{i,t-1}+h-1)(t_{i,N_{i,t-1}} - t_{i,N_{i,t-1}-2h+1})}{2h} \gamma_{i}(t_{i,N_{i,t-1}-2h+1}), \label{:prs:4006}
	 \end{align}
	 where line~\eqref{:prs:4001} follows from observing that $\mu_i(t_{i,l}) \le \mu_i(t)$, line~\eqref{:prs:4004} derives from Assumption~\ref{ass:decrDeriv} and bounding $\frac{\mu_i(t_{i,l}) - \mu_{i}(t_{i,l-h})}{t_{i,l} - t_{i,l-h}} \le \gamma_{i}(t_{i,l-h})$, line~\eqref{:prs:4005} is obtained by bounding $t_{i,l} - t_{i,l-h} \le t_{i,N_{i,t-1}} - t_{i,N_{i,t-1}-2h+1}$ and $\gamma_{i}(t_{i,l-h}) \le \gamma_{i}(t_{i,N_{i,t-1}-2h+1})$, and line~\eqref{:prs:4006} follows from computing the summation.
\end{proof}

\begin{restatable}[]{lemma}{firstEstimatorConc}\label{lemma:firstEstimatorConcRestless}
For every arm $i \in [K]$, every round $t \in [T]$, and window width $1 \le h \le \lfloor N_{i,t-1} / 2\rfloor$, let us define:
\begin{align*}
	 & \widehat{\mu}_i^{\text{\rless},h} (t)  \coloneqq \frac{1}{h} \sum_{l=N_{i,t-1}-h+1}^{N_{i,t-1}} \left(R_{t_{i,l}} + (t-l) \frac{R_{t_{i,l}} - R_{t_{i,l-h}}}{h}\right),\\
	 & \beta^{\text{\rless},h}_i(t,\delta)\coloneqq \sigma  (t-N_{i,t-1}+h-1) \sqrt{ \frac{ 10  \log \frac{1}{\delta} }{h^3} },
\end{align*}
otherwise if $h=0$, we set $\widehat{\mu}_i^{\text{\rless},h} (t) \coloneqq +\infty$ and $\beta^{\text{\rless},h}_i(t,\delta)\coloneqq +\infty$.
Then, if the window size depends on the number of pulls only $h_{i,t} = h(N_{i,t-1}) $ and if $\delta_t = t^{-\alpha}$ for some $\alpha > 2$, it holds for every round  $t \in [T]$ that:
\begin{align*}
	\Pr \left(\left| \widehat{\mu}_i^{\text{\rless},h_{i,t}}(t) - \widetilde{\mu}_i^{\text{\rless},h_{i,t}}(t) \right| >  \beta^{\text{\rless},h_{i,t}}_i(t,\delta_t) \right) \le 2t^{1-\alpha}.
\end{align*}
\end{restatable}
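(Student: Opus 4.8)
The plan is to follow the proof of Lemma~\ref{lemma:firstEstimatorConc} essentially line by line, because $\widehat{\mu}_i^{\text{\rless},h}$ and $\beta_i^{\text{\rless},h}$ have the very same functional form as their rested counterparts, and, by Lemma~\ref{lemma:firstEstimatorRestless}, subtracting $\widetilde{\mu}_i^{\text{\rless},h}(t)$ from $\widehat{\mu}_i^{\text{\rless},h}(t)$ collapses all the payoff terms and leaves only a weighted combination of reward noises. First I would dispose of the degenerate case $h_{i,t}=0$ (all quantities are $+\infty$, so with the convention $(+\infty)-(+\infty)=0$ the event is vacuous), and then remove the randomness carried by $N_{i,t-1}$ via a union bound over its possible values $n\in\{0,\dots,t-1\}$, exactly as in Equations~\eqref{:pp:-000}--\eqref{:pp:-010}, reducing the claim to a fixed-window statement. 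For a fixed $n$ with $h(n)\ge 1$ I would then write
\begin{align*}
h(n)\big(\widehat{\mu}_i^{\text{\rless},h(n)}(t)-\widetilde{\mu}_i^{\text{\rless},h(n)}(t)\big) = \sum_{l=n-h(n)+1}^{n}\!\Big(1+\tfrac{t-l}{h(n)}\Big)X_l \;-\; \sum_{l=n-h(n)+1}^{n}\tfrac{t-l}{h(n)}\,X_{l-h(n)},
\end{align*}
where now $X_l \coloneqq R_{t_{i,l}} - \mu_i(t_{i,l})$.

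The one genuine difference from the rested argument is the probabilistic status of the variables $X_l$. In the rested case $R_{t_{i,l}}\sim\nu_i(l)$, so the $X_l$ are independent; in the restless case $R_{t_{i,l}}\sim\nu_i(t_{i,l})$ with $t_{i,l}$ a random round, so independence fails. Instead I would argue that, with respect to the filtration $(\mathcal{F}_l)_l$ generated by the history up to and including the $l$-th pull of arm $i$, the sequence $(X_l)_l$ is a $\sigma^2$-subgaussian martingale difference sequence: conditionally on $\mathcal{F}_{l-1}$ the round $t_{i,l}$ is determined, and $R_{t_{i,l}}$ is drawn afresh from $\nu_i(t_{i,l})$, so $\E[X_l\mid\mathcal{F}_{l-1}]=0$ and the subgaussianity assumption of Section~\ref{sec:problemSetting} holds conditionally. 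Since the value $n$ is fixed, the coefficients $1+(t-l)/h(n)$ and $(t-l)/h(n)$ are deterministic constants and hence (trivially) predictable — this is exactly why the estimator divides by $h$ and weights by $t-l$ rather than dividing by $t_{i,l}-t_{i,l-h}$ and weighting by $t-t_{i,l}$, as the $\widehat{\widehat{\mu}}$ variant would. Consequently Lemma~\ref{lemma:hoeffding} applies to this weighted martingale difference sequence exactly as it did in the rested proof.

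From here the computation is identical to the rested case: the sum of squared coefficients, $\sum_{l=n-h(n)+1}^{n}\big[(1+\tfrac{t-l}{h(n)})^2+(\tfrac{t-l}{h(n)})^2\big]$, is bounded by $5(t-n+h(n)-1)^2/h(n)$ by bounding $t-l\le t-n+h(n)-1$ and using $(t-n+h(n)-1)/h(n)\ge 1$ (Equations~\eqref{eq:prim}--\eqref{eq:secon}); plugging this into Lemma~\ref{lemma:hoeffding} with the threshold $h(n)\,\beta_i^{\text{\rless},h(n)}(t,\delta_t)$ gives a per-$n$ failure probability of $2\delta_t$, and summing over the at most $t$ admissible values of $n$ together with $\delta_t=t^{-\alpha}$ yields $\sum_{n=0}^{t-1}2\delta_t\le 2t^{1-\alpha}$, as claimed.

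The main obstacle is the second paragraph: one must verify that the particular (looser) estimator chosen makes every coefficient non-random, so that the independence argument of the rested case can be replaced cleanly by the martingale-difference property of the reward noises — this is the technical crux that motivates the estimator's definition. A minor additional care point, also implicit in the rested proof, is that the $X_l$ should be taken to be defined for every $l$ (e.g.\ by extending the reward stream of arm $i$ with phantom pulls beyond round $T$), so that the union bound over $n\in\{0,\dots,t-1\}$ is meaningful even when arm $i$ is pulled fewer than $n$ times within the horizon.
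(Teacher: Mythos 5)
Your proposal follows the paper's proof essentially step for step: the degenerate-window case, the union bound over $n=N_{i,t-1}$, the reduction to a weighted sum of reward noises with deterministic coefficients, the same weight bound $5(t-n+h(n)-1)^2/h(n)$, Azuma--H\"oeffding (Lemma~\ref{lemma:hoeffding}), and the final summation over $n$ with $\delta_t=t^{-\alpha}$. The one place where you are imprecise is exactly the step you yourself flag as the crux: you justify the martingale-difference property of $(X_l)_l$ by asserting that ``conditionally on $\mathcal{F}_{l-1}$ the round $t_{i,l}$ is determined,'' where $\mathcal{F}_{l-1}$ is the history up to and including the $(l-1)$-th pull of arm $i$. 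That measurability claim is false: $t_{i,l}$ depends on the rewards observed from the \emph{other} arms between the $(l-1)$-th and the $l$-th pulls of arm $i$, which are not contained in $\mathcal{F}_{l-1}$. The paper resolves this point with a Doob optional-skipping argument on the round-indexed filtration $\mathcal{F}_{\tau-1}=\sigma(I_1,R_1,\dots,I_{\tau-1},R_{\tau-1},I_\tau)$: the selector/weight process $\epsilon_s Y_s$ is predictable, $X_s=R_s-\mu_i(s)$ is a conditionally $\sigma^2$-subgaussian martingale difference, and $\{t_{i,j}=\tau\}\in\mathcal{F}_{\tau-1}$, so the sum skipped to the pull times inherits the Azuma--H\"oeffding bound. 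Your version can be repaired by a tower argument (condition on the full history up to round $t_{i,l}-1$ together with the event $I_{t_{i,l}}=i$, under which $t_{i,l}$ is fixed and $R_{t_{i,l}}\sim\nu_i(t_{i,l})$, then integrate back to the pull-indexed filtration), which yields exactly the conclusion you use; but as written the justification is incorrect even though the conclusion is right. Everything else, including your observation that the deterministic weights $(t-l)/h$ — rather than the data-dependent ones appearing in the $\widehat{\widehat{\mu}}$ variant — are what make the concentration argument possible, matches the paper's reasoning.
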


\begin{proof}
	Under the event $\{ h_{i,t} = 0\}$, we have that $\widehat{\mu}_i^{\text{\rless},h_{i,t}} (t)= \widetilde{\mu}_i^{\text{\rless},h_{i,t}} (t) = \beta^{\text{\rless},h_{i,t}}_i(t,\delta) = +\infty$ and, under the convention $(+\infty) - (+\infty) = 0$ the event $0 > \beta^{\text{\rless},h}_i(t,\delta)$ does not hold. Therefore, we conduct the proof under the event $\{ h_{i,t} \ge 1\}$. Hence:
	\begin{align}
	\Pr & \left(\left| \widehat{\mu}_i^{\text{\rless},h_{i,t}}(t) - \widetilde{\mu}_i^{\text{\rless},h_{i,t}}(t) \right| >  \beta^{\text{\rless},h_{i,t}}_i(t,\delta_t) \right) \\
	& \qquad\qquad =  \Pr \left(\left| \widehat{\mu}_i^{\text{\rless},h(N_{i,t-1})}(t) - \widetilde{\mu}_i^{\text{\rless},h(N_{i,t-1})}(t) \right| >  \beta^{\text{\rless},h(N_{i,t-1})}_i(t,\delta_t) \right) \label{pah:01} \\
	& \qquad\qquad\le   \Pr \left( \exists n \in \{0,\dots, t-1\} \text{ s.t. } h(n) \ge 1\,:\, \left| \widehat{\mu}_i^{\text{\rless},h(n)}(t) - \widetilde{\mu}_i^{\text{\rless},h(n)}(t) \right| >  \beta^{\text{\rless},h(n)}_i(t,\delta_t) \right) \notag \\
	& \qquad\qquad\le \sum_{n\in\{0,\dots, t-1\} \,:\, h(n) \ge 1} \Pr \left(\left| \widehat{\mu}_i^{\text{\rless},h(n)}(t) - \widetilde{\mu}_i^{\text{\rless},h(n)}(t) \right| >  \beta^{\text{\rless},h(n)}_i(t,\delta_t) \right),\label{pah:02}
	\end{align}
	where line~\eqref{pah:01} follows from the definition of $h_{i,t} = h(N_{i,t-1})$, and line~\eqref{pah:02} derives from a union bound over $n$. Differently from the rested case, in which the distribution of all random variable involved is fully determined having fixed $N_{i,t-1}$, in the restless case this is no longer the case. Indeed, the distribution of the rewards does not depend on the number of pulls, but on the round in which the arm was pulled. Thus, we need a more articulated argument. We start rewriting the estimator with a summation over rounds:
	\begin{align}\label{eq:eqeqeq}
	 h(n) \left( \widehat{\mu}_i^{\text{\rless},h(n)}(t) - \widetilde{\mu}_i^{\text{\rless},h(n)}(t) \right) & =  \sum_{l=n-h(n)+1}^{n} \left(X_{t_{i,l}} + (t-l) \frac{X_{t_{i,l}} - X_{t_{i,l-h(n)}}}{h(n)}\right) =  \sum_{s=1}^{t-1} \epsilon_s Y_s X_s ,
	 \end{align}
	 where:
	 \begin{align*}
	 & \epsilon_{s} = \mathds{1}\{I_s = i\}, \\
	 & Y_s =  \left( \mathds{1} \left\{ N_{i,s} \in \left\{ n - h(n)+1 , \dots, n \right\} \right\} \left(1 + \frac{t - N_{i,s}}{h(n)} \right) - \mathds{1} \left\{ N_{i,s} \in \left\{ n - 2h(n)+1 , \dots, n-h(n) \right\} \right\} \frac{t - N_{i,s} - h(n)}{h(n)}  \right), \\
	 & X_{s} = R_s - \mu_{i}(s).
	\end{align*}	 
	The rationale behind this decomposition is to use random variable $\epsilon_s$ to select the pulls of arm $i$, $Y_s$ to define the quantity by which $X_s$  is multiplied. In particular, if the pull belongs to the set of the most recent $h(n)$ pulls, \ie $N_{i,s} \in \left\{ n - h(n)+1 , \dots, n \right\}$, we multiply $X_s$ by the constant $1 + \frac{t - N_{i,s}}{h(n)}$. Instead, if the pull belongs to less recent $h(n)$ pulls, \ie $N_{i,s} \in \left\{ n - 2h(n)+1 , \dots, n-h(n) \right\} $, we multiply $X_s$ by $\frac{t - N_{i,s} - h(n)}{h(n)} $. Now, we define the sequence of random times at which arm $i$ was pulled for the $j$-th time:
	\begin{align*}
		t_{i,j} \coloneqq \min_{t \in [T]} \left\{ N_{i,t} = j \right\}, \quad j \in [n],
	\end{align*}
	and we introduce the random variables $\widetilde{X}_j \coloneqq X_{t_{i,j}}$ and $\widetilde{Y}_{j} \coloneqq Y_{t_{i,j}}$. To prove that $ \widetilde{Y}_{j} \widetilde{X}_j $ is a martingale difference sequence \wrt to the filtration it generates, we apply a Doob's \emph{optional skipping} argument~\cite{doob1953stochastic, BubeckMSS08}. We introduce the filtration $\mathcal{F}_{\tau-1} = \sigma \left(I_1,R_1, \dots, I_{\tau-1},R_{\tau-1},I_\tau \right)$ and we need to show that: (i) $Z_\tau = \sum_{s=1}^{\tau} \epsilon_s Y_s X_s $ is a martingale, and (ii) $\{t_{i,j} = \tau\} \in \mathcal{F}_{\tau-1}$ for $\tau \in [t-1]$. Concerning (i), we have:
		\begin{align*}
		\E\left[ Z_\tau | \mathcal{F}_{\tau-1} \right] = Z_{\tau-1} + \epsilon_\tau Y_\tau  \E\left[ X_\tau | \mathcal{F}_{\tau-1} \right] = Z_{\tau-1},
		\end{align*}
		since $\epsilon_\tau Y_\tau$ is fully determined by $\mathcal{F}_{\tau-1}$ and either $\epsilon_\tau = 0$ or $I_\tau = i$, thus, $\epsilon_\tau\E\left[ X_\tau | \mathcal{F}_{\tau-1} \right] = \epsilon_\tau \E\left[ R_\tau - \mu_{i}(\tau) | \mathcal{F}_{\tau-1}\right] = 0$. Concerning (ii), $\{t_{i,j} = \tau\} \in \mathcal{F}_{\tau-1}$ is trivially verified. We recall that, since $\widetilde{Y}_{j} = Y_{t_{i,j}}$ we have that $N_{i,t_{i,j}} = j$:
		\begin{align*}
			\widetilde{Y}_{j} = \left( \mathds{1} \left\{ j \in \left\{ n - h(n)+1 , \dots, n \right\} \right\} \left(1 + \frac{t - j}{h(n)} \right) - \mathds{1} \left\{ j \in \left\{ n - 2h(n)+1 , \dots, n-h(n) \right\} \right\} \frac{t - j - h(n)}{h(n)}  \right).
		\end{align*}
		From which, by substituting into Equation~\eqref{eq:eqeqeq} and properly solving the indicator functions, we have:
		\begin{align*}
		\sum_{j=1}^n \widetilde{X}_j \widetilde{Y}_{j} = \sum_{j=n-h(n)+1}^n  \left( 1+ \frac{t-j}{h(n)} \right)\widetilde{X}_j - \sum_{j=n-2h(n)+1}^{n-h(n)}  \frac{t-j}{h(n)} \cdot \widetilde{X}_j . 
		\end{align*}
%		\ft{Non è tutto chiaro}
%		\am{Ho aggiunto qualche nota in piu'. C'era quale pare in particolare non chiara?}
		We compute the square of the weights and apply a derivation similar to that of Lemma~\ref{lemma:firstEstimatorConc}:
		\begin{align*}
			\sum_{j=n-h(n)+1}^n \left( 1+ \frac{t-j}{h(n)} \right)^2 + \sum_{j=n-2h(n)+1}^{n-h(n)} \left(\frac{t-j}{h(n)}\right)^2 \le \frac{5(t-n+h(n)-1)^2}{h(n)}.
		\end{align*}
		Thus, we can now apply Azuma-H\"oeffding's inequality (Lemma~\ref{lemma:hoeffding}):
		\begin{align*}
		\Pr & \left(\left| \widehat{\mu}_i^{\text{\rless},h(n)}(t) - \widetilde{\mu}_i^{\text{\rless},h(n)}(t) \right| >  \beta^{\text{\rless},h(n)}_i(t,\delta_t) \right) \\
		& \qquad\qquad= \Pr \left( \left| \sum_{s=1}^t \epsilon_s X_s Y_s \right| > h(n) \beta^{\text{\rless},h(n)}_i(t,\delta_t)  \right) \\
		& \qquad\qquad=  \Pr \left( \left| \sum_{j=1}^n \widetilde{X}_j \widetilde{Y}_{j} \right| > h(n) \beta^{\text{\rless},h(n)}_i(t,\delta_t)  \right) \\
		&  \qquad\qquad \le 2 \exp \left( - \frac{\left(h(n)\beta^{\text{\red},h(n)}_i(t,\delta_t) \right)^2}{2 \sigma^2 \left( \frac{ 5(t-n+h(n)-1)^2}{h(n)} \right)}\right) = 2 \delta_t.
		\end{align*}
		By replacing into Equation~\eqref{pah:02} and summing over $n$, we obtain:
		\begin{align*}
			\sum_{n\in\{0,\dots, t-1\} \,:\, h(n) \ge 1} 2 \delta_t \le \sum_{n=0}^{t-1} 2 \delta_t = 2t^{1-\alpha}.
		\end{align*}
\end{proof}

\theRegretRestlessStochastic*
\begin{proof}
Let us define the good events $\mathcal{E}_t = \bigcap_{i \in [K]} \mathcal{E}_{i,t}$ that correspond to the event in which all confidence intervals hold:
\begin{align*} 
\mathcal{E}_{i,t} \coloneqq \left\{ \left| \widetilde{\mu}_i^{\text{\rless},h_{i,t}}(t) - \widehat{\mu}_i^{\text{\rless},h_{i,t}}(t)\right| \le \beta^{\text{\rless},h_{i,t}}_i(t)\right\} \qquad \forall i \in [T], \,i \in [K].
\end{align*}

We have to analyze the following expression:
\begin{align*}
	R_{\bm{\mu}}(\text{\algrestless},T) =  \mathbb{E} \left[\sum_{t=1}^T  \mu_{i^*_t}(t) - \mu_{I_t}(t) \right] ,
\end{align*}
where $i^*_t \in \argmax_{i \in [K]}\left\{\mu_i(t) \right\}$ for all $t \in [T]$. 
 We decompose according to the good events $\mathcal{E}_t$:
\begin{align*}
R_{\bm{\mu}}(\text{\algrestless},T) & = \sum_{t=1}^T \mathbb{E} \left[\left( \mu_{i^*_t}(t) - \mu_{I_t}(t) \right)\mathds{1}\{\mathcal{E}_t\} \right] + \sum_{t=1}^T \mathbb{E} \left[\left( \mu_{i^*_t}(t) - \mu_{I_t}(t) \right)\mathds{1}\{\lnot\mathcal{E}_t\}\right] \\
& \le \sum_{t=1}^T \mathbb{E} \left[\left( \mu_{i^*_t}(t) - \mu_{I_t}(t) \right)\mathds{1}\{\mathcal{E}_t\}\right] + \sum_{t=1}^T \mathbb{E} \left[\mathds{1}\{\lnot\mathcal{E}_t\}\right], 
\end{align*}
where we exploited $\mu_{i^*_t}(t) - \mu_{I_t}(t) \le 1$ in the inequality.
Now, we bound the second summation, as done in Theorem~\ref{thr:theRegretRestedStochastic}:
\begin{align*}
\sum_{t=1}^T \mathbb{E} \left[\mathds{1}\{\lnot\mathcal{E}_t\}\right] \le 1+ \frac{2K}{\alpha-2}.
\end{align*}

From now on, we will proceed the analysis under the good event $\mathcal{E}_t$, recalling that $B_i(t) \equiv \widehat{\mu}_i^{\text{\rless},h_{i,t}}(t) + \beta^{\text{\rless},h_{i,t}}_i(t)$. Let $t \in [T]$, and we exploit the optimism, \ie $B_{i^*_t}(t) \le B_{I_t}(t)$:
\begin{align*}
	\mu_{i^*}(t) - \mu_{I_t}(t) + B_{I_t}(t) - B_{I_t}(t) & \le  \min \left\{ 1, \underbrace{\mu_{i^*_t}(t) - B_{i^*_t}(t)}_{\le 0} + B_{I_t}(t) - \mu_{I_t}(t) \right\} \\
	& \le \min \left\{ 1, B_{I_t}(t) - \mu_{I_t}(t)\right\}.
\end{align*}
Now, we work on the term inside the minimum:
\begin{align}
	B_{I_t}(t) - \mu_{I_t}(t) &= \widehat{\mu}_{I_t}^{\text{\rless},h_{I_t,t}}(t) + \beta^{\text{\rless},h_{I_t,t}}_{I_t}(t)  - \mu_{I_t}(t) \label{lprl:001}\\
	& \le \underbrace{\widetilde{\mu}_{I_t}^{\text{\rless},h_{I_t,t}}(t)  - \mu_{I_t}(t)}_{\text{(a)}} + \underbrace{2\beta^{\text{\rless}, h_{I_t,t}}_{I_t}(t)}_{\text{(b)}},\label{lprl:002}
\end{align}	
where line~\eqref{lprl:001} follows from the definition of $B_i(t)$ and line~\eqref{lprl:002} from the good event $\mathcal{E}_t$. We proceed decomposing over the arms, starting with (a):
\begin{align}
\sum_{t=1}^T & \min\left\{1, \widetilde{\mu}_{I_t}^{\text{\rless},h_{I_t,t}}(t) - \mu_{I_t}(t) \right\}  \le 2K + \sum_{i \in [K]} \sum_{j=3}^{N_{i,T}}\min \left\{1, \widetilde{\mu}_{i}^{\text{\rless},h_{i,t_{i,j}}}(t_{i,j}) - \mu_{i}(t_{i,j}) \right\} \notag\\
& \le 2K + \sum_{i \in [K]} \sum_{j=3}^{N_{i,T}}\min \left\{1, \frac{(2t_{i,j} - 2(j-1)+ h_{i,t_{i,j}} - 1)(t_{i,j-1} - t_{i,j-2h_{i,t}+1})}{2h_{i,t}} \gamma_{i}(t_{i,(j-1)-2h_{i,t_{i,j}}+1}) \right\}  \label{:prr:001}\\
& \le 2K + \sum_{i \in [K]} \sum_{j=3}^{N_{i,T}}\min \left\{1, \frac{T^2}{\lfloor \epsilon(j-1)\rfloor}  \gamma_{i}(t_{i,j-2\lfloor \epsilon(j-1)\rfloor}) \right\}  \label{:prr:003}\\
& \le 2K + \sum_{i \in [K]} \sum_{j=3}^{N_{i,T}}\min \left\{1, \frac{T^2}{\lfloor \epsilon(j-1)\rfloor}  \gamma_{i}(\lfloor (1-2\epsilon)j\rfloor) \right\}  \label{:prr:004}\\
&\le 2K + T^{2z} \sum_{i \in [K]} \sum_{j=3}^{N_{i,T}} \left( \frac{\gamma_{i}(\lfloor (1-2\epsilon)j\rfloor)}{\lfloor \epsilon(j-1)\rfloor} \right)^z   \label{:prr:005}\\
&\le 2K + T^{2z} \sum_{i \in [K]} \left(\sum_{j=3}^{N_{i,T}} \frac{1}{\lfloor \epsilon(j-1)\rfloor} \right)^{z} \left(\sum_{j=3}^{N_{i,T}} \gamma_{i}(\lfloor (1-2\epsilon)j\rfloor)^{\frac{z}{1-z}} \right)^{1-z}  \label{:prr:006}\\
&\le 2K + T^{2z} \left\lceil \frac{1}{\epsilon} \right\rceil \left\lceil \frac{1}{1-2\epsilon} \right\rceil \sum_{i \in [K]} \left(\sum_{j=\lfloor 2\epsilon \rfloor}^{\lfloor \epsilon (N_{i,T}-1)\rfloor } \frac{1}{j} \right)^{z}  \left(\sum_{j=\lfloor 3 (1-2\epsilon)\rfloor}^{ \lfloor(1-2\epsilon) N_{i,T} \rfloor } \gamma_{i}(j)^{\frac{z}{1-z}} \right)^{1-z}  \label{:prr:007}\\
&\le 2K + T^{2z} (1 + \log(\epsilon T))^z \left\lceil \frac{1}{\epsilon} \right\rceil \left\lceil \frac{1}{1-2\epsilon} \right\rceil \sum_{i \in [K]}  \left(\sum_{j=\lfloor 3 (1-2\epsilon)\rfloor}^{ \lfloor(1-2\epsilon) N_{i,T} \rfloor } \gamma_{i}(j)^{\frac{z}{1-z}} \right)^{1-z}  \label{:prr:008}\\
%&\le 2K + T^{2z} (1 + \log(\epsilon T))^z \left\lceil \frac{1}{\epsilon} \right\rceil \left\lceil \frac{1}{1-2\epsilon} \right\rceil \sum_{i \in [K]}  \Upsilon_{\bm{\mu}}\left( \lfloor(1-2\epsilon) N_{i,T} \rfloor,  \frac{z}{1-z} \right)^{1-z}  \notag\\
&\le 2K + T^{2z} (1 + \log(\epsilon T))^z \left\lceil \frac{1}{\epsilon} \right\rceil \left\lceil \frac{1}{1-2\epsilon} \right\rceil K^z \left( \sum_{i \in [K]}  \sum_{j=\lfloor 3 (1-2\epsilon)\rfloor}^{ \lfloor(1-2\epsilon) N_{i,T} \rfloor } \gamma_{i}(j)^{\frac{z}{1-z}} \right)^{1-z}  \label{:prr:009}\\
&\le 2K + T^{2z} (1 + \log(\epsilon T))^z \left\lceil \frac{1}{\epsilon} \right\rceil \left\lceil \frac{1}{1-2\epsilon} \right\rceil K \Upsilon_{\bm{\mu}}\left( \left\lceil(1-2\epsilon) \frac{T}{K} \right\rceil,  \frac{z}{1-z} \right) ^{1-z} . \label{:prr:0010}
\end{align}
where line~\eqref{:prr:001} follows from the bias bound of Lemma~\ref{lemma:firstEstimatorRestless}, line~\eqref{:prr:003} is obtained from bounding $(2t_{i,j} - 2(j-1)+ h_{i,t_{i,j}} - 1)(t_{i,j-1} - t_{i,j-2h_{i,t}+1}) \le 2 T^2$ and using the definition of $h_{i,t}$, line~\eqref{:prr:004} derives from observing that $\gamma_i(t_{i,j}) \le \gamma_i(j)$ for Assumption~\ref{ass:decrDeriv} and having bounded the floor analogously as done in Theorem~\ref{thr:theRegretRestedStochastic}, line~\eqref{:prr:005}  from the inequality $\min\{1,x\} \le \min\{1,x\}^z\le x^z$ for $z \in [0,1/2]$, line~\eqref{:prr:006} is obtained from H\"older's inequality with exponents $\frac{1}{z} \ge 1$ and $\frac{1}{1-z} \ge 1$ respectively, line~\eqref{:prr:007} is an application of Lemma~\ref{lemma:sumWithFloor} to independently to both inner summations, line~\eqref{:prr:008} derives from bounding the harmonic sum, \ie $\sum_{\lfloor 2 \epsilon  \rfloor}^{\lfloor \epsilon (N_{i,T}-1) \rfloor } \frac{1}{j} \le 1 + \log(\epsilon (N_{i,T}-1)) \le 1 + \log(\epsilon T)$, line~\eqref{:prr:009} follows from Jensen's inequality, line~\eqref{:prr:0010} is obtained from Lemma~\ref{lemma:boundUpsilonMax}. By recalling $q = \frac{z}{1-z} \in [0,1]$, we obtain:
\begin{align*}
2K + T^{\frac{2q}{1+q}} (1 + \log(\epsilon T))^{\frac{q}{1+q}} \left\lceil \frac{1}{\epsilon} \right\rceil \left\lceil \frac{1}{1-2\epsilon} \right\rceil K \Upsilon_{\bm{\mu}}\left( \left\lceil(1-2\epsilon) \frac{T}{K} \right\rceil,  q \right)^{\frac{1}{1+q}}.
\end{align*}

Concerning the term (b), we recall that $\beta^{\text{\rless}, h_{I_t,t}}_{I_t}(t)$ equals the bonus term used in the rested setting and, consequently from Theorem~\ref{thr:theRegretRestedStochastic}:
\begin{align*}
\sum_{t=1}^T \min\left\{1, 2 \beta^{\text{\red},h_{I_t,t}}_{I_t}(t,\delta_t)\right\} \le K \left(3 + \frac{1}{\epsilon} \right)+ \frac{3K}{\epsilon}(2\sigma T)^{\frac{2}{3}} \left( 10 \alpha \log T \right)^{\frac{1}{3}}.
\end{align*}

%	 \notag\\
%	& = \min \left\{1, \widehat{\mu}_i^{\text{\rless},h}(t_{I_t,N_{I_t,t-1}}) + \widehat{\Gamma}^{\text{\rless},h}_i(t_{I_t,N_{I_t,t-1}},t) + \beta^{\text{\rless},\Gamma}_i(h,t,N_{I_t,t-1},\delta) + \beta^{\text{\rless},\mu}_{I_t}(h,\delta) - \mu_{I_t}(t)\right\} \notag\\
%	& \le \min \left\{1, \widetilde{\mu}_i^{\text{\rless},h}(t_{I_t,N_{I_t,t-1}}) + \widetilde{\Gamma}^{\text{\rless},h}_i(t_{I_t,N_{I_t,t-1}},t) + 2\beta^{\text{\rless},\Gamma}_i(h,t,N_{I_t,t-1},\delta) + 2\beta^{\text{\rless},\mu}_{I_t}(h,\delta) - \mu_{I_t}(t)\right\} \label{p:001}\\
%	& \le \min \left\{1, \widetilde{\mu}_i^{\text{\rless},h}(t_{I_t,N_{I_t,t-1}}) + \widetilde{\Gamma}^{\text{\rless},h}_i(t_{I_t,N_{I_t,t-1}},t) - \mu_{I_t}(t) \right\} \\
%	& \quad + 2\min\left\{1, \beta^{\text{\rless},\Gamma}_i(h,t,N_{I_t,t-1},\delta) + \beta^{\text{\rless},\mu}_{I_t}(h,\delta)  \right\}
%	& = \min \left\{1, \widehat{\mu}_i^{\text{\rless},h}(t_{I_t,N_{I_t,t-1}}) -  \widetilde{\mu}_i^{\text{\rless},h}(t_{I_t,N_{I_t,t-1}}) \right\} \\
%	& \quad + \min \left\{ 1, \widehat{\Gamma}^{\text{\rless},h}_i(t_{I_t,N_{I_t,t-1}},t) - \widetilde{\Gamma}^{\text{\rless},h}_i(t_{I_t,N_{I_t,t-1}},t) \right\} \\
%	& \quad + \min \left\{ 1, \widetilde{\mu}_i^{\text{\rless},h}(t_{I_t,N_{I_t,t-1}}) + \widetilde{\Gamma}^{\text{\rless},h}_i(t_{I_t,N_{I_t,t-1}},t) - \mu_{I_t}(t) \right\} \\
%	& \quad + \min \left\{ 1, \beta^{\text{\rless},\Gamma}_i(h,t,N_{I_t,t-1},\delta)\right\} \\
%	& \quad + \min \left\{ 1, \beta^{\text{\rless},\mu}_{I_t}(h,\delta)\right\}
%\end{align}
Putting all together, we obtain:
\begin{align*}
R_{\bm{\mu}}(\text{\algrestless},T) & \le 1+ \frac{2K}{\alpha-2} + 5K + \frac{K}{\epsilon}   + \frac{3K}{\epsilon}(2\sigma T)^{\frac{2}{3}} \left( 10 \alpha \log T \right)^{\frac{1}{3}}  \\
& \quad + T^{\frac{2q}{1+q}} (1 + \log(\epsilon T))^{\frac{q}{1+q}}\left\lceil \frac{1}{\epsilon} \right\rceil \left\lceil \frac{1}{1-2\epsilon} \right\rceil K \Upsilon_{\bm{\mu}}\left( \left\lceil(1-2\epsilon) \frac{T}{K} \right\rceil,  q \right)^{\frac{1}{1+q}}.
%& \le \BigO \left(\frac{1}{\epsilon} T^{\frac{2q}{1+q}} (\log T)^{\frac{q}{1+q}} K \Upsilon_{\bm{\mu}}\left( \left\lceil \frac{T}{K} \right\rceil,  q \right)^{\frac{1}{1+q}} + \frac{K}{\epsilon}(\sigma T)^{\frac{2}{3}} \left( \alpha \log T \right)^{\frac{1}{3}}\right),
\end{align*}
%having exploited the fact that $\epsilon \in (0,1/2)$.

\end{proof}
%
%{\color{violet}
%Let $x \in [0,1/3]$:
%\begin{align*}
%	\sum_{j=5}^{N_{i,T}} \min \left\{1, \frac{4}{j} \sum_{l= \frac{3}{4}j}^{j-1} \frac{t_{i,j}- t_{i,l}}{t_{i,l} - t_{l-\frac{j}{4}}} \sum_{s=t_{i,l - \frac{j}{4}}}^{t_{i,l}-1} \gamma_i(s) \right\} 
%	& \le \sum_{j=5}^{N_{i,T}} \left( \frac{4}{j} \sum_{l= \frac{3}{4}j}^{j-1} \frac{t_{i,j}- t_{i,l}}{t_{i,l} - t_{l-\frac{j}{4}}} \sum_{s=t_{i,l - \frac{j}{4}}}^{t_{i,l}-1} \gamma_i(s) \right)^{x} \\
%	& \le \sum_{j=5}^{N_{i,T}} \left( \frac{4}{j} \sum_{l= \frac{3}{4}j}^{j-1} \frac{t_{i,j}- t_{i,l}}{t_{i,l} - t_{l-\frac{j}{4}}} (t_{i,l} - t_{l-\frac{j}{4}}) \max_{l \in [\frac{3}{4}j, j-1]} \left\{\frac{1}{t_{i,l} - t_{l-\frac{j}{4}}}\sum_{s=t_{i,l - \frac{j}{4}}}^{t_{i,l}-1} \gamma_i(s) \right\} \right)^{x} 
%\end{align*}
%}

\newpage
\section{Bounding the Cumulative Increment}\label{apx:example}
Let us consider the case in which $\gamma_i(l) \le l^{-c}$ for all $i \in [K]$ and $l \in[T]$. We bound the cumulative increment with the corresponding integral using Lemma~\ref{lemma:tecSum}, depending on the value of $cq$:
\begin{align*}
	\Upsilon_{\bm{\mu}}\left(\left\lceil \frac{T}{K} \right\rceil , q\right) = \sum_{l=1}^{\left\lceil \frac{T}{K} \right\rceil } \max_{i \in [K]} \gamma_i(l)^q \le 1 + \int_{x=1}^{\frac{T}{K}} x^{-cq} \de x \le 1+ \begin{cases}
		\left(\frac{T}{K}\right)^{1-cq} \frac{1}{1-cq} & \text{if } cq < 1 \\
		\log \frac{T}{K} & \text{if } cq = 1\\
		\frac{1}{cq-1} & \text{if } cq > 1
	\end{cases}.
\end{align*}
Thus, depending on the value of $c$, there will be different optimal values for $q$ in the rested and restless cases that optimize the regret upper bound.

\subsection{Rested Setting}
Let us start with the rested case. From Theorem~\ref{thr:theRegretDetRested}, we have:
\begin{align*}
	R_{\bm{\mu}} & \le 2K + T^q K  \Upsilon_{\bm{\mu}}\left(\left\lceil \frac{T}{K} \right\rceil,q \right) \le 2K + K T^q + K \begin{cases}
		\frac{T^{1-cq+q}}{K^{1-qc}(1-cq)} & \text{if } cq < 1 \\
		T^q \log  \frac{T}{K} & \text{if } cq = 1\\
		\frac{T^q}{cq-1} & \text{if } cq > 1
	\end{cases}\\
	&  \le \BigO \left( K \begin{cases}
		\frac{T^{1-cq+q}}{K^{1-qc}(1-cq)} & \text{if } cq < 1 \\
		T^q \log \frac{T}{K} & \text{if } cq = 1\\
		\frac{T^q}{\min\{1,cq-1\}} & \text{if } cq > 1
	\end{cases} \right) \qquad \forall q \in [0,1],
\end{align*}
where we have highlighted the dominant term. For the case $c \in (0,1)$ we consider the first case only and minimize over $q$:
\begin{align*}
R_{\bm{\mu}} \le \BigO \left(K \min_{q \in [0,1]}\frac{T^{1-cq+q}}{K^{1-qc}(1-cq)} \right) =  \BigO \left(T \right).
\end{align*}
For the case $c=1$, we still obtain $R_{\bm{\mu}} \le\BigO \left(T \right)$. Instead, for $c \in (1,+\infty)$, we have the three cases:
\begin{align*}
	R_{\bm{\mu}} \le\BigO \left( K \min \begin{cases}
		K \min_{q \in [0,1/c)}\frac{T^{1-cq+q}}{K^{1-qc}(1-cq)}  \\
		T^\frac{1}{c} \log \frac{T}{K} \\
		\min_{q \in (1/c,1]} \frac{T^q}{\min\{1,cq-1\}} 
	\end{cases}\right) = \BigO\left( K T^\frac{1}{c} \log \frac{T}{K}\right).
\end{align*}

\subsection{Restless Setting}
Let us now move to the restless setting.  From Theorem~\ref{thr:theRegretDetRestless}, we have:
\begin{align*}
	R_{\bm{\mu}} & \le 2K + T^\frac{q}{q+1} K  \Upsilon_{\bm{\mu}}\left(\left\lceil \frac{T}{K} \right\rceil,q \right)^{\frac{1}{1+q}} \le 2K + K T^{\frac{q}{q+1}} + K \begin{cases}
		\frac{T^{\frac{1-cq+q}{q+1}}}{K^{\frac{1-qc}{q+1}}(1-cq)} & \text{if } cq < 1 \\
		T^\frac{q}{q+1} \left(\log  \frac{T}{K}\right)^{\frac{1}{q+1}} & \text{if } cq = 1\\
		\frac{T^\frac{q}{q+1}}{cq-1} & \text{if } cq > 1
	\end{cases} \\
	& \le \BigO \left(K \begin{cases}
		\frac{T^{\frac{1-cq+q}{q+1}}}{K^{\frac{1-qc}{q+1}}(1-cq)} & \text{if } cq < 1 \\
		T^\frac{q}{q+1} \left(\log  \frac{T}{K}\right)^{\frac{1}{q+1}} & \text{if } cq = 1\\
		\frac{T^\frac{q}{q+1}}{\min\{1,cq-1\}} & \text{if } cq > 1
	\end{cases} \right), \qquad \forall q \in [0,1].
\end{align*}
For the case $c \in (0,1)$, we consider the first case only and minimize over $q$:
\begin{align*}
	R_{\bm{\mu}} \le \BigO \left( K \min_{q \in [0,1]} \frac{T^{\frac{1-cq+q}{q+1}}}{K^{\frac{1-qc}{q+1}}(1-cq)} \right) \le \BigO \left( \frac{K^{\frac{1+c}{2}}T^{1-\frac{c}{2}}}{1-c} \right),
\end{align*}
for sufficiently large $T \gg K$.
For the case $c=1$, it is simple to prove that the case $cq=1$ leads to the smallest regret:
\begin{align*}
	R_{\bm{\mu}} \le K T^\frac{1}{c+1} \left(\log  \frac{T}{K}\right)^{\frac{c}{c+1}}.
\end{align*}
Finally, for the case $c \in (1,+\infty)$, we have to consider all the three cases:
\begin{align*}
	R_{\bm{\mu}} & \le \BigO \left( K \begin{cases}
		\min_{q \in [0, 1/c)}  \frac{T^{\frac{1-cq+q}{q+1}}}{K^{\frac{1-qc}{q+1}}(1-cq)}  \\
		T^\frac{1}{c+1} \left(\log  \frac{T}{K}\right)^{\frac{c}{c+1}}\\
		\min_{q \in (1/c,1]} \frac{T^\frac{q}{q+1}}{\min\{1,cq-1\}} 
	\end{cases}\right) = K T^\frac{1}{c+1} \left(\log  \frac{T}{K}\right)^{\frac{c}{c+1}}.
\end{align*}

\newpage
\section{Technical Lemmas}

\begin{lemma}\label{lemma:sumWithFloor}
	Let $M \ge 3$, and let $f: \Nat \rightarrow \Reals$, and $\beta \in (0, 1)$. Then it holds that:
	\begin{align*}
		\sum_{j=3}^{M} f(\lfloor \beta j \rfloor) \le \left\lceil \frac{1}{\beta} \right\rceil \sum_{l=\lfloor 3 \beta \rfloor}^{\lfloor \beta M \rfloor} f(l).
	\end{align*}
\end{lemma}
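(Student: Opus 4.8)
The plan is to prove the inequality by a counting argument: regroup the summands on the left-hand side according to the value taken by $\lfloor \beta j \rfloor$. For every integer $l$ set
\begin{align*}
m(l) := \bigl| \{\, j \in \{3,\dots,M\} \,:\, \lfloor \beta j \rfloor = l \,\} \bigr|,
\end{align*}
and let $\mathcal{I}$ denote the image of $j \mapsto \lfloor \beta j \rfloor$ on $\{3,\dots,M\}$. Since $m(l)=0$ for $l \notin \mathcal{I}$,
\begin{align*}
\sum_{j=3}^{M} f(\lfloor \beta j \rfloor) = \sum_{l \in \mathcal{I}} m(l)\, f(l).
\end{align*}
The proof then rests on two elementary facts: (i) every $l \in \mathcal{I}$ satisfies $\lfloor 3\beta \rfloor \le l \le \lfloor \beta M \rfloor$, since $\lfloor\cdot\rfloor$ is nondecreasing and $\beta>0$; and (ii) $m(l) \le \left\lceil \frac{1}{\beta} \right\rceil$ for every $l$.

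For (ii), note that $\lfloor \beta j \rfloor = l$ is equivalent to $l \le \beta j < l+1$, i.e. $j \in \bigl[\, l/\beta,\ (l+1)/\beta \,\bigr)$, a half-open interval of length exactly $1/\beta$; any such interval contains at most $\left\lceil \frac{1}{\beta} \right\rceil$ integers, so $m(l) \le \left\lceil \frac{1}{\beta} \right\rceil$. Combining this with (i) and with the nonnegativity of $f$ — the regime in which the lemma is invoked (e.g.\ in the proofs of Theorems~\ref{thr:theRegretRestedStochastic} and~\ref{thr:theRegretRestlessStochastic}) — we conclude
\begin{align*}
\sum_{j=3}^{M} f(\lfloor \beta j \rfloor) = \sum_{l \in \mathcal{I}} m(l)\, f(l) \le \left\lceil \frac{1}{\beta} \right\rceil \sum_{l \in \mathcal{I}} f(l) \le \left\lceil \frac{1}{\beta} \right\rceil \sum_{l=\lfloor 3\beta \rfloor}^{\lfloor \beta M \rfloor} f(l),
\end{align*}
where the last step enlarges the index set to the full block $\{\lfloor 3\beta\rfloor,\dots,\lfloor\beta M\rfloor\}$, which is legitimate by (i) and $f \ge 0$. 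This is exactly the claim.

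I do not expect a genuine obstacle here; the only delicate points are the elementary fact that a half-open interval of length $L$ contains at most $\lceil L \rceil$ integers (one should double-check the boundary case $L = 1/\beta \in \Nat$, where such an interval contains exactly $1/\beta = \lceil 1/\beta\rceil$ integers, so the bound is tight but still valid), and making it explicit that nonnegativity of $f$ is used — the statement does fail for signed $f$ (take $\beta=1/2$, $M=3$: the left side is $f(1)$ while the right side is $2f(1)$, so any $f(1)<0$ breaks the inequality). If one wishes, fact (i) can be sharpened to the observation that $\mathcal{I}$ is precisely the contiguous block $\{\lfloor 3\beta\rfloor,\dots,\lfloor\beta M\rfloor\}$, since $\beta<1$ forces $\lfloor \beta(j+1)\rfloor - \lfloor \beta j\rfloor \in \{0,1\}$; this refinement is not needed for the inequality itself.
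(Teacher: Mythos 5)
Your proof is correct and takes essentially the same route as the paper, which argues in one line that the values $\lfloor \beta j \rfloor$ range over $\{\lfloor 3\beta\rfloor,\dots,\lfloor \beta M\rfloor\}$ and that each value is repeated at most $\left\lceil \frac{1}{\beta} \right\rceil$ times; your multiplicity function $m(l)$ just formalizes that counting argument. Your remark that nonnegativity of $f$ is genuinely needed (and holds in all the places the lemma is invoked) is accurate and is left implicit in the paper.
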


\begin{proof}
	We simply observe that the minimum value of $\lfloor \beta j \rfloor$ is $\lfloor 3\beta  \rfloor$ and its maximum value is $\lfloor \beta M \rfloor$. Each element $\lfloor \beta j \rfloor$ changes value at least one time every $\left\lceil \frac{1}{\beta} \right\rceil$ times.
\end{proof}

\begin{lemma}\label{lemma:boundUpsilonMax}
	Under Assumption~\ref{ass:decrDeriv}, it holds that:
	\begin{align*}
		\max_{\substack{(N_{i,T})_{i \in [K]} \\ N_{i,T} \ge 0, \sum_{i \in [K]}N_{i,T} = T}}   \;\;\sum_{i \in [K]} \sum_{l=1}^{N_{i,T}-1} \gamma_i(l)^q \le K \Upsilon_{\bm{\mu}}\left( \left\lceil \frac{T}{K} \right\rceil, q \right).
	\end{align*}
\end{lemma}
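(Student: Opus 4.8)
The plan is to decouple the arm-dependent increments from the allocation, reducing the problem to maximizing a sum of a single concave function, and then invoke Jensen's inequality.

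First I would introduce $g(l) \coloneqq \max_{i \in [K]} \gamma_i(l)^q$ and bound termwise $\gamma_i(l)^q \le g(l)$, so that the left-hand side is at most $\sum_{i \in [K]} G(N_{i,T})$, where $G(n) \coloneqq \sum_{l=1}^{n-1} g(l)$ (with the convention $G(0) = G(1) = 0$). The gain is that $G$ no longer depends on $i$, turning the problem into a pure allocation problem. Next I would record the structure of $G$: by Assumption~\ref{ass:decrDeriv} each $\gamma_i(\cdot)$ is non-increasing, hence so is $\gamma_i(\cdot)^q$ (as $q \ge 0$ and $\gamma_i \ge 0$), hence so is their pointwise maximum $g$; therefore $G$ is non-decreasing and, its consecutive increments $G(n+1) - G(n) = g(n)$ being non-increasing for $n \ge 1$, concave. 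I would extend $G$ to a concave non-decreasing function on $[1,+\infty)$ by linear interpolation. Then, assuming $N_{i,T} \ge 1$ for all $i$ (see below), Jensen's inequality gives $\frac{1}{K}\sum_{i \in [K]} G(N_{i,T}) \le G\big(\frac{1}{K}\sum_{i \in [K]} N_{i,T}\big) = G(T/K) \le G(\lceil T/K \rceil)$, the last step by monotonicity. Since $G(\lceil T/K \rceil) = \sum_{l=1}^{\lceil T/K \rceil - 1} \max_{i \in [K]} \gamma_i(l)^q = \Upsilon_{\bm{\mu}}(\lceil T/K \rceil, q)$, this is exactly the claimed inequality.

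The delicate point is the passage through Jensen, for two reasons: $G$ is a priori defined only on integers while $T/K$ need not be one, and $G$ is flat on $\{0,1\}$ but starts increasing afterwards, so it is not concave across that junction. The first issue is dealt with by the piecewise-linear extension above. For the second I would argue that one may assume without loss of generality that $N_{i,T} \ge 1$ for every $i$: since $G(0) = G(1)$ and $G$ is non-decreasing, replacing any $N_{i,T} = 0$ by $1$ and decrementing some coordinate that is at least $2$ — which exists whenever $T \ge K$, the regime in which the lemma is used — does not increase $\sum_{i \in [K]} G(N_{i,T})$; with all coordinates at least $1$, $G$ is genuinely concave on the domain that matters. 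A fully elementary alternative that bypasses interpolation is to write, after this same reduction, $\sum_{i \in [K]} \sum_{l=1}^{N_{i,T}-1} g(l) = \sum_{l \ge 1} c_l\, g(l)$ with $c_l \coloneqq |\{i : N_{i,T} \ge l+1\}| \le K$ and $\sum_{l \ge 1} c_l = \sum_{i \in [K]} (N_{i,T}-1) = T - K \le K\big(\lceil T/K \rceil - 1\big)$, and then use a one-line transport argument: as $g$ is non-increasing, moving the weights $c_l$ onto the first $\lceil T/K \rceil - 1$ indices, each of which has spare capacity up to $K$, can only increase the sum, which bounds it by $K \sum_{l=1}^{\lceil T/K \rceil - 1} g(l) = K\,\Upsilon_{\bm{\mu}}(\lceil T/K \rceil, q)$.
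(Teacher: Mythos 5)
Your overall mechanism---bounding termwise by $g(l)\coloneqq\max_{i\in[K]}\gamma_i(l)^q$, noting that $G(n)\coloneqq\sum_{l=1}^{n-1}g(l)$ has non-increasing increments, and then either applying Jensen to the piecewise-linear concave extension of $G$ or using the counting/transport bound with weights $c_l\le K$---is a genuinely different (and arguably cleaner) route than the paper's pairwise rebalancing/exchange argument, and it is correct on allocations with all $N_{i,T}\ge 1$. The flaw is the reduction to that case. You argue that replacing an $N_{i,T}=0$ by $1$ while decrementing some coordinate that is at least $2$ ``does not increase'' $\sum_i G(N_{i,T})$. That is true (the change equals $\bigl(G(1)-G(0)\bigr)-\bigl(G(N_j)-G(N_j-1)\bigr)=-g(N_j-1)\le 0$), but it points the wrong way: it only shows that the maximum over all-positive allocations is no larger than the unrestricted maximum, which is trivial. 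To restrict the feasible set of a maximization you are bounding from above, you need the modification not to \emph{decrease} the objective, and here it generally decreases it strictly.

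This is not a presentational slip, because the unrestricted inequality fails exactly in that corner: take $K=2$, $q=1$ and constant increments $\gamma_1(l)=\gamma_2(l)=c>0$ (realizable under Assumptions~\ref{ass:incr} and~\ref{ass:decrDeriv}, e.g.\ $\mu_i(n)=\min\{cn,1\}$ with $cT\le 1$). The maximizer concentrates all $T$ pulls on one arm and attains $(T-1)c$, while $K\Upsilon_{\bm{\mu}}(\lceil T/K\rceil,1)=K(\lceil T/K\rceil-1)c$, which is $(T-K)c<(T-1)c$ whenever $K$ divides $T$. Hence no repair of your reduction can recover the statement verbatim; what your Jensen/transport argument does prove is the bound restricted to allocations with $N_{i,T}\ge 1$, or, without restriction, the slightly weaker bound $K\Upsilon_{\bm{\mu}}(\lceil T/K\rceil,q)+(K-1)\max_{i\in[K]}\gamma_i(1)^q$ (greedy filling in your transport step with total weight at most $T-1$). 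For fairness, the paper's own proof glosses over the same point: its telescoping identity $\Upsilon_{\bm{\mu}}(N^*_{i_2,T},q)=\Upsilon_{\bm{\mu}}(N^*_{i_1,T},q)+\sum_{j=1}^{\Delta}\max_{i}\gamma_i(N^*_{i_1,T}+j-1)^q$ requires $N^*_{i_1,T}\ge 1$, and in the constant-increment instance no balanced optimal allocation exists. Since the extra slack $(K-1)\max_i\gamma_i(1)^q$ is dominated by $K\Upsilon_{\bm{\mu}}(\lceil T/K\rceil,q)$ whenever $\lceil T/K\rceil\ge 2$, the downstream regret theorems are unaffected up to constants, but as a self-contained proof of the lemma as stated your argument (like the paper's) does not close this case.
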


\begin{proof}
First of all, by definition of cumulative increment, we have that for every $(N_{i,T})_{i \in [K]}$:
\begin{align*}
\sum_{i \in [K]} \sum_{l=1}^{N_{i,T}-1} \gamma_i(l)^q \le \sum_{i\in [K]} \Upsilon_{\bm{\mu}}(N_{i,T},q).
\end{align*}
We now claim that there exists an optimal assignment of $N_{i,T}^*$ are such that $|N_{i,T}^*-N_{i',T}^*| \le 1$ for all $i,i' \in [K]$. By contradiction, suppose that the only optimal assignments are such that there exists a pair $i_1,i_2 \in [K]$ such that $\Delta \coloneqq N_{i_2,T}^* - N_{i_1,T}^* > 1$. In such a case, we have:
	\begin{align*}
		 \Upsilon_{\bm{\mu}}\left( N_{i_1,T}^*, q \right) + \Upsilon_{\bm{\mu}}\left( N_{i_2,T}^*, q \right) & = 2\Upsilon_{\bm{\mu}}\left( N_{i_1,T}^*, q \right) + \sum_{j = 1}^{\Delta} \max_{i \in [K]}\gamma_{i}(N_{i_1,T}^*+l-1)^q  \\
		 & \le 2\Upsilon_{\bm{\mu}}\left( N_{i_1,T}^*, q \right) +  \sum_{j = 0}^{\lceil \Delta / 2 \rceil}  \max_{i \in [K]} \gamma_{i}(N_{i_1,T}^*+l-1)^q  + \sum_{j = 1}^{\lfloor \Delta / 2 \rfloor}  \max_{i \in [K]} \gamma_{i}(N_{i_1,T}^*+l-1)^q \\
		 & = \Upsilon_{\bm{\mu}}\left( N_{i_1,T}^* + {\lceil \Delta / 2 \rceil}, q \right) + \Upsilon_{\bm{\mu}}\left( N_{i_1,T}^* + \lfloor \Delta / 2 \rfloor, q \right),
	\end{align*}
	where the inequality follows from Assumption~\ref{ass:decrDeriv}. By redefining $\widetilde{N}_{i_1,T}^* \coloneqq N_{i_1,T}^* + {\lfloor \Delta / 2 \rfloor}$ and $\widetilde{N}_{i_2,T}^* \coloneqq N_{i_1,T}^* + {\lceil \Delta / 2 \rceil}$, we have that $\widetilde{N}_{i_1,T}^* + \widetilde{N}_{i_2,T}^* = {N}_{i_1,T}^* + {N}_{i_2,T}^*$ and $|\widetilde{N}_{i_1,T}^* -\widetilde{N}_{i_2,T}^*| \le 1$. Thus, we have found a better solution to the optimization problem, contradicting the hypothesis. Since the optimal assignment fulfills $|N_{i,T}^*-N_{i',T}^*| \le 1$, it must be that $N_{i,T}^* \le \left\lceil \frac{T}{K} \right\rceil$ for all $i \in [K]$.
\end{proof}

\begin{restatable}[]{lemma}{boundDeriv}\label{lemma:boundDeriv}
	Under Assumptions~\ref{ass:incr} and~\ref{ass:decrDeriv}, for every $i\in [K]$, $k,k' \in \Nat$ with $k'<k$, for both rested and restless bandits, it holds that:
	\begin{align*}
		\gamma_i(k) \le \frac{\mu_i(k) - \mu_i(k')}{k-k'}.
	\end{align*}
\end{restatable}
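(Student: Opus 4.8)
The plan is to reduce the statement to a one-line telescoping argument combined with the monotonicity of the increment sequence. Under Assumption~\ref{ass:incr} the increments $\gamma_i(\cdot)$ are well defined and non-negative (so that the quantity in the bound is meaningful), and under Assumption~\ref{ass:decrDeriv} the sequence $\gamma_i(\cdot)$ is non-increasing --- in the number of pulls $n$ for a rested arm, and in the round index $t$ for a restless arm. These are the only two facts I will use, and both hold verbatim in the rested and in the restless setting with $\mu_i$ and $\gamma_i$ interpreted accordingly, so no separate treatment of the two cases is needed.

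First I would write the numerator as a telescoping sum of increments: directly from the definition of $\gamma_i$ in Assumption~\ref{ass:incr},
\begin{align*}
	\mu_i(k) - \mu_i(k') = \sum_{l=k'}^{k-1} \gamma_i(l),
\end{align*}
a sum consisting of exactly $k-k'$ terms. Next, since every index $l$ appearing in this sum satisfies $l \le k-1 < k$ and $\gamma_i(\cdot)$ is non-increasing by Assumption~\ref{ass:decrDeriv}, each term is lower bounded by $\gamma_i(k)$, whence
\begin{align*}
	\mu_i(k) - \mu_i(k') = \sum_{l=k'}^{k-1} \gamma_i(l) \ge (k-k')\,\gamma_i(k).
\end{align*}
Dividing both sides by $k - k' > 0$ (which is legitimate since $k' < k$) gives exactly the claimed inequality $\gamma_i(k) \le \frac{\mu_i(k) - \mu_i(k')}{k-k'}$.

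I do not anticipate any genuine obstacle in this proof: it is essentially immediate once the increment sequence is recognized as non-increasing. The only point requiring a little care is bookkeeping on the index range $\{k',\dots,k-1\}$ of the telescoping sum, so that it contains precisely $k-k'$ summands and the final division by $k-k'$ matches; the degenerate case $k' = k-1$ is automatically covered, since then the sum is the single term $\gamma_i(k-1)\ge\gamma_i(k)$, consistent with the bound.
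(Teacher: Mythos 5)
Your proof is correct and is essentially the paper's own argument read in the opposite direction: the paper averages $k-k'$ copies of $\gamma_i(k)$, bounds each by $\gamma_i(l)$ via concavity, and telescopes, whereas you telescope first and then lower-bound each increment by $\gamma_i(k)$ — the same two facts in the same roles. No gap.
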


\begin{proof}
	Using Assumption~\ref{ass:decrDeriv}, we have:
	\begin{align*}
		\gamma_i(k) = \frac{1}{k-k'} \sum_{l=k'}^{k-1} \gamma_i(k) \le \frac{1}{k-k'} \sum_{l=k'}^{k-1} \gamma_i(l) = \frac{1}{k-k'} \sum_{l=k'}^{k-1} \left( \mu_i(l+1) - \mu_i(l) \right)  = \frac{\mu_i(k) - \mu_i(k')}{k-k'},
	\end{align*}
	where the first inequality comes from the concavity of the reward function, and the second equality from the definition of increment.
\end{proof}

\begin{lemma}\label{lemma:tecSum} 
Let $a,b \in \Nat$ and let $f: [a,b] \rightarrow \Reals$. If $f$ is monotonically non-decreasing function, then:
\begin{align*}
	\sum_{n=a}^b f(n) \le \int_{x = a}^b f(x) \de x + f(b) \le \int_{x = a}^{b+1} f(x) .
\end{align*}
If $f$ is monotonically non-increasing, then:
\begin{align*}
	\sum_{n=a}^b f(n) \le f(a) + \int_{x = a}^{b} f(x) \de x \le \int_{x = a-1}^{b} f(x) \de x.
\end{align*}
\end{lemma}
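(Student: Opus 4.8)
The plan is to prove both displayed chains by the standard device of comparing each summand $f(n)$ with the integral of $f$ over a unit interval adjacent to $n$, using monotonicity to fix the direction of the comparison. Since $f$ is monotone it is Riemann integrable on every bounded subinterval of its domain, so all the integrals below are well defined; I tacitly assume $f$ is defined on $[a-1,b+1]$ as needed for the outer bounds.

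First I would treat the non-decreasing case. Peel off the last term, $\sum_{n=a}^{b} f(n) = f(b) + \sum_{n=a}^{b-1} f(n)$. For each $n \in \{a,\dots,b-1\}$ and each $x \in [n,n+1]$, monotonicity gives $f(n) \le f(x)$, hence $f(n) \le \int_{n}^{n+1} f(x)\,\de x$. Summing these $b-a$ inequalities and noting that the unit intervals $[n,n+1]$ for $n=a,\dots,b-1$ tile $[a,b]$, we get $\sum_{n=a}^{b-1} f(n) \le \int_{a}^{b} f(x)\,\de x$, which is exactly the first claimed inequality after adding back $f(b)$. For the second inequality it then suffices to observe $f(b) \le \int_{b}^{b+1} f(x)\,\de x$ (again because $f(x) \ge f(b)$ on $[b,b+1]$), and add this to $\int_a^b f(x)\,\de x$.

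The non-increasing case is symmetric, with the roles of the two endpoints swapped: peel off the first term, $\sum_{n=a}^{b} f(n) = f(a) + \sum_{n=a+1}^{b} f(n)$, and for $n \in \{a+1,\dots,b\}$ use $f(n) \le f(x)$ for $x \in [n-1,n]$ to get $f(n) \le \int_{n-1}^{n} f(x)\,\de x$; summing over these $b-a$ values of $n$ (whose unit intervals again tile $[a,b]$) yields $\sum_{n=a+1}^{b} f(n) \le \int_{a}^{b} f(x)\,\de x$, i.e. the first inequality. The second follows from $f(a) \le \int_{a-1}^{a} f(x)\,\de x$, which holds since $f(x) \ge f(a)$ on $[a-1,a]$.

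There is no genuine obstacle here; the lemma is elementary. The only points requiring care are the bookkeeping of the summation ranges — peeling off the correct endpoint in each case so that the remaining unit intervals exactly tile $[a,b]$ — and making explicit the implicit assumption that $f$ extends (monotonically) to $[a-1,b+1]$ so that the outermost integrals are meaningful.
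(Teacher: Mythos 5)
Your proof is correct and takes essentially the same approach as the paper: both arguments compare each summand with the integral of $f$ over an adjacent unit interval (forward intervals $[n,n+1]$ in the non-decreasing case, backward intervals $[n-1,n]$ in the non-increasing case) and handle the leftover endpoint term $f(b)$ or $f(a)$ separately, bounding it by $\int_b^{b+1} f$ or $\int_{a-1}^{a} f$ for the outer inequality. Your observation that the outermost bounds implicitly require $f$ to be defined on $[a-1,b+1]$ is a fair point that the paper passes over silently.
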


\begin{proof}
	Let us consider the intervals $I_i = [x_{i-1},x_{i}]$ with $x_{0} = a$ and $x_{i} = x_{i-1} + 1$ for $i \in [b-a]$. If $f$ is monotonically non-decreasing, we have that for all $i \in [b-a]$ and $x \in I_i$ it holds that $f(x) \ge f(x_{i-1})$ and consequently $\int_{I_i} f(x) \de x \ge f(x_{i-1}) \mathrm{vol}(I_{i}) = f(x_{i-1})$. Thus:
	\begin{align*}
		\sum_{n=a}^b f(n) = \sum_{i=1}^{b-a} f(x_{i-1}) + f(b) \le \sum_{i=1}^{b-a} \int_{I_i} f(x) \de x + f(b) = \int_{x=a}^b f(x) \de x + f(b).
	\end{align*}
	Recalling that $f(b) \le \int_{x=b}^{b+1} f(x) \de x$, we get the second inequality.
	Conversely, if $f$ is monotonically non-increasing, then for all $i \in [b-a]$ and $x \in I_i$, it holds that $f(x) \ge f(x_{i})$ and consequently $\int_{I_{i}} f(x) \de x \ge f(x_{i})$. Thus:
	\begin{align*}
		\sum_{n=a}^b f(n) = f(a) + \sum_{i=1}^{b-a} f(x_{i})  \le f(a) + \sum_{i=1}^{b-a} \int_{I_i} f(x) \de x = f(a) + \int_{x=a}^b f(x) \de x .
	\end{align*}
	Recalling that $f(a) \le \int_{x=a-1}^{a} f(x) \de x$, we get the second inequality.
\end{proof}

\begin{thr}[H\"oeffding-Azuma’s inequality for weighted martingales] \label{lemma:hoeffding}
Let $\mathcal{F}_1 \subset \dots \subset \mathcal{F}_n$ be a filtration and $X_1, \dots, X_n$ be real random variables such that $X_t$ is $\mathcal{F}_t$-measurable, $\E[X_t|\mathcal{F}_{t-1}]=0$ (\ie a martingale difference sequence), and $\E[\exp(\lambda X_t)|\mathcal{F}_{t-1}]\le \exp\left( \frac{\lambda^2 \sigma^2}{2} \right)$ for any $\lambda >0$ (\ie $\sigma^2$-subgaussian). Let $\alpha_1, \dots, \alpha_n$ be non-negative real numbers. Then,  for every $\kappa \ge 0$ it holds that:
\begin{align*}
\Pr \left( \left| \sum_{t=1}^n \alpha_t X_t \right| >\kappa\right) \le 2 \exp \left( -\frac{\kappa^2}{2 \sigma^2 \sum_{t=1}^n \alpha_i^2}  \right).
\end{align*}
%
%
%Let $(X_i)_{i \in [n]}$ be a sequence of independent $\sigma^2$-sub-Gaussian random variables such that $\E[X_i] = \mu_i$ for all $i \in [K]$ and $(\alpha_i)_{i \in [n]}$ such that $\alpha_i \ge 0$ for all $i \in [n]$. Then, for every $\kappa \ge 0$ it holds that:
%\begin{align*}
%
%\end{align*}
%
%$\delta \in [0,1]$, with probability at least $1-\delta$ it holds that:
%\begin{align*}
%	\left| \sum_{i \in [n]} \alpha_i X_i - \sum_{i \in [n]} \alpha_i \mu_i \right| \le \sigma \sqrt{2 \sum_{i \in [n]} \alpha_i^2 \log \frac{2}{\delta}}.
%\end{align*}
\end{thr}
\begin{proof}
	It is a straightforward extension of Azuma-H\"oeffding inequality for subgaussian random variables. We apply the Chernoff's method for some $s > 0$:
	\begin{align*}
	\Pr \left(  \sum_{t=1}^n \alpha_t X_t >\kappa\right) & = \Pr \left(  e^{ s \sum_{t=1}^n \alpha_t X_t}> e^{s\kappa}\right) \le \frac{\E\left[  e^{ s \sum_{t=1}^n \alpha_t X_t} \right]}{ e^{s\kappa}},
	\end{align*}
	where the last inequality follows from the application of Markov's inequality. We use the martingale property to deal with the expectation. By the law of total expectation, we have:
	\begin{align*}
	\E\left[  e^{ s \sum_{t=1}^n \alpha_t X_t} \right] = \E\left[  e^{ s \sum_{t=1}^{n-1} \alpha_t X_t} \E \left[e^{ s \alpha_n X_n}  \rvert \mathcal{F}_{t-1} \right] \right].
	\end{align*}
	Using now the subgaussian property, we have:
	\begin{align*}
	\E \left[e^{ s \alpha_n X_n}  \rvert \mathcal{F}_{t-1} \right]  \le \exp \left( \frac{s^2\alpha_n^2 \sigma^2}{2} \right).
	\end{align*}
	An inductive argument, leads to:
	\begin{align*}
	\E\left[  e^{ s \sum_{t=1}^n \alpha_t X_t} \right] \le  \exp \left( \frac{s^2 \sigma^2}{2} \sum_{t=1}^n \alpha_n^2 \right).
	\end{align*}
	Thus, minimizing \wrt $s > 0$, we have:
	\begin{align*}
		\Pr \left(  \sum_{t=1}^n \alpha_t X_t >\kappa\right)  \le\min_{s \ge 0} \, \exp \left( \frac{s^2 \sigma^2}{2} \sum_{t=1}^n \alpha_n^2 - s\kappa \right) =   \exp\left( -\frac{\kappa^2}{2  \sigma^2 \sum_{t=1}^n \alpha_n^2}\right),
	\end{align*}
	being the minimum attained by $s = \frac{\kappa}{\sigma^2 \sum_{t=1}^n \alpha_n^2}$. The reverse inequality can be derived analogously. A union bound completes the proof.
\end{proof}

\begin{lemma}\label{lemma:lemmaBoundsGamma}
	Let $\Upsilon_{\bm{\mu}}(T,q)$ be as defined in Equation~\eqref{eq:magicQuantity} for some $q \in [0,1]$. Then, for all $i \in [K]$ and $l \in \Nat$ the following statements hold:
	\begin{itemize}
		\item if $\gamma_i(l) \le b e^{-cl}$, then $\Upsilon_{\bm{\mu}}(T,q) \le  \BigO\left( b^{q} \frac{e^{-cq}}{cq} \right)$;
		\item if $\gamma_i(l) \le b l^{-c}$  with $cq > 1$, then $\Upsilon_{\bm{\mu}}(T,q) \le \BigO\left( \frac{b^q}{cq - 1} \right)$;
		\item if $\gamma_i(l) \le b l^{-c}$  with $cq = 1$, then $\Upsilon_{\bm{\mu}}(T,q) \le \BigO\left(  b^q \log T\right)$;
		\item if $\gamma_i(l) \le b l^{-c}$  with $cq < 1$, then $\Upsilon_{\bm{\mu}}(T,q) \le \BigO\left( b^q\frac{T^{1-cq}}{1-cq} \right)$.
	\end{itemize}
\end{lemma}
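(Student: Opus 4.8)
The plan is to reduce everything to a single scalar sum and then bound that sum by an integral. The first step is to observe that if $\gamma_i(l) \le f(l)$ for every $i \in [K]$ and every $l \in \Nat$, where in each case $f$ is one of the non-increasing functions $l \mapsto b e^{-cl}$ or $l \mapsto b l^{-c}$, then for $q \in [0,1]$ we have $\max_{i \in [K]}\{\gamma_i(l)^q\} \le f(l)^q$, since $x \mapsto x^q$ is monotone. Plugging this into the definition~\eqref{eq:magicQuantity} gives
\begin{align*}
\Upsilon_{\bm{\mu}}(T,q) = \sum_{l=1}^{T-1} \max_{i \in [K]}\{\gamma_i(l)^q\} \le \sum_{l=1}^{T-1} f(l)^q.
\end{align*}
Because $f(l)^q$ is itself monotonically non-increasing in $l$, Lemma~\ref{lemma:tecSum} applies and yields $\sum_{l=1}^{T-1} f(l)^q \le f(1)^q + \int_1^{T-1} f(x)^q \,\de x \le f(1)^q + \int_1^{\infty} f(x)^q \,\de x$ (in the convergent cases) or $\le f(1)^q + \int_1^{T} f(x)^q \,\de x$ (in the divergent case $cq \le 1$).

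Next I would evaluate the four integrals explicitly. For $\gamma_i(l) \le b e^{-cl}$ we have $f(x)^q = b^q e^{-cqx}$, so $\int_1^{\infty} b^q e^{-cqx}\,\de x = b^q \frac{e^{-cq}}{cq}$, and adding $f(1)^q = b^q e^{-cq} = \BigO(b^q e^{-cq}/(cq))$ keeps the claimed rate. For $\gamma_i(l) \le b l^{-c}$ we have $f(x)^q = b^q x^{-cq}$; when $cq > 1$, $\int_1^{\infty} b^q x^{-cq}\,\de x = \frac{b^q}{cq-1}$, and again the boundary term $f(1)^q = b^q$ is dominated; when $cq = 1$, $\int_1^{T} b^q x^{-1}\,\de x = b^q \log T$; and when $cq < 1$, $\int_1^{T} b^q x^{-cq}\,\de x = b^q \frac{T^{1-cq}-1}{1-cq} \le b^q \frac{T^{1-cq}}{1-cq}$. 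In every case the result matches the stated $\BigO$ bound after absorbing the (lower-order) boundary term $f(1)^q$.

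There is essentially no conceptual obstacle here; the only thing to be a little careful about is (i) making sure the function being summed is genuinely monotone before invoking Lemma~\ref{lemma:tecSum}, which is automatic since $f$ is non-increasing and $q \ge 0$, and (ii) keeping track of the additive $f(1)^q$ term and checking it is subsumed by the leading order in each regime — trivial since $f(1)^q$ is a constant (for the exponential and $cq>1$ polynomial cases) or $O(1)$ compared to $\log T$ and $T^{1-cq}$ (for the remaining cases). One minor point is that for $cq>1$ the constant $\frac{b^q}{cq-1}$ blows up as $cq \to 1^+$, which is consistent with the $b^q\log T$ rate at $cq=1$; this is only a sanity check, not needed for the proof.
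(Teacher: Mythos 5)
Your proposal is correct and follows essentially the same route as the paper: bound $\max_{i}\gamma_i(l)^q$ by $f(l)^q$, then control the sum via the sum-to-integral comparison of Lemma~\ref{lemma:tecSum} and evaluate the resulting integral in each of the four regimes, absorbing the boundary term $f(1)^q$ into the stated $\BigO$ (the paper treats this term with exactly the same level of care). Your explicit handling of the maximum over arms and the monotonicity of $x\mapsto x^q$ is a minor tidying of a step the paper leaves implicit, not a different argument.
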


\begin{proof}
The proofs of all the statements are obtained by bounding the summation defining $\Upsilon_{\bm{\mu}}(T,q)$ with the corresponding integrals, as in Lemma~\ref{lemma:tecSum}. Let us start with $\gamma_i(l) \le b e^{-cl}$:
\begin{align*}
	\Upsilon_{\bm{\mu}}(T,q) = \sum_{l=1}^T \gamma_i(l)^q \le  b^qe^{-cq} + \int_{x=1}^{T} b^qe^{-cqx} \de x \le b^qe^{-cq} + \frac{b^q}{cq} e^{-cq} = \BigO\left( b^{q} \frac{e^{-cq}}{cq} \right).
\end{align*}
We now move to $\gamma_i(l) \le b l^{-c}$. If $cq < 1$, we have:
\begin{align*}
\Upsilon_{\bm{\mu}}(T,q) = \sum_{l=1}^T \gamma_i(l)^q \le b^q + \int_{x=1}^{T} b^q x^{-cq} \de x = b^q + \frac{b^q}{cq - 1} = \BigO\left( \frac{b^q}{cq - 1} \right).
\end{align*}
For $cq=1$, we obtain:
\begin{align*}
\Upsilon_{\bm{\mu}}(T,q) = \sum_{l=1}^T \gamma_i(l)^q \le b^q + \int_{x=1}^{T} \frac{b^q}{x} \de x = b^q + b^q \log T = \BigO\left(  b^q \log T\right).
\end{align*}
Finally, for $cq < 1$, we have:
\begin{align*}
	\Upsilon_{\bm{\mu}}(T,q) = \sum_{l=1}^T \gamma_i(l)^q \le b^q + \int_{x=1}^{T} b^q x^{-cq} \de x = b^q + b^q\frac{T^{1-cq}}{1-cq} = \BigO\left( b^q\frac{T^{1-cq}}{1-cq} \right).
\end{align*}
The results of Table~\ref{tab:rates} are obtained by setting $b=1$.
\end{proof}

\section{Efficient Update}\label{apx:efficient}
Under the assumption that the window size depends on the number of pulls only and that $0 \le h({n+1}) - h({n}) \le 1$, we can employ the following efficient $\mathcal{O}(1)$ update for \algrested and \algrestless. Denoting with $n$ the number of pulls of arm $i$, we update the estimator at every time step $t \in [T]$ as:
\begin{align*}
	\widehat{\mu}_{i}^{h(n)}(t) = \frac{1}{h(n)} \left( a_n + \frac{t(a_n - b_n)}{h(n)}  - \frac{c_n - d_n}{h(n)} \right),
\end{align*}
where the following sequences are updated only when the arm is pulled:
%\begin{align*}
%	& a_{n} = \begin{cases}
%					a_{n-1} + r_i(n)-r_i(n-h(n)) & \text{if } h({n}) = h({n-1}) \\
%					a_{n-1} + r_i(n) & \text{otherwise}
%				\end{cases}, \\
%	&  b_{n} = \begin{cases}
%					b_{n-1} +r_i(n-h(n)) - r_i(n-2h(n)) & \text{if } h({n}) = h({n-1}) \\
%					b_{n-1} + r_i(n-2 h(n)+1) & \text{otherwise}
%				\end{cases}, \\
%	&  c_{n} = \begin{cases}
%					c_{n-1} +nr_i(n) - (n-h(n))r_i(n-h(n)) & \text{if } h({n}) = h({n-1}) \\
%					c_{n-1} + nr_i(n) & \text{otherwise}
%				\end{cases}, \\
%	&  d_{n} = \begin{cases}
%					d_{n-1} +(n-h(n))r_i(n-h(n)) -(n-2h(n)) r_i(n-2h(n))& \text{if } h({n}) = h({n-1}) \\
%					d_{n-1} + (n-2h(n)+1)r_i(n-2h(n)+1) & \text{otherwise}
%				\end{cases},
%\end{align*}
\begin{align*}
	& a_{n} = \begin{cases}
					a_{n-1} + r_i(n)-r_i(n-h(n)) & \text{if } h({n}) = h({n-1}) \\
					a_{n-1} + r_i(n) & \text{otherwise}
				\end{cases}, \\
	&  b_{n} = \begin{cases}
					b_{n-1} +r_i(n-h(n)) - r_i(n-2h(n)) & \text{if } h({n}) = h({n-1}) \\
					b_{n-1} + r_i(n-2 h(n)+1) & \text{otherwise}
				\end{cases}, \\
	&  c_{n} = \begin{cases}
					c_{n-1} +nr_i(n) - (n-h(n))r_i(n-h(n)) & \text{if } h({n}) = h({n-1}) \\
					c_{n-1} + nr_i(n) & \text{otherwise}
				\end{cases}, \\
	&  d_{n} = \begin{cases}
					d_{n-1} +n r_i(n-h(n)) -(n-h(n)) r_i(n-2h(n))& \text{if } h({n}) = h({n-1}) \\
					d_{n-1} + (n-h(n))r_i(n-2h(n)+1) + b_n & \text{otherwise}
				\end{cases},
\end{align*}
where we have abbreviated $r_i(n) \coloneqq R_{t_{i,n}}$.

\section{Experimental Setting and Additional Results}\label{apx:experiments}

\subsection{Parameter Setting}
The choices of the parameters of the algorithms we compared \texttt{R-less/ed-UCB} with are the following:
\begin{itemize}
	\item \texttt{Rexp3}: $V_T = K$ since in our experiments we consider the reward of each arm to evolve from $0$ to $1$, thus the maximum global variation possible is equal the number of arms of the bandit; $\gamma = \min \left\{ 1, \sqrt{\frac{K\log{K}}{(e-1)\Delta_T}} \right\}$, $\Delta_T = \lceil (K\log{K})^{1/3} (T/V_T)^{2/3} \rceil$ as recommended by~\citet{besbes2014stochastic};
	\item \texttt{KL-UCB}: $c = 3$ as required by the theoretical results on the regret provided by~\citet{garivier2011kl};
	\item \texttt{Ser4}: according to what suggested by~\citet{allesiardo2017non} we selected $\delta=1/T$, $\epsilon=\frac{1}{KT}$, and $\phi = \sqrt{\frac{N}{TK\log({KT})}}$;
	\item \texttt{SW-UCB}: as suggested by~\citet{garivier2011upper} we selected the sliding-window $\tau = 4\sqrt{T\log{T}}$ and the constant $\xi = 0.6$;
	\item \texttt{SW-KL-UCB} as suggested by~\citet{garivier2011upper} we selected the sliding-window $\tau = \sigma^{-4/5}$;
	\item \texttt{SW-TS}: as suggested by~\citet{trovo2020sliding} for the smoothly changing environment we set $\beta = 1/2$ and sliding-window $\tau = T^{1-\beta} = \sqrt{T}$.
\end{itemize}

\subsection{IMDB Experiment}\label{apx:imdb}
We created a bandit environment in which each of the classification algorithms is an arm of the bandit.
The interaction for each round $t \in T$ of the real-world experiment is composed by the following:
\begin{itemize}
	\item the agent decides to pull arm $I_t$;
	\item a random point $x_t$ of the IMDB dataset is selected and supplied to the classification algorithm associated to arm $I_t$;
	\item the \quotes{base} algorithm classifies the sample, \ie~it provides the prediction $\hat{y}_t \in \{0,1\}$ for the selected sample $x_t$;
	\item the environment generates the reward comparing the prediction $\hat{y}_t$ to the target class $y_t$ using the following function $R_t = 1 - |y_t - \hat{y}_t|$;
	\item the base algorithm is updated using $(x_t, y_t)$;
\end{itemize}
Since the base algorithms are trained only if their arm is selected, this is a problem which belongs to the rested scenario.

For the classification task we decided to employ:
\begin{itemize}
	\item $2$ Online Logistic Regression (LR) methods with different schemes used for the learning rate $\lambda_t$;
	\item $5$ Neural Networks (NNs) different in terms of shape and number of neurons
\end{itemize}
Specifically, we adopt a decreasing scheme for the learning rate of $\lambda_t = \frac{\beta}{t}$ (denoted with \texttt{LR}$(t)$ from now on) and a constant learning rate $\lambda_t = \beta$ (denoted as \texttt{LR} from now on).
Moreover, the NNs use as activation functions the rectified linear unit, \ie $relu(x) = \max(0,x)$, a constant learning rate $\alpha = 0.001$ and the \quotes{adam} stochastic gradient optimizer for fitting.
Two of the chosen nets have only one hidden layer, with $1$ and $2$ neurons, respectively, the third net has $2$ hidden layer, with $2$ neurons each, and two nets have $3$ layers with $2,2,2$ and $1,1,2$ neurons, respectively.
We refer to a specific NN denoting in curve brackets the cardinalities of the layers, \eg the one having $2$ layer with $2$ neurons each is denoted by \texttt{NN}$(2,2)$.

We analyzed their global performance on the IMDB dataset by averaging $1,000$ independent runs in which each strategy is sequentially fed with all the available $50,000$ samples.
The goal was to determine, at each step, the value of the payoff $\mu_i(n)$. Figure~\ref{fig:imdb_learning_curves} provides the average learning curves of the selected algorithms.
As we expected, from a qualitative perspective, the average learning curves are increasing and concave, however, due to the limited number of simulations, Assumptions~\ref{ass:incr} and~\ref{ass:decrDeriv} are not globally satisfied.

We also perform an experiment using only \texttt{LR}$(t)$ and \texttt{LR} as arms.
Figure~\ref{fig:imdb_old} reports the result of a run of the MAB algorithms over the IMDB scenario.
The analogy between this result and the one of the $2$-arms synthetic rested bandit (Figure~\ref{fig:rested_2arms_regrets}) is clear, indeed \algrested outperforms the other baselines when the learning curves of the base algorithms at some point intersects one another. 

% IMDB FIGURES
\begin{figure*}
	\vspace{.25cm}
	\begin{minipage}{.48\textwidth}
		\centering
		\subfloat{\scalebox{1}{\includegraphics{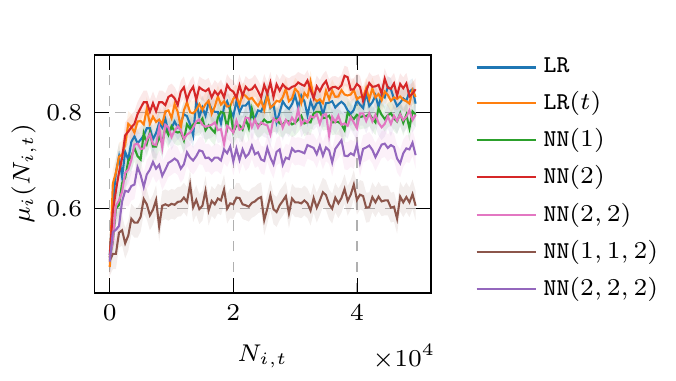} \label{fig:imdb_learning_curves}}}
		\captionof{figure}{Empirical learning curves of the classification algorithms (arms) of the IMDB experiment}
	\end{minipage}%
	\hfill
	\begin{minipage}{.48\textwidth}
		\centering
		\subfloat{\scalebox{1}{\includegraphics{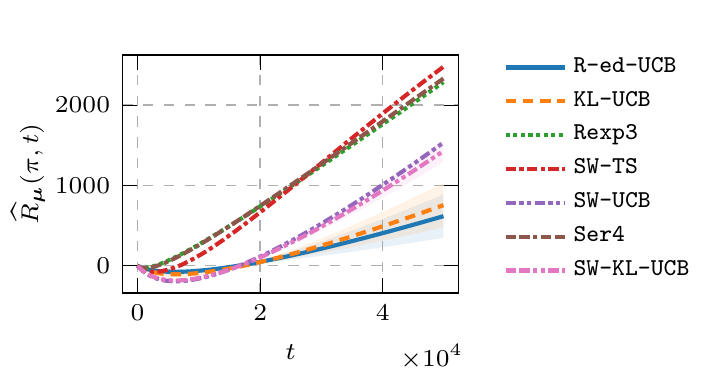} \label{fig:imdb_old}}}
		\captionof{figure}{Cumulative regret in a $2$-arms online model selection on IMDB dataset ($30$ runs, $95$\% c.i.).}
	\end{minipage}
	\end{figure*}

\subsection{Pulls of each arm}

Figure~\ref{fig:pulls} presents the average number of pulls for each arm for each one of the algorithm analysed in the synthetic experiments of Section~\ref{sec:experiments}.
Figure~\ref{fig:restless_15arms_pulls} shows how \algrestless{} is able to identify and discard the majority of the suboptimal arms using a few pulls, and it is second only to \texttt{SW-TS}, which seems to commit to a single arm which turns out to be the optimal one (arm $13$).
Figure~\ref{fig:rested_15arms_pulls} shows that \algrested{} explored arms $13$ and $1$ more than the others, which are respectively the best and the second-best, and most likely needs a longer time horizon to select which one is the best among the twos.
Figure~\ref{fig:rested_2arms_pulls} highlights the fact that \algrested{} undoubtedly identified which arm is the best (arm $2$), while \texttt{KL-UCB}, \texttt{SW-UCB}, \texttt{SW-KL-UCB} do not identify the best arm. \texttt{Ser4}, \texttt{Rexp3} and \texttt{SW-TS} pulled the best arm slightly more than $50\%$ of the times, paying the already discussed initial learning phase.

% PULLS FIGURES
\begin{figure}
	\centering
	\subfloat{\scalebox{0.75}{\includegraphics{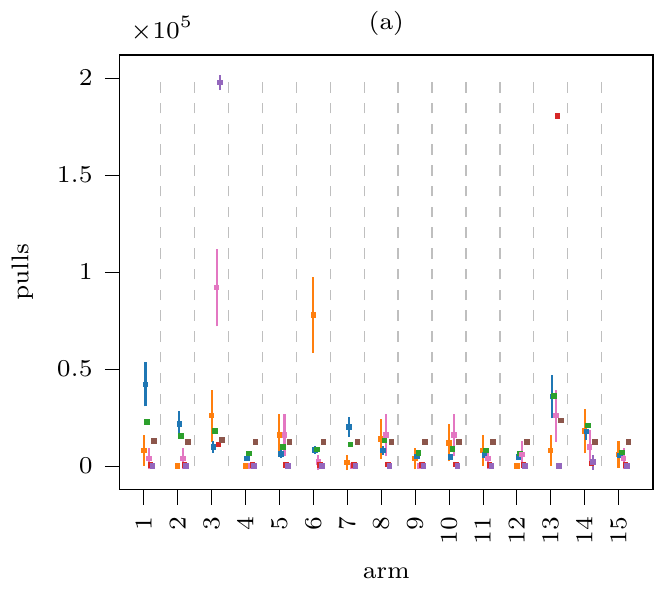} \label{fig:restless_15arms_pulls}}}
	\hfill
	\subfloat{\scalebox{0.75}{\includegraphics{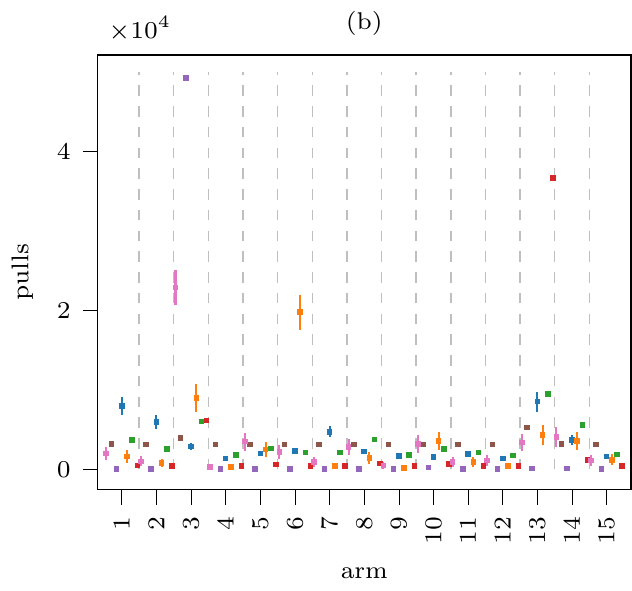} \label{fig:rested_15arms_pulls}}}
	\hfill
	\subfloat{\scalebox{0.75}{\includegraphics{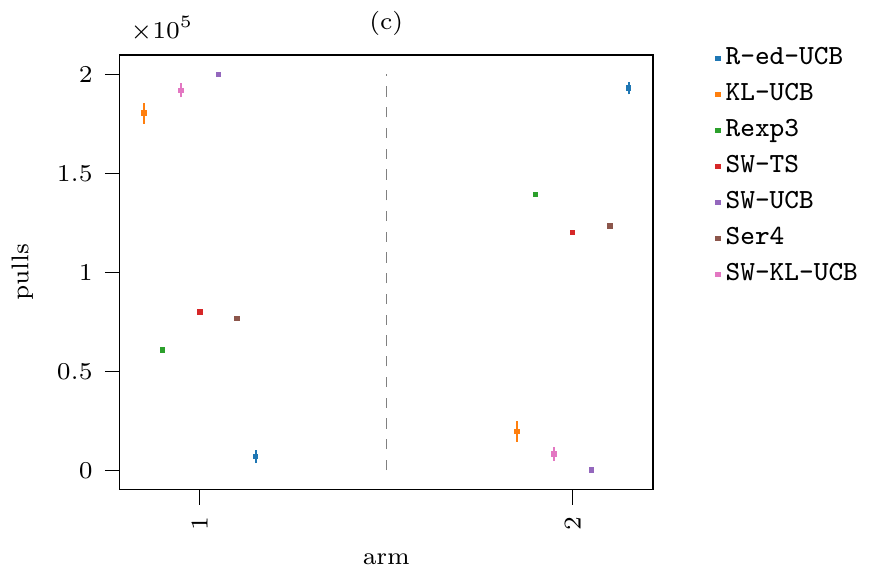} \label{fig:rested_2arms_pulls}}}
	\caption{Average number of pulls: (a)~$15$ arms \rless, (b)~$15$ arms \red, (c)~$2$ arms \red.} \label{fig:pulls}
\end{figure}

\subsection{Additional Experimental Results}

We evaluated the performance of the algorithms over $50$ different bandits with $K \in \{2, \dots, 15\}$ randomly generated arms over a time horizon of $T = 200,000$ rounds.
We averaged the run of each algorithm on a single scenario over $10$ independent experiments and compared the expected value of the ranking of the considered algorithms in order to draw up a leaderboard: in every scenario we ranked the algorithms based on their empirical regret, giving the first placement to the one with the lowest value.
We report the summarized results of the rank of the algorithms averaged over the $50$ experiments in Table~\ref{tab:ranking}.
% In the restless case \algrestless is the one providing the best ranking ($1.90$ on average), confirming the results presented in the main paper.
In the rested case \algrested is among the worse ones ($4.98$ on average), however this is again due to the fact that on average the algorithm is not superior to the baselines, which conversely do not provide any theoretical guarantees in some specific settings (see the $2$ arm experiment).
In the restless case \algrestless achieves a worse-than-average performance, probably influenced by the characteristics of the randomly generated bandits.
Due to this unsatisfactory results, we propose a slight modification of the \algrestless upper bound as follows:
$$\widehat{\mu}^{\text{\rless},h}_i(t) \coloneq  \frac{1}{h} \sum\limits_{l=N_{i,t-1}-h+1}^{N_{i,t-1}} \bigg( R_{t_{i,l}} + \textcolor{vibrantRed}{( t - t_{i,l})} \frac{R_{t_{i,l}} - R_{t_{i,l-h}}}{\textcolor{vibrantRed}{t_{i,l}- t_{i,l-h}}} \bigg),$$
which we call \algrestless-\texttt{H} to denote it is an heuristic method, \ie not having theoretical results on the regret.
While the performance of the heuristic seems good in practice (it achieves the best overall result), its downside is that the theoretical guarantees on the regret will have to be reconsidered.

\begin{table}[t!]
	\centering
	\caption{Ranking of the algorithms ($50$ bandits, $10$ runs, $95\%$ c.i.~in brackets).}
	\begin{tabular}{l|c|c|c}
		\
		\bfseries{Algorithm} & \bfseries{Rested Setting Ranking} & \bfseries{Restless Setting Ranking} & \bfseries{Restless Setting Ranking Heuristic}\\
		\hline
		\texttt{R-ed-UCB} 		& $4.98 \ (0.34)$ & $-$ & $-$ \\
		\texttt{R-less-UCB} 	& $-$ & $5.14 \ (0.38)$ & $-$ \\
		\texttt{R-less-UCB-H} 	& $-$ & $-$ & $1.90 \ (0.30)$ \\
		\texttt{KL-UCB} 		& $2.56 \ (0.43)$ & $2.54 \ (0.34)$ & $2.46 \ (0.31)$ \\
		\texttt{Rexp3} 			& $5.10 \ (0.26)$ & $5.20 \ (0.26)$ & $6.08 \ (0.16)$ \\
		\texttt{SW-TS} 			& $2.84 \ (0.35)$ & $2.86 \ (0.39)$ & $4.76 \ (0.19)$ \\
		\texttt{SW-UCB} 		& $2.12 \ (0.44)$ & $2.58 \ (0.47)$ & $3.08 \ (0.30)$ \\
		\texttt{Ser4} 			& $6.84 \ (0.15)$ & $6.60 \ (0.28)$ & $6.66 \ (0.18)$ \\
		\texttt{SW-KL-UCB} 		& $3.56 \ (0.38)$ & $3.08 \ (0.45)$ & $3.06 \ (0.48)$ \\
	\end{tabular}
	\normalsize
	\label{tab:ranking}
\end{table}

\end{document}